\documentclass{article}

\usepackage{microtype}
\usepackage{graphicx}
\usepackage{booktabs} % for professional tables
\usepackage{longtable}

\usepackage[preprint]{neurips_2026}

\usepackage[hypertexnames=false]{hyperref}
\usepackage{amsmath}
\usepackage{amssymb}
\usepackage{mathtools}
\usepackage{amsthm}

\usepackage{microtype}
\usepackage{graphicx}
\usepackage{booktabs} % for professional tables

\usepackage{algorithm}
\usepackage{algpseudocode}
\usepackage{graphicx} 
\usepackage{todonotes}
\usepackage{xcolor}
\usepackage{dsfont}
\usepackage{nicefrac}
\usepackage{tabularx}
\usepackage{pifont}
\usepackage{enumitem}
\usepackage{multirow}
\usepackage{caption}
\usepackage{wrapfig}
\usepackage[export]{adjustbox}
\usepackage[list=true]{subcaption}
\usepackage{colortbl}
\definecolor{lightgray}{RGB}{242, 242, 242}
\usepackage{float}
\usepackage[capitalize,noabbrev]{cleveref}
\usepackage{siunitx}
\usepackage{xspace}
\DeclareRobustCommand{\ie}{i.e.,\@\xspace}
  
\DeclareRobustCommand{\eg}{e.g.,\@\xspace}

\usepackage{tikz}
\usepackage{tikz-cd}
\usetikzlibrary{shapes,backgrounds}
\usepackage{array}
\usepackage{enumitem}

\usepackage{amsthm}
\usepackage{thmtools, thm-restate}
\declaretheorem[numberwithin=section]{thm}
\declaretheorem[sibling=thm]{theorem}
\declaretheorem[sibling=thm]{lemma}
\declaretheorem[sibling=thm]{corollary}
\declaretheorem[numberwithin=section]{assumption}
\declaretheorem[]{challenged assumption}
\declaretheorem[]{definition}

\newcounter{relctr} % Counter for relations
\everydisplay\expandafter{\the\everydisplay\setcounter{relctr}{0}} % Reset every equation
\AtBeginDocument{} % Store original definition

\Crefname{figure}{Fig.}{Fig.}
\usepackage{amsfonts}[mathscr]
\usepackage{amssymb}
\usepackage{amsmath}

\DeclareMathOperator*{\argmax}{arg\,max}
\DeclareMathOperator*{\argmin}{arg\,min}
\newcommand{\X}{\mathcal{X}}
\newcommand{\R}{\mathbb{R}}

\newcommand{\mypar}[1]{\textbf{#1.}}

\usepackage[most]{tcolorbox}

\newcommand{\Z}{\mathcal{Z}}

\usepackage{shorthands}

\definecolor{insightcolor}{HTML}{2B2F36}
\newtcolorbox{takeawaybox}{
  enhanced,
  breakable,
  colback=white,                 % white inside
  colframe=insightcolor,       % violet border
  boxrule=0.9pt,
  arc=3mm,                       % corner radius
  left=10pt,right=10pt,top=8pt,bottom=8pt,
  title=Key Insight,
  fonttitle=\bfseries,
  coltitle=insightcolor,
  attach boxed title to top left={xshift=10pt,yshift=-2mm},
  boxed title style={
    colback=white,               % white title background
    colframe=insightcolor,
    boxrule=0.9pt,
    arc=2mm,
    left=6pt,right=6pt,top=2pt,bottom=2pt
  }
}

\usepackage{dsfont}
\usepackage{nicefrac}

\definecolor{myviolet}{rgb}{0.6, 0.4, 0.8}

\definecolor{pastelblueold}{RGB}{56,146,236}
\definecolor{pastelblue}{RGB}{43,115,187}
\definecolor{pastelgreen}{RGB}{63,159,95}

\definecolor{oxfordblue}{HTML}{163A70}
\definecolor{softbluebg}{HTML}{EAF3FF}
\definecolor{deepviolet}{HTML}{4B2E83}
\definecolor{coolgraybg}{HTML}{F3F4F7}

\hypersetup{
  colorlinks=true,
  citecolor=pastelgreen,
  linkcolor=pastelblue,    % internal references (equations, figures, tables)
  urlcolor=pastelblue      % URLs
}

\newcommand{\AlgNameLong}{Active Flow Expansion \xspace}
\newcommand{\AlgNameShort}{\textsc{\small{ActFlow}}\xspace}
\newcommand{\AlgNameDef}{Active Flow Expansion (\textsc{\small{ActFlow}})\xspace}

\newcommand{\AlgRecF}{\textsc{\small{Rec-F}}\xspace}
\newcommand{\AlgRecNF}{\textsc{\small{Rec-NF}}\xspace}

\title{Active Flow Expansion for Out-of-Distribution \\   Discovery: from Theory to Molecules}

\author{%
  Riccardo De Santi \\
  ETH Zurich \\
  ETH AI Center\\
  \texttt{rdesanti@ethz.ch} \\
  \And
   Bruce Lee \\
   ETH Zurich \\
   ETH AI Center\\
   \texttt{bruce.lee@ai.ethz.ch} \\
  \And
   Cristian Perez Jensen\\
   ETH Zurich \\
   \texttt{cjense@ethz.ch} \\
   \And
   Kimon Protopapas \\
   ETH Zurich \\
   \texttt{kprotopapas@ethz.ch} \\
   \And
   Sophia Tang \\
   University of Pennsylvania \\
   \texttt{sophtang@seas.upenn.edu} \\
   \And
   Cheng-Hao Liu \\
   Caltech \\
   FutureHouse \\
   \texttt{chl@caltech.edu} \\
   \And
   Pranam Chatterjee \\
   University of Pennsylvania \\
   \texttt{pranam@seas.upenn.edu} \\
   \And
   Yisong Yue \\
   Caltech \\
   \texttt{yyue@caltech.edu} \\
   \And
   Andreas Krause \\
   ETH Zurich \\
   ETH AI Center\\
   \texttt{krausea@ethz.ch} \\
}

\begin{document}
\maketitle

\begin{abstract}
\looseness -1 Standard flow and diffusion pre-training matches the distribution of available data (e.g., molecules), which often covers only a small fraction of the valid design space. In generative discovery, however, one aims to sample valid new-to-nature designs, assigned negligible probability under, and thus inaccessible to, standard models fitted to the observed data. To overcome this limitation, we depart from data distribution matching and view a generative model through its \emph{generable set}: the region it covers with non-negligible probability. This allows to introduce a new learning principle for \emph{out-of-distribution flow modeling}: enlarging a model’s generable set to increase coverage of the valid design space.  We propose {\em \AlgNameDef}, a continued pre-training method that employs verifier feedback to expand a pre-trained model over new valid regions by iteratively adapting to synthetic data generated through active exploration in the learned flow representation. Theoretically, we establish to our knowledge first-of-their-kind statistical learning guarantees for out-of-distribution flow modeling, analyzing generable set expansion as a local-to-global reachability process over a learned representation. Empirically, we assess \AlgNameShort with suitable out-of-distribution generative modeling metrics across small organic molecules, mid-sized drug-like molecules, therapeutic peptides, and protein sequence design tasks. Results show that \AlgNameShort expands valid coverage far beyond the region modeled by the initial pre-trained model, significantly outperforming widely adopted synthetic flow pre-training methods.
\end{abstract}

\begin{table*}[h!]
\centering
\vspace{-1mm}
\small
\setlength{\tabcolsep}{4.0pt}
\renewcommand{\arraystretch}{1.12}
\begin{tabular}{
l
S[table-format=3.2\uncert{00.00}]
S[table-format=3.2\uncert{0.00}]
S[table-format=3.2\uncert{00.00}]
S[table-format=1.2\uncert{0.00}]
S[table-format=3.2\uncert{00.00}]
S[table-format=2.2\uncert{00.00}]
}
\toprule
& \multicolumn{2}{c}{\textbf{GEOM-Drugs Molecules}} 
& \multicolumn{2}{c}{\textbf{Therapeutic Peptides}} 
& \multicolumn{2}{c}{\textbf{Protein Sequences}} \\
\cmidrule(lr){2-3} \cmidrule(lr){4-5} \cmidrule(lr){6-7}
\textbf{Method} 
& {\textbf{Coverage $\uparrow$}} & {\textbf{Diversity $\uparrow$}}
& {\textbf{Coverage $\uparrow$}} & {\textbf{Diversity $\uparrow$}}
& {\textbf{Coverage $\uparrow$}} & {\textbf{Diversity $\uparrow$}} \\
\midrule
Pre-trained 
& 35.89\uncert{2.05} & 255.03\uncert{1.09}
& 44.33\uncert{3.46} & 13.45\uncert{0.14}
& 66.50\uncert{5.63} & 12.87\uncert{0.63}
\\
\AlgRecNF 
& 44.67\uncert{2.36} & 267.10\uncert{2.88}
& 0.00\uncert{0.00} & 0.00\uncert{0.00} 
& 63.75\uncert{11.45} & 11.85\uncert{0.34}
\\
\AlgRecF 
& 89.33\uncert{11.56} & 284.30\uncert{2.62}
& 59.67\uncert{17.97} & 13.62\uncert{4.34}
& 49.50\uncert{9.60} & 11.67\uncert{0.40}
\\
\rowcolor{softbluebg}
\AlgNameShort 
& \bfseries 144.30\uncert{19.28} & \bfseries 303.10\uncert{5.71}
& \bfseries 358.33\uncert{95.45} & \bfseries 58.87\uncert{25.98}
& \bfseries 102.75\uncert{18.36} & \bfseries 42.14\uncert{10.85}
\\
\bottomrule
\end{tabular}
\caption{\AlgNameShort expands model coverage (\ie number of valid clusters) and diversity (\ie Vendi score~\citep{friedman2022vendi} estimated via valid samples) across domains from de novo 3D molecular design to protein sequence design, significantly outperforming widely adopted recursive self-generation baselines.}
\label{tab:front_summary}
\vspace{-2mm}
\end{table*}

\addtocontents{toc}{\protect\setcounter{tocdepth}{-1}}
\section{Introduction}\vspace{-0mm}

\begin{figure*}[t!]
    \centering
    \includegraphics[
        width=\textwidth,
        keepaspectratio,
        trim=0 190mm 0 00mm,
        clip
    ]{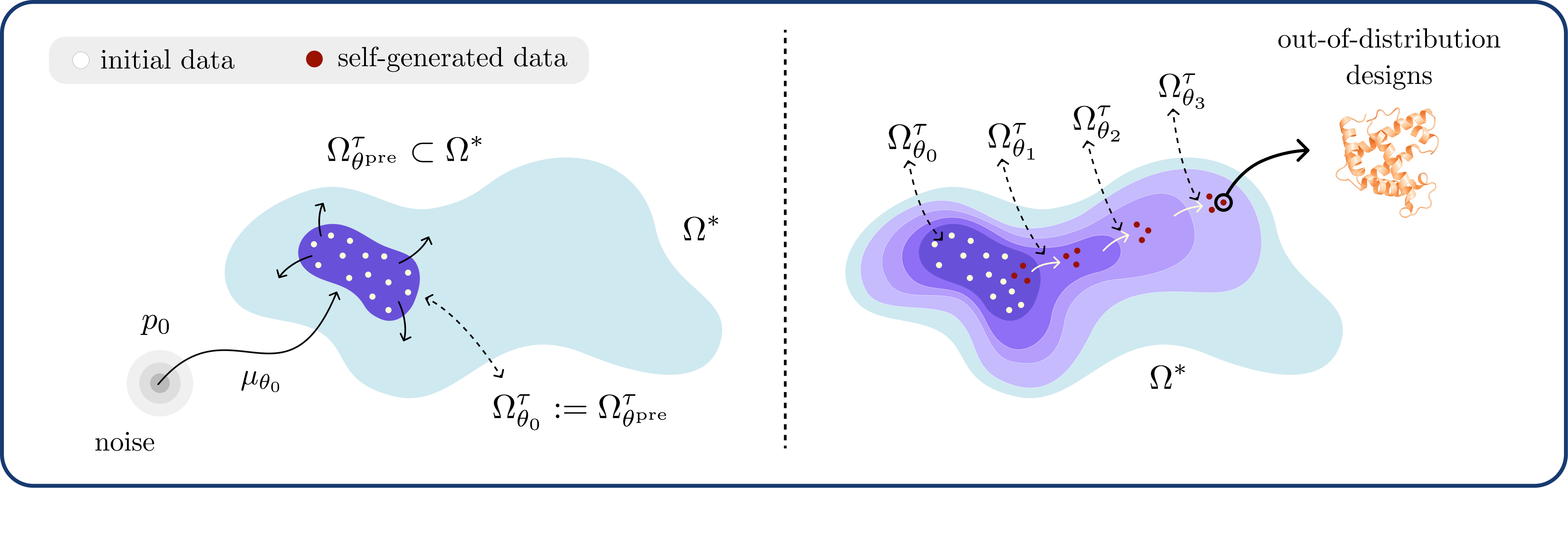}
    \vspace{-4mm}
    \caption{\looseness -1 (left) Pre-trained flow model $\mu_{\theta_0}$ generates with sufficient probability a set $\Omega_{\theta_0}^\tau$ poorly covering the valid design space $\Omega^*$, i.e., $\Omega_{\theta_0}^\tau \subset \Omega^*$. (right) \AlgNameDef increases model coverage over new valid regions of $\Omega^*$, enabling out-of-distribution flow modeling.}
    \label{fig:top_figure} \vspace{-4mm}
\end{figure*}

\looseness -1 Large-scale generative modeling has advanced rapidly in recent years, with flow~\citep{lipman2022flow, lipman2024flow} and diffusion models~\citep{sohl2015deep, song2019generative, ho2020denoising} emerging as powerful approaches for generating high-fidelity samples across domains including chemistry~\citep{hoogeboom2022equivariant}, biology~\citep{corso2022diffdock}, and robotics~\citep{chi2025diffusion}. Alongside this progress, over the last years, a growing literature has introduced test-time reward-tilting techniques~\citep[\eg][]{uehara2025inference, jensen2026value, uehara2024understanding, santi2025flow}. These advances have expanded the use of generative models in scientific discovery, enabling applications such as molecular property optimization~\citep[][]{gutjahr2025constrained}, or functional enzyme design~\cite{rector2026general}.

\vspace{-0.5mm}
\mypar{A Core Limitation of Standard Flow Pre-training for Discovery}
\looseness -1 Despite recent progress, generative discovery remains largely constrained by standard pre-training. Flow and diffusion models are learned by matching the distribution of available data~\citep[][]{song2020score, lipman2024flow}, which typically covers only a limited portion of the valid design space. This objective is natural when the goal is to reproduce observed examples, \eg in robotics for manufacturing, where behavioral cloning is often sufficient~\citep[][]{pearce2023imitating}, but it is poorly suited to out-of-distribution discovery, where one typically seeks valid new-to-nature designs with negligible probability under the data distribution, and arbitrarily far from available data acquired via nature's evolution and prior discoveries. In other words, the goal of out-of-distribution discovery is fundamentally opposed to standard generative modeling as distribution matching. Nonetheless, pre-trained generative models remain, to date, arguably the most promising way to access the complex design spaces in which scientific discovery takes place. This raises the central question of this work:\vspace{-2mm}
\begin{center}
\emph{How can we adapt a pre-trained flow or diffusion model in a task-agnostic way to enable out-of-distribution discovery?}
\end{center}
\vspace{-2mm}
\looseness -1 Answering this question would advance the algorithmic-theoretical foundations of generative discovery, with direct implications for high-impact scientific discovery applications. We take a step toward this goal by moving beyond data distribution matching and \emph{rethinking flow model learning} for out-of-distribution generative modeling. Concretely, we make the following contributions.\vspace{-0mm}

\mypar{Our contributions}\vspace{-1mm}
\begin{itemize}[noitemsep,topsep=0pt,parsep=5pt,leftmargin=16pt]
    \item \looseness -1 We introduce the notion of \emph{generable set}: the region of design space a model can access with non-negligible probability. This yields a rigorous mathematical framework for \emph{out-of-distribution flow modeling}, and a new learning principle for continued pre-training: expanding a model generable set to cover a larger portion of the valid design space, which we name \emph{generable set expansion}.\vspace{-0mm}
    \item We propose \AlgNameDef, a continued pre-training method that expands a pre-trained flow model through recurrent adaptation on self-generated synthetic data --  as illustrated in Fig. \ref{fig:top_figure} (right). \AlgNameShort actively explores in the \emph{learned diffusion representation} at intermediate noise levels, where the geometry is more favorable to global exploration.\vspace{-0mm}
    \item \looseness -1 We establish, to our knowledge, first-of-their-kind statistical learning guarantees for out-of-distribution flow modeling. These rely on an energy-based modeling abstraction, and analyze active generable set expansion as a local-to-global reachability process in a learned representation.\vspace{-0mm}
    \item \looseness -1 We employ domain-agnostic metrics for out-of-distribution generative modeling and assess \AlgNameShort across small organic molecules, mid-sized drug-like molecules, therapeutic peptides, and protein design. \AlgNameShort significantly increases valid coverage beyond the initial pre-trained model, vastly outperforming widely used recursive data generation baselines. We further show that the same algorithmic principles and empirical gains extend to discrete (diffusion) models. \vspace{-0mm}
\end{itemize}

\vspace{-0mm}
\section{From Data Distribution Matching to Out-of-Distribution Flow Modeling}\vspace{-0mm}
\label{sec:problem}
\looseness -1 We consider a design space $\X \subseteq \R^d$, where each $x \in \X$ is a candidate design, such as a molecule or protein. Let $p_{data}$ denote the distribution of the available data, assumed to consist of i.i.d.\ samples that satisfy the domain-specific notion of validity, e.g., valid molecules. A continuous-time flow model is defined by a time-dependent velocity field
\(u_\theta : \X \times [0,1] \to \X\) and the ordinary differential equation\vspace{-1mm}
\begin{equation}
\frac{d x_t}{dt} = u_\theta(x_t,t),
\qquad x_0 \sim p_0, 
\label{eq:flow_ode}
\end{equation}
where $p_0$ is a simple base distribution, typically Gaussian~\citep[][]{lipman2024flow}. Let \smash{$p_{t}^\theta$} denote the marginal density induced at time $t$ by the flow, and let \smash{$p_1^\theta$} denote its terminal density. In standard flow matching pre-training, one specifies conditional interpolation paths $p_t(\cdot \mid x_0,x_1)$ between $x_0 \sim p_0$ and $x_1 \sim p_{data}$, with associated target velocity field $u_t^\star(\cdot \mid x_0,x_1)$, and learns $u_\theta$ by minimizing the loss:
\begin{equation}
\mathbb{E}\!\left[\|u_\theta(x_t,t)-u_t^\star(x_t \mid x_0,x_1)\|_2^2\right],
\qquad
t \sim U[0,1],\;
x_t \sim p_t(\cdot \mid x_0,x_1).
\label{eq:fm_obj}
\end{equation}
\looseness -1 This vector-field regression objective allows to match the terminal distribution \smash{$p_1^\theta$} to the available-data distribution $p_{data}$, as by Eq.~\ref{eq:data_distribution_matching}. Beyond flow matching, the same distribution-matching learning principle underlies also standard pre-training of continuous~\citep{song2020score} and discrete~\citep{lou2023discrete} diffusion models.
\vspace{-0mm}
\begin{tcolorbox}[colback=softbluebg, colframe=oxfordblue, top=2pt,left=2pt,right=2pt,bottom=2pt]
\begin{center}
\textbf{Distribution Matching: Standard Flow Model Pre-Training}
\begin{align}
    \text{learn } \theta \text{ such that } p_1^\theta \approx p_{data}.
    \label{eq:data_distribution_matching}
\end{align}
\end{center}
\end{tcolorbox}
\vspace{-0mm}

\vspace{-0mm}
\subsection{Generable Set Expansion: Flow Modeling Beyond Data Distribution Matching} \vspace{-0mm}
In this work, we depart from viewing generative models as data distribution approximators and instead cast them as \emph{valid design space approximators}. Let $v:\X \to \{0,1\}$ denote whether a design is valid according to the  domain one wishes to model, e.g., $v$ might express molecular physiochemical validity, protein sequence foldability, etc. We indicate via $\Omega^*$ the entire valid design space:\vspace{-0mm}
\begin{align}
    \Omega^* \coloneqq \{x \in \X : v(x)=1\},
\end{align}
\looseness -1 which we wish our generative model to cover well. To capture the data regions sufficiently covered by a pre-trained flow model $\mu_\theta$, we introduce the following notion of \emph{generable set}.
\vspace{-0mm}
\begin{tcolorbox}[colback=softbluebg, colframe=oxfordblue, top=2pt,left=2pt,right=2pt,bottom=2pt]
\begin{restatable}[Generable Set]{definition}{defGenerableSet}
\label{definition:generable_set}
Let $\mu_\theta$ be a flow model inducing terminal density $p^\theta_1$ on $\X$. For $\tau > 0$, we define its $\tau$-level \emph{generable set} as
\begin{align}
    \Omega^\tau_{\theta} \coloneqq \{x \in \X \mid p^\theta_1(x) \geq \tau\}.
\end{align}
\end{restatable}
\end{tcolorbox}
\vspace{-0mm}
\looseness -1 For $\tau \to 0$ the generable set approximates the model support, \ie $\Omega^0_{\theta} \coloneqq \mathrm{Supp}(p_1^\theta)$. However, the support can include regions assigned only infinitesimal, yet positive probability, and therefore may substantially overstate the model's effective coverage. Instead, for sufficiently small $\tau>0$, the generable set $\Omega^\tau_{\pi}$ captures the region likely to be sampled under a finite budget. 
With this notion of generable set, we can state the following limitation of standard pre-trained flows for out-of-distribution discovery \footnote{For clarity, here we consider a model generating only valid designs. In general $\Omega^\tau_{\theta^{\mathrm{pre}}} \cap  \left(\X \backslash \Omega^*\right)$ is non-empty (Apx. \ref{sec:imperfect_model_discussion_apx})}: 
\vspace{-0mm}
\begin{tcolorbox}[colframe=white!, top=2pt,left=2pt,right=2pt,bottom=2pt]
\begin{center}
\textbf{Central Limitation of Standard Flow Pre-training}
\begin{align}
    \Omega^\tau_{\theta^{\mathrm{pre}}} \subset \Omega^*
\qquad 
\text{with } \qquad \mathrm{Vol}(\Omega^\tau_{\theta^{\mathrm{pre}}}) \ll \mathrm{Vol}(\Omega^*).
    \label{eq:representational_limit}
\end{align}
\end{center}
\end{tcolorbox}
\vspace{-0mm}

Under exact distribution matching, Eq.~\eqref{eq:representational_limit} allows to define the valid out-of-distribution (OOD) region:\vspace{-0mm}
\begin{equation}
\text{Valid OOD region:} \quad \bar{\Omega}^\tau_{\theta^{\mathrm{pre}}} \coloneqq \Omega^* \setminus \Omega^\tau_{\theta^{\mathrm{pre}}}
\end{equation}
\looseness -1 This is the set of valid designs not reliably covered by the pre-trained model $\mu_{\theta_{\mathrm{pre}}}$ and typically not represented in $p_{\mathrm{data}}$, such as new-to-nature designs. To overcome this limitation, we introduce \emph{generable set expansion} (see Eq. \ref{eq:valid_design_space_approximation}), a new learning principle for continued pre-training: given a standard pre-trained flow model, expand its generable set to cover a larger portion of the valid design space \(\Omega^*\).

\vspace{0mm}
\begin{tcolorbox}[colback=softbluebg, colframe=oxfordblue, top=2pt,left=2pt,right=2pt,bottom=2pt]
\begin{center}
\textbf{Generable Set Expansion: Out-of-Distribution Flow Modeling}
\begin{align}
    \text{learn } \theta \text{ such that } \Omega^\tau_{\theta} \approx \Omega^*.
    \label{eq:valid_design_space_approximation}
\end{align}
\end{center}
\end{tcolorbox}
\vspace{-0mm}

\vspace{-0mm}
\subsection{Generable Set Expansion via Iterative Continued Pre-training}
\vspace{-0mm}

\looseness -1 Given a pre-trained flow model $\mu_{\theta^{\mathrm{pre}}}$, we approach
generable set expansion through continued pre-training on self-generated data.
Beyond the data distribution, validity must be obtained through black-box verifier
queries, e.g., molecular validity~\citep{de2026verifier, guo2024takes} or protein
foldability~\citep{jumper2021highly, watson2023novo}. Since such queries can only
be made on generable designs, expansion is intrinsically recursive, as shown in Fig. \ref{fig:top_figure}:
\[
    \theta^{\mathrm{pre}}=\theta_0\to\theta_1\to\cdots\to\theta_T,
    \qquad
    \Omega^\tau_{\theta_0}\subseteq\Omega^\tau_{\theta_1}
    \subseteq\cdots\subseteq\Omega^\tau_{\theta_T}\approx\Omega^\star .
\]
\looseness - 1 The central algorithmic question is therefore how to self-generate data that expands the model’s generable set, rather than merely reinforcing regions already assigned high density. Standard sampling is poorly suited to this goal: it preferentially draws from dominant modes, and may therefore neglect valid frontier regions that enable valid expansion. We report an illustrative Gaussian analysis in Apx.~\ref{sec:warm_up_gaussian_analysis} to isolate this failure mode in a minimal setting. The analysis shows that, even when expansion-enabling samples remain within the current generable set, passive self-generation can be exponentially unlikely to sample them when valid directions of expansion are sparse. This highlights the core algorithmic requirement: self-generation must steer sampling toward generable and valid \emph{frontier} regions, rather than merely reinforce dominant modes. In the next section, we introduce \AlgNameShort, which implements this principle and yields provable generable set expansion guarantees.

\vspace{-0mm}
\section{Algorithm: \AlgNameLong}\vspace{-0mm}
\label{sec:algorithm}
\looseness -1 We now introduce \AlgNameDef\ (Alg.~\ref{alg:main_algorithm}), a continued pre-training method for \emph{generable set expansion}: it expands a pre-trained flow model's generable set to cover a larger portion of the valid design space. Concretely, \AlgNameShort fine-tunes the model on self-generated data acquired via inference-time active exploration in a learned flow representation at intermediate noising levels.

\vspace{-0mm}
\paragraph{High-level Algorithm Summary.}
\looseness -1 At round $t$, \AlgNameShort fits a verifier uncertainty estimate $\sigma_t(\cdot)$ from a buffer $\mathcal D_t=\{(x_i,y_i)\}_{i=1}^t$, self-generates a new candidate, or a batch, $x_{t+1}$ via Eq.~\eqref{eq:actflow_kl_control}, queries the verifier to obtain $y_{t+1}=\tilde v(x_{t+1})$, appends $(x_{t+1},y_{t+1})$ to $\mathcal D_t$, and updates the flow into $\mu_{\theta_{t+1}}$.

\vspace{-0mm}
\paragraph{Active Exploration over a Noised Flow Representation.}

\looseness -1 \AlgNameShort performs active self-generation in the learned representation $\Z_s$, given by hidden features of the velocity network $\mu_\theta$ at noising level $s\in(0,1)$. At each iteration, Algorithm  \ref{alg:main_algorithm} steers the current model at inference-time toward high verifier-uncertainty regions in $\Z_s$, while remaining close to the current model density:\vspace{0mm}
\begin{tcolorbox}[colframe=white!, top=2pt,left=2pt,right=2pt,bottom=2pt]
\centering
\textbf{(Inference-Time) Active Exploration over Noised Flow Representation $\Z_s$}
\begin{equation}
\begin{aligned}
x_{t+1} \sim \tilde p_t
&\in
\argmax_q 
\mathbb{E}_{x \sim q} \big[\sigma_t \bigl(\phi^t_s(x)\bigr)\big]
-
\beta \,\mathrm{KL}\!\left(q \,\|\, p^{\theta_t}_1\right).
\end{aligned}
\label{eq:actflow_kl_control}
\end{equation}
\end{tcolorbox}
\vspace{-0mm}

\looseness -1 Here, $\phi_s^t:\X\to\Z_s$ denotes the representation extracted from the velocity network at noising level $s \in (0,1)$. The first term of Eq. \ref{eq:actflow_kl_control} favors informative queries about the verifier \(v\), and thus about the valid design space \(\Omega^*\). The KL term regularizes search toward the current generative prior, biasing exploration toward likely valid regions. The parameter $\beta$ controls the exploration--prior trade-off: as $\beta\to\infty$, Eq.~\eqref{eq:actflow_kl_control} recovers standard sampling, \smash{$x_{t+1}\sim p_1^{\theta_t}$}; as $\beta\to0$, it becomes pure uncertainty maximization, which may target remote regions where the model prior might no longer provide a useful validity bias.

To instantiate Eq.~\eqref{eq:actflow_kl_control}, we model verifier uncertainty directly in $\Z_s$. 
We then view the verifier labels as noisy observations $y_t=\tilde v(x_t)$  of an
unknown validity function over the representation space \(\Z_s\). To represent the uncertainty about this function, we use a linear kernel over the learned representation space: $k_{\phi_s^t}(x,x') := \langle \phi_s^t(x),\phi_s^t(x')\rangle$. 
Then, for a set of queried designs $x_1,\dots,x_t$, let $X_t:=(x_1,\dots,x_t)$, let $K_{t, \phi_{s,t}}\in\mathbb{R}^{t\times t}$ be the kernel matrix with entries $(K_t)_{ij}=k_{\phi_s^t}(x_i,x_j)$, and write $k_{\phi_s^t}(x,X_t):=\bigl(k_{\phi_s^t}(x,x_1),\dots,k_{\phi_s^t}(x,x_t)\bigr)$. We then express our uncertainty about the unknown verifier via the following closed-form expression:\vspace{-0mm}
\begin{equation}
\sigma_t^2(x)
=
k_{\phi_s^t}(x,x)-k_{\phi_s^t}(x,X_t)\bigl(K_{t,\phi_s^t}+\lambda I\bigr)^{-1}k_{\phi_s^t}(X_t,x),
\qquad
\sigma_t(x):=\sqrt{\sigma_t^2(x)}.
\end{equation}
\looseness -1 This corresponds to the posterior variance in Bayesian linear
regression under Gaussian observations. 

\newtcbox{\algokeybox}{
  on line,
  colback=softbluebg,
  colframe=softbluebg,
  boxrule=0.4pt,
  arc=2pt,
  left=5pt,right=5pt,top=2.5pt,bottom=2.5pt,
  boxsep=1pt
}

\begin{algorithm}[t]
\caption{\AlgNameDef}
\label{alg:main_algorithm}
\begin{algorithmic}[1]
\setlength{\baselineskip}{1.1\baselineskip}
\Require Flow model $\mu_{\theta_0}$, black-box verifier $
\tilde v$, representation time-step $s \in (0,1)$, iterations $T$
\State $\mathcal D_0 \gets \emptyset$
\For{$t=0,1,\dots,T-1$}
    \State Update surrogate uncertainty $\sigma_t$ from $\mathcal D_t$ 
    \State Self-generate:
    \Statex \mbox{}\hfill \algokeybox{$x_{t+1} \sim \tilde p_t
        \in
        \argmax_q 
        \mathbb{E}_{x \sim q}\big[\sigma_t \big(\phi^t_s(x)\big)\big]
        -
        \beta \,\mathrm{KL}\!\left(q \,\|\, p^{\theta_t}_1\right)$} \hfill\mbox{} \vspace{0.7mm}
    \State Query verifier: $y_{t+1}\gets \tilde v(x_{t+1})$
    \State $\mathcal D_{t+1} \gets \mathcal D_t\cup \{(x_{t+1}, y_{t+1})\}$
    \State $\theta_{t+1}\gets \textsc{UpdateFlow}(\theta_{t},\mathcal D_{t+1})$
\EndFor
\State \Return $\mu_{\theta_T}$
\end{algorithmic}
\end{algorithm}

\vspace{-0mm}
\paragraph{Model update.}
The uncertainty estimator is fit on the pairs in $\mathcal D_t$. The flow
model is then updated by replay-based continued pre-training. Let
\smash{$\mathcal D_t^+=\{x:(x,1)\in\mathcal D_t\}$} and
\smash{$\mathcal D_t^-=\{x:(x,0)\in\mathcal D_t\}$} denote accepted and rejected samples,
and let \smash{$U_t^\pm\subseteq\mathcal D_t^\pm$} be minibatches and \smash{$\widehat L_t^\pm(\theta)$} their respective standard flow-matching loss terms.
We use the signed update\smash{
$
g_t=\nabla\widehat L_t^+(\theta_t)-\alpha_t\nabla\widehat L_t^-(\theta_t), 
$}
where rejected samples are seen as an unlearning signal~\citep{alberti2025data};
see Apx.~\ref{sec:loss_appendix}. In practice, our results often hold with
$\alpha_t=0$, i.e., standard flow matching~\citep{lipman2024flow} on
verifier-accepted samples.

\looseness -1 Across iterations, \AlgNameShort reallocates model mass from dominant pre-trained modes toward newly discovered valid regions. It remains to show that this process truly expands the model’s generable set, rather than only redistributing density within it. The next section answers this question affirmatively by establishing statistical guarantees for out-of-distribution generative modeling via \AlgNameShort.

\vspace{-0mm}
\section{Statistical Guarantees for Out-of-Distribution Flow Modeling}\vspace{-0mm}
\label{sec:theory}
We now theoretically analyze \AlgNameDef, thereby providing a first guarantee for out-of-distribution
generative modeling in terms of generable set expansion rather than distribution matching of $p_{\text{data}}$. Formal theorem and detailed derivations are reported in Appendix.~\ref{app:ade analysis}. 

As a first step, since \AlgNameShort effectively performs active exploration in the learned representation $\Z_s$, we introduce the following notion of \emph{generable representation set} $\Omega^\tau_{\pi,\phi}$ for a fixed map $\phi \coloneqq \phi_s$.

\vspace{-0mm}
\begin{tcolorbox}[colback=softbluebg, colframe=oxfordblue, top=2pt,left=2pt,right=2pt,bottom=2pt]
\begin{restatable}[Generable Representation Set]{definition}{defGenerableRepresentationSet}
\label{definition:generable_representation_set}
Let $\phi:\X\to\Z$ be fixed, and let $\pi$ be a diffusion model inducing design-space density \smash{$p^\pi_1$} on $\X$.
Let \smash{$p^{\pi,\phi}_1$} denote the induced density on $\Z$ obtained by pushing forward $p^\pi_1$ through $\phi$.
For $\tau > 0$, we define its $\tau$-level \emph{generable representation set} as
\begin{align}
    \Omega^\tau_{\pi,\phi} \coloneqq \{ z \in \Z \mid p^{\pi,\phi}_1(z) \geq \tau \}.
\end{align}
\end{restatable}
\end{tcolorbox}
\vspace{-0mm}

\vspace{-0mm}
\paragraph{Formal feedback model: logistic bandit feedback}
\label{sec:theory:logistic_bandit}

We now formalize the verifier as a probabilistic model over $Z=\phi(X)$.
At each round $t$, \AlgNameShort generates a sample $z_t \coloneqq \phi(x_t)$ and observes a binary label $y_t\in\{0,1\}$,
generated according to a logistic model:
\begin{equation}
\Pr[y_t=1 \mid z_t] = s(g(z_t))
\label{eq:logistic_feedback}
\end{equation}
where $g:Z\to\mathbb{R}$ is an unknown latent \emph{validity score}, and $s: \R \to [0,1]$ is the sigmoid function.
We assume there exists a threshold $h\in[0,1]$, such that we can formally define the valid design space as:
\begin{equation}
\Omega_* := \{z\in Z : s(g(z)) \ge h\}.
\label{eq:prob_valid_set}
\end{equation}
\looseness -1 For the sake of analysis, we assume $g$ lies in an RKHS $\mathcal{H}_k$ with $\|g\|_k\le B$, $\int_{\calZ} \exp(g(z)) \leq \bar Z$,
and is $L_g$-Lipschitz in a metric $d$ over $\mathcal{Z}$. This model allows us to construct \emph{anytime confidence sequences} for logistic prediction~\citep{pasztor2024bandits}.  Next, we model \AlgNameShort's uncertainty-tilted sampling procedure.

\vspace{-0mm}
\paragraph{Uncertainty-tilted sampling oracle modeling}
We model the inference-time uncertainty sampling in Eq. \eqref{eq:actflow_kl_control} as approximately maximizing verifier
uncertainty within the current generable set $\Omega_t^\tau$ for some $\tau > 0$. Concretely, at round $t$ it returns $x_t$ such that $z_t = \phi(x_t)\in \Omega_t^\tau$
satisfies:\vspace{-0mm}
\begin{equation}
\sigma_t(z_t)\ \ge\ \frac{1}{\alpha}\max_{z\in \Omega^\tau_t} \sigma_t(z),
\qquad \alpha \ge 1.
\label{eq:uncertainty_oracle}
\end{equation}
We depart from standard bandit analyses, which assume global search over $\Omega^\star$ by instead restricting the (generative) sampler to approximate uncertainty maximization over the generable set. We suppose that the $\tau$-level generable set of the pre-trained model contains a non-empty set $S_0$ of valid designs. Further regions can be reached only after intermediate expansion steps. We formalize a local expansion step via the following \emph{one-step reachability operator over the learned representation}:
\begin{equation} 
    R_\epsilon(S)
    :=
    \left\{
    z \in \Z :
    \exists z' \in S
    \ \text{s.t.}\
    s(g(z')) - L_s L_g d(z,z') - \epsilon \ge h
    \right\}. \label{eq:one_step_reachability_main}
\end{equation}
Here, \(R_\epsilon(S)\) contains the representations whose validity is certifiable
from \(S\) up to accuracy \(\epsilon\). Let \(R_\epsilon^H(S_0)\) denote its
\(H\)-fold recursive application starting from \(S_0\). This enables viewing of
\AlgNameShort as executing a local-to-global expansion process toward the reachable valid
set \(R_\epsilon^H(S_0)\). 

Given these objects, we can now present the main theorem, stated formally in Theorem \ref{thm:formal_main_theorem}, which allows to state set-theoretic guarantees for out-of-distribution generative modeling.
\vspace{-0mm}
\paragraph{(Informal) Assumptions:}
\begin{itemize}[leftmargin=*, itemsep=1mm, topsep=1mm]\vspace{-0mm}
    \item \textbf{(Well-specified verifier).}
    Verifier labels follow a calibrated logistic model in the learned representation space,
    $\mathbb P(y=1\mid z)=s(g(z))$, where the latent validity score $g$ has bounded
    RKHS norm and is Lipschitz continuous in the learned representation.

    \item \textbf{(Approximate and local uncertainty sampling).}
    At each round, the sampler approximately maximizes verifier uncertainty over the
    current generable set $\Omega_t^\tau$, as by Eq. \eqref{eq:uncertainty_oracle}.

    \item \textbf{(EBM generative update).}
    \looseness -1 We employ an energy-based model (EBM) abstraction, detailed in Assump. \ref{asmp: EBM}, to abstract flow model endpoint density updates as learning an implicit energy functional over the flow-learned representation $Z$. This captures that flow models preserve high density on regions that our verifier regards as valid, and maintain low density on points far from this set.
\end{itemize}
\vspace{-0mm}
\begin{tcolorbox}[colback=softbluebg, colframe=oxfordblue, top=2pt,left=2pt,right=2pt,bottom=2pt]
\begin{theorem}[(Informal) Generable representation set covers reachable set]
\label{thm:main_reachability}
Fix $\epsilon>0$ and an integer $H\ge 1$.
Let $\tau$ and $T^*$ satisfy\vspace{-0mm}
\begin{align}
    \label{eq: sample complexity}
    \tau
    \leq
    \frac{h}
    {
       (1-h) \bar Z
    } \quad \mbox{  and  } \quad T^\star \ \gtrsim\
\left(\frac{\alpha \gamma^{\Omega_\star}_{H T^*}}
{\epsilon}\right)^2, 
\end{align}
with up to problem-dependent constants in $\gtrsim$. Define the maximum information gain from $t$ samples by  \vspace{-0mm}
\begin{align*}
\gamma_t^A
\triangleq
\max_{z_1,\ldots,z_t \in A}\;
\frac{1}{2}\log\det \Bigl(I_t + (\lambda\kappa)^{-1}K_t\Bigr)
\end{align*}
By running \AlgNameShort, it holds with probability at least $1-\delta$ that after $T^\star$ verified samples,  \vspace{-0mm}
\begin{equation}
R_\epsilon^H(S_0) \subseteq\ \Omega^\tau_{T^\star}.
\label{eq:main_inclusion}
\end{equation}
\end{theorem}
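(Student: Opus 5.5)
The argument is an induction over the reachability depth $H$, combining three ingredients: a confidence-sequence bound for logistic feedback, an uncertainty-shrinkage argument driven by the information gain $\gamma$, and the EBM abstraction (Assump.~\ref{asmp: EBM}), which turns certified validity into membership in the generable set.

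\textbf{Step 1: confidence and validity certification.} First I invoke the anytime logistic confidence sequences of \citet{pasztor2024bandits}. With probability at least $1-\delta$, simultaneously for all $t$ and $z$,
\[
|s(g(z)) - s(\hat g_t(z))| \le \beta_t(\delta)\,\sigma_t(z),
\]
where $\beta_t$ depends on $B,\kappa,\lambda$ and $\log(1/\delta)$. Next, under the EBM abstraction the terminal density over $\Z$ has the form $p_t(z)\propto \exp(\hat g_t(z))$, or a tilted version certified by the lower confidence bound. The normalizer is at most $\bar Z$, because $\int\exp(g)\le\bar Z$. Hence any $z$ with certified $s(g(z))\ge h$ has odds $e^{g(z)}\ge h/(1-h)$ and density at least $h/((1-h)\bar Z)\ge\tau$. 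This is exactly where the condition on $\tau$ in \eqref{eq: sample complexity} enters: every point certified valid by round $t$ belongs to $\Omega^\tau_t$. It also gives monotonicity $\Omega^\tau_t\subseteq\Omega^\tau_{t+1}$, provided certification is monotone, which I enforce by intersecting the confidence intervals over time.

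\textbf{Step 2: induction on depth.} I split the horizon into $H$ epochs of length $T^\star$ each, which is why the information gain is indexed by $HT^\star$. The inductive claim is that at the end of epoch $k$, $R^k_\epsilon(S_0)\subseteq\Omega^\tau_{kT^\star}$. The base case $k=0$ is the assumption $S_0\subseteq\Omega^\tau_{\theta_0}$.

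For the inductive step, fix $z\in R^{k+1}_\epsilon(S_0)$ with witness $z'\in R^k_\epsilon(S_0)$. By the inductive hypothesis, $z'$ lies in the generable set during all of epoch $k+1$. The oracle \eqref{eq:uncertainty_oracle} guarantees that
\[
\max_{w\in\Omega^\tau_t}\sigma_t(w)\le\alpha\,\sigma_t(z_t)
\]
throughout that epoch. Summing posterior standard deviations along queried points and applying Cauchy--Schwarz with the elliptical-potential bound gives
\[
\sum_t\sigma_t(z_t)\le\sqrt{T^\star\gamma^{\Omega_\star}_{HT^\star}}.
\]
Therefore some round $t$ in the epoch has
\[
\sigma_t(z')\le \alpha\sqrt{\gamma/T^\star}\lesssim\epsilon/\beta_t,
\]
using the choice of $T^\star$ and the fact that $\sigma_t$ is nonincreasing in $t$.

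\textbf{Step 3: Lipschitz transfer.} Once $\sigma_t(z')$ is small, the lower confidence bound gives $\mathrm{LCB}_t(z')\ge s(g(z'))-\epsilon$. Since $s$ is $L_s$-Lipschitz and $g$ is $L_g$-Lipschitz, $s(g(z))\ge s(g(z'))-L_sL_gd(z,z')$. The defining inequality of $R_\epsilon$ in \eqref{eq:one_step_reachability_main} then yields a certified lower bound of at least $h$ at $z$, not just at $z'$. To make this a certificate rather than a fact about the truth, I construct the LCB-based certified set as $\{z: \exists z',\ \mathrm{LCB}_t(z')-L_sL_gd(z,z')\ge h\}$. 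Step 1 then places $z$ in $\Omega^\tau_{(k+1)T^\star}$, which closes the induction. A union bound is unnecessary because the confidence sequence is anytime.

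\textbf{Main obstacle.} The delicate step is the interaction between the restricted, moving search domain and the uncertainty reduction. The oracle only controls the maximum of $\sigma_t$ over $\Omega^\tau_t$, and that set changes over time. I will handle this by using monotonicity of the generable sets together with the fact that $\sigma_t$ is nonincreasing in $t$. Then "$z'$ is in the set during the epoch" suffices to bound $\sigma_t(z')$ by the minimum over the epoch of $\alpha\sigma_t(z_t)$.

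The other fragile point is making the EBM assumption produce density $\ge\tau$ for certified points. This requires the normalizer bound, and I will take it directly from the formal Assump.~\ref{asmp: EBM}.
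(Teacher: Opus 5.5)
\textbf{There is a genuine gap: you never control where the queries live.} Your skeleton is the paper's: certified sets built from logistic confidence sequences, the EBM lower bound placing certified points in $\Omega^\tau_t$, averaging plus Cauchy--Schwarz plus log-det under the $\alpha$-approximate oracle, and a Lipschitz one-step expansion iterated $H$ times. The problem is that the oracle samples $z_t$ from $\Omega^\tau_t$, and under the EBM abstraction this set is \emph{not} contained in $\Omega_\star$. The energy is only forced below $\operatorname{logit}(\ell)$ at distance at least $\gamma$ from the certified set, so $\Omega^\tau_t$ can contain invalid points. Without further argument, it need not lie in any compact set at all.

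Both of your quantitative steps need such confinement:
\begin{itemize}
\item The logistic confidence sequence is valid only on a compact set $A$ that contains all queries. Its width $\beta^A_t$ grows with $\gamma^A_{t-1}$. It does not hold ``for all $z$''.
\item The bound $\sum_t\sigma_t(z_t)\le\sqrt{T\gamma^A_T}$ requires $z_t\in A$ and $k\le 1$ on $A$. This is what lets you pass from $\sigma^2$ to $\log(1+\sigma^2/(\lambda\kappa))$ and then to $\gamma^A$.
\end{itemize}
Writing $\gamma^{\Omega_\star}$ silently assumes $z_t\in\Omega_\star$, which is false in general. Moreover, the paper notes that the energy condition forces unbounded (e.g.\ polynomial) kernels on unbounded domains. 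So without confinement the information gain can be infinite.

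\textbf{How the paper closes this.} It uses a bounded-hallucination step that has no counterpart in your proposal. It combines four ingredients:
\begin{itemize}
\item The safety property $S_t\subseteq\Omega_*$. This relies on building $S_t$ recursively with witnesses $z'\in S_{t-1}$, so confidence bounds are only invoked at points already known to lie in $A$. Your certified set lets $z'$ range over all of $\mathcal{Z}$, where the LCB is not covered by the confidence event.
\item The \emph{upper} half of the EBM assumption.
\item The bound $Z_t\ge\underline{Z}=\mathrm{vol}(S_0)e^{\operatorname{logit}(h)}$.
\item A \emph{lower} bound $\tau\ge e^{\operatorname{logit}(\ell)}/\underline{Z}$. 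The informal statement suppresses this, but the formal theorem requires it.
\end{itemize}
Together these give $\Omega^\tau_t\subseteq\Omega_*\cup\Gamma(\gamma)$. That set is assumed compact with $k\le 1$ on it. Everything is then run with $A=\Omega_*\cup\Gamma(\gamma)$, and the complexity term is $\gamma_T^{\Omega_*\cup\Gamma(\gamma)}$, not $\gamma^{\Omega_\star}$. Your Step 1 uses only the upper bound on $\tau$ and the lower half of the EBM, so it cannot produce this confinement.

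\textbf{Smaller fixes.}
\begin{itemize}
\item The generable sets are not monotone: points of $\Omega^\tau_t\setminus S_t$ may lose density. Run the induction on certified sets instead, $R^k_\epsilon(S_0)\subseteq S_{kT}$. Then use $S_{kT}\subseteq S_t\subseteq\Omega^\tau_t$ for $t\ge kT$ to get $\sigma_t(z')\le\alpha\,\sigma_t(z_t)$.
\item The density lower bound must come from the model energy satisfying $\mu'_t\ge\operatorname{logit}(h)$ on the certified set together with $\int e^{\mu'_t}\le\bar{Z}$. The bound $\int e^{g}\le\bar{Z}$ on the true score says nothing about the model's normalizer.
\item In the paper $T^\star=HT$ is the total budget, and the sample-complexity condition is on the epoch length $T$. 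Your epochs of length $T^\star$ conclude at time $HT^\star$, not $T^\star$.
\end{itemize}
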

\end{tcolorbox}
\vspace{-0mm}

Theorem~\ref{thm:main_reachability} is stated in the learned representation space $Z$. Under a  map $\phi$ with measure-preserving regularity conditions stated in Assumption \ref{ass:fixed_phi_jacobian}, the same reachability guarantee transfers to the design-space generable set. For $\tau_X>0$, define the design-space generable set at round $t$ by
\begin{equation}
\Omega^{X,\tau_X}_t := \{x\in\X:\; p^{\pi_t}_1(x)\ge \tau_X\}.
\label{eq:design_space_generable_set_cor}
\end{equation}
We also define the induced one-step reachability operator on $\X$ by
\begin{equation}
R^{X,\phi}_\epsilon(A)
:=
\Bigl\{
x\in \X:\exists x'\in A
\ \text{s.t.}\
s \bigl(g(\phi(x'))\bigr)-L_sL_g\,d \bigl(\phi(x),\phi(x')\bigr)-\epsilon \ge h
\Bigr\}, \label{eq:design_space_reachability_operator}
\end{equation}
for any $A\subseteq \X$, and let $\bigl(R^{X,\phi}_\epsilon\bigr)^H(A)$ denote its $H$-fold iterate.

\vspace{-0mm}
\begin{tcolorbox}[colback=softbluebg, colframe=oxfordblue, top=2pt,left=2pt,right=2pt,bottom=2pt]
\begin{restatable}[Design-space coverage of the induced reachable valid set]{corollary}{corDesignSpaceReachability}
\label{cor:design_space_reachability}
Assume the conditions of Theorem~\ref{thm:main_reachability} and Assumption~\ref{ass:fixed_phi_jacobian}, with $j_{\min}:=\inf_{x\in \X}|\det J_\phi(x)|>0$.
Let $S_0^X:=\phi^{-1}(S_0)$ and $\tau_X:=j_{\min}\tau$. Then, after the same number $T^\star$ of verified samples as in Theorem~\ref{thm:main_reachability}, with probability at least $1-\delta$,
\begin{equation}
\bigl(R^{X,\phi}_\epsilon\bigr)^H(S_0^X)\subseteq \Omega^{X,\tau_X}_{T^\star}. \label{eq:design_space_reachability_inclusion}
\end{equation}
\end{restatable}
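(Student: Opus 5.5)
The plan is to reduce the corollary to Theorem~\ref{thm:main_reachability} through two transfer facts. First, reachable sets in $\X$ are exactly preimages of reachable sets in $\Z$. Second, the generable sets in $\Z$ and in $\X$ are related by the change-of-variables formula. With these in hand, the inclusion in Eq.~\eqref{eq:design_space_reachability_inclusion} follows directly from Eq.~\eqref{eq:main_inclusion}, on the same probability-$(1-\delta)$ event and after the same $T^\star$ samples.

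\textbf{Step 1 (reachability commutes with preimages).} We show by induction on $H$ that
\[
\bigl(R^{X,\phi}_\epsilon\bigr)^H(\phi^{-1}(S))=\phi^{-1}\bigl(R^H_\epsilon(S)\bigr)
\]
for every $S\subseteq\Z$, or at least the inclusion "$\subseteq$", which is all we need. The base step compares Eq.~\eqref{eq:design_space_reachability_operator} with Eq.~\eqref{eq:one_step_reachability_main}. For "$\subseteq$": if $x\in R^{X,\phi}_\epsilon(\phi^{-1}(S))$, its witness $x'$ satisfies $\phi(x')\in S$. The certifying inequality reads $s(g(z'))-L_sL_g d(z,z')-\epsilon\ge h$ with $z=\phi(x)$ and $z'=\phi(x')$, so $\phi(x)\in R_\epsilon(S)$. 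For "$\supseteq$": a witness $z'\in S$ for $\phi(x)$ lifts to some $x'\in\phi^{-1}(S)$ provided $z'\in\phi(\X)$. This holds when $S\subseteq\phi(\X)$, which we may assume after intersecting with the image; otherwise we simply keep only "$\subseteq$". The induction step applies the base step to $R^{H-1}_\epsilon(S)$. With $S=S_0$ and $S_0^X=\phi^{-1}(S_0)$, this gives $\bigl(R^{X,\phi}_\epsilon\bigr)^H(S_0^X)\subseteq\phi^{-1}(R^H_\epsilon(S_0))$.

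\textbf{Step 2 (density transfer).} Let $p^{\pi_t,\phi}_1$ be the pushforward of $p^{\pi_t}_1$ through $\phi$. We use Assumption~\ref{ass:fixed_phi_jacobian}, which we expect to make $\phi$ a $C^1$ diffeomorphism onto its image, or at least injective with a well-defined Jacobian. The change-of-variables formula then gives
\[
p^{\pi_t}_1(x)=p^{\pi_t,\phi}_1(\phi(x))\,|\det J_\phi(x)|\ \ge\ j_{\min}\,p^{\pi_t,\phi}_1(\phi(x)).
\]
Hence if $\phi(x)\in\Omega^\tau_t$, meaning $p^{\pi_t,\phi}_1(\phi(x))\ge\tau$, then $p^{\pi_t}_1(x)\ge j_{\min}\tau=\tau_X$. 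This shows $\phi^{-1}(\Omega^\tau_t)\subseteq\Omega^{X,\tau_X}_t$.

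\textbf{Step 3 (combine).} On the event of Theorem~\ref{thm:main_reachability} we have $R^H_\epsilon(S_0)\subseteq\Omega^\tau_{T^\star}$, and taking preimages preserves inclusions. Chaining Steps 1 and 2 gives
\[
\bigl(R^{X,\phi}_\epsilon\bigr)^H(S_0^X)\subseteq\phi^{-1}(R^H_\epsilon(S_0))\subseteq\phi^{-1}(\Omega^\tau_{T^\star})\subseteq\Omega^{X,\tau_X}_{T^\star},
\]
as claimed.

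The main obstacle is Step 2. The change-of-variables identity needs $\phi$ to be injective, or at least requires controlling the sum over preimages, and it needs $\dim\Z=\dim\X$ so that $\det J_\phi$ is meaningful. The direction of the inequality is favorable here. If $\phi$ is only locally injective, the pushforward density at $z$ is a sum over the finitely many preimages $x_i$ of $p^{\pi_t}_1(x_i)/|\det J_\phi(x_i)|$, and a bound on a sum does not bound each individual term. So I would first check that Assumption~\ref{ass:fixed_phi_jacobian} indeed grants injectivity. If it does not, I would add a bounded-multiplicity constant $m$ and set $\tau_X=j_{\min}\tau/m$.
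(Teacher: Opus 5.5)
Your proof is correct and is essentially the paper's argument: the paper proves the intertwining identity $\phi\bigl(R^{X,\phi}_\epsilon(A)\bigr)=R_\epsilon\bigl(\phi(A)\bigr)$, which is your Step~1 written with images instead of preimages, iterates it to place $\phi(x)$ in $R_\epsilon^H(S_0)\subseteq\Omega^\tau_{T^\star}$ on the theorem's event, and finishes with the same change-of-variables bound $p^{\pi_{T^\star}}_1(x)=p'_{T^\star}(\phi(x))\,|\det J\phi(x)|\ge j_{\min}\tau$. Assumption~\ref{ass:fixed_phi_jacobian} does state that $\phi$ is a $C^1$-diffeomorphism onto $\Z=\phi(\X)$, so your hedge is unnecessary---which is just as well, since your multiplicity fallback $\tau_X=j_{\min}\tau/m$ would not rescue the claim: a lower bound on the sum over preimages only controls the largest term, while $(R^{X,\phi}_\epsilon)^H(S_0^X)$ contains every preimage.
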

\end{tcolorbox}

% --------
\vspace{-3mm}
\paragraph{From reachable-set to full coverage.}
The guarantee is reachability-based: if $\Omega^\star$ contains components
that are disconnected from $S_0^X$ in the learned geometry, local expansion provides no
guarantee of covering those components. Conversely, if every $x\in\Omega^\star$ can be
connected to $S_0^X$ by at most $H$ local valid expansions in representation space, then
Corollary~\ref{corollary:full_coverage_chain} yields full valid-space coverage,\vspace{-1mm}
\[
\Omega^\star \subseteq \Omega_{T^\star}^{X,\tau_X}.
\]
\looseness -1 Theorem~\ref{thm:main_reachability} shows that \AlgNameShort expands the
model's $\tau$-generable set for any sufficiently small $\tau$, effectively describing finite-sample coverage. Corollary~\ref{corollary:validity_lower_bound}
controls model validity, \ie how tight the expanded generable set is w.r.t.~$\Omega^\star$ -- the opposite set-inequality for \emph{generable set expansion}.

\vspace{-0mm}
\section{Experimental Evaluation of ActFlow on Molecules, Peptides, and Proteins} \vspace{-0mm}
\label{sec:experiments}
\begingroup
  \captionsetup[subfigure]{aboveskip=1.7pt, belowskip=0pt}
\newlength{\imgw}
\setlength{\imgw}{0.21\textwidth}
\begin{figure*}[t]
    \centering
    % row 1
    \begin{subfigure}{\imgw}
      \centering
      \includegraphics[width=\textwidth]{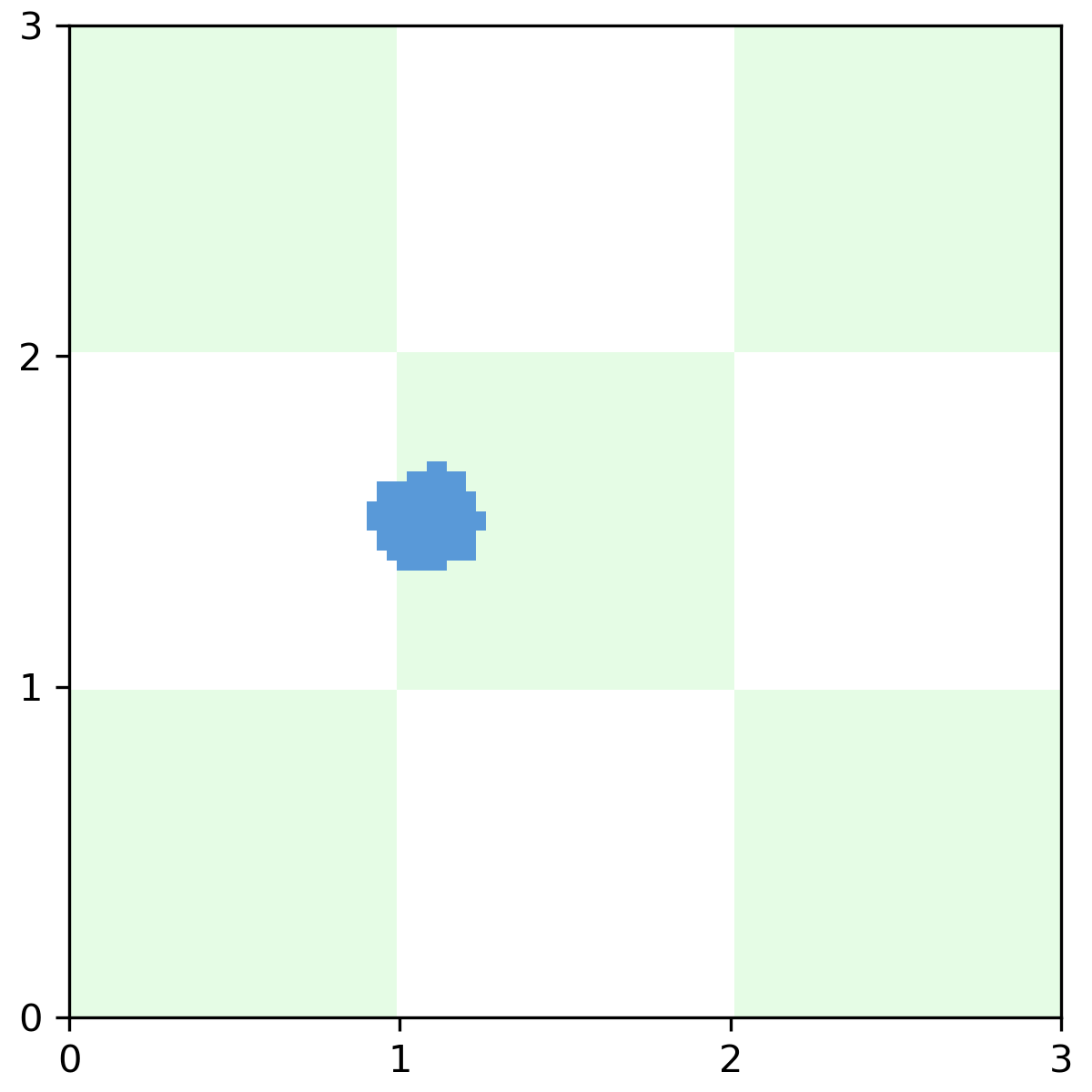}
      \caption{Pre-trained samples}
      \label{fig:toy_top_a}
    \end{subfigure}\hfill
    \begin{subfigure}{\imgw}
      \centering
      \includegraphics[width=\textwidth]{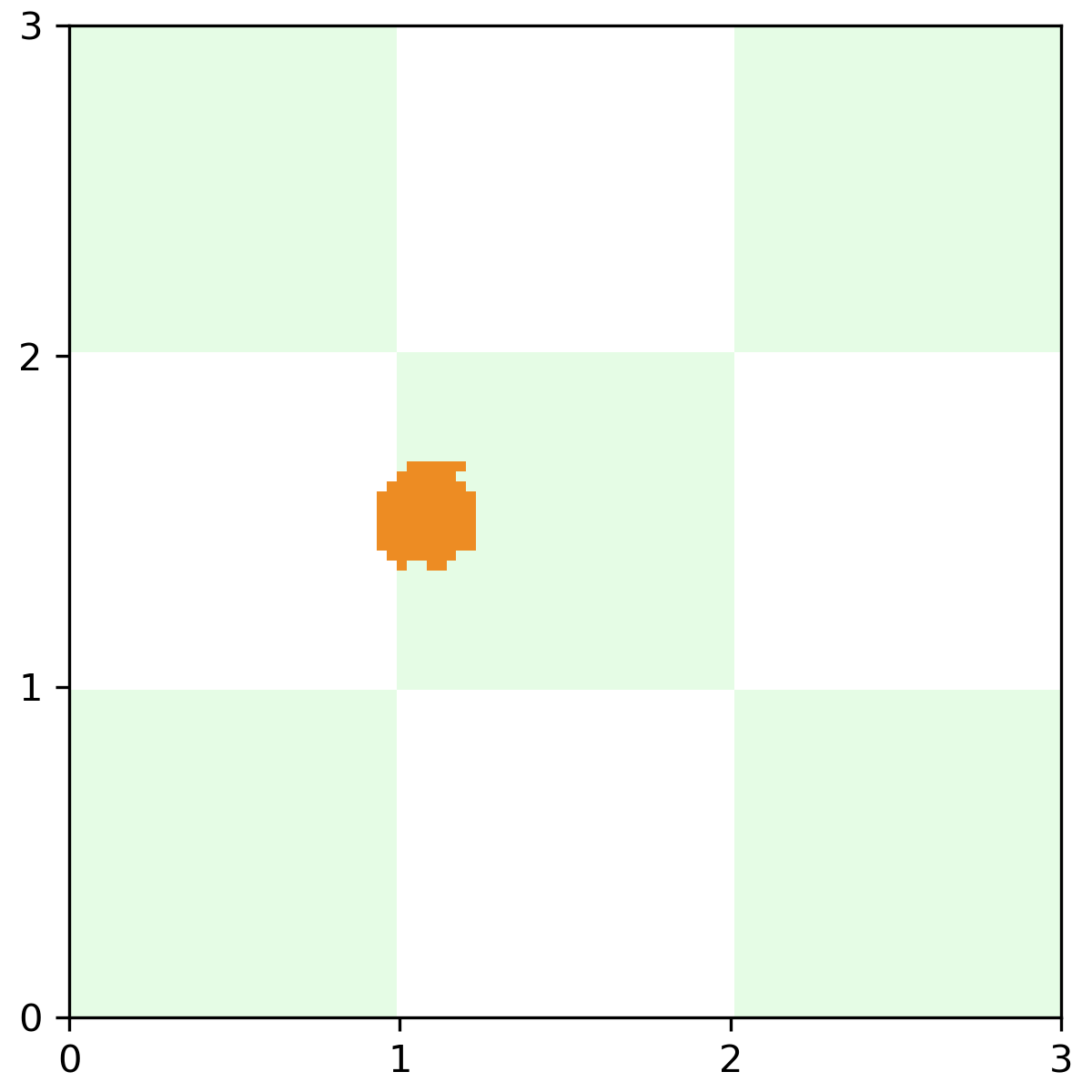}
      \caption{\AlgRecNF samples}
      \label{fig:toy_top_b}
    \end{subfigure}\hfill
    \begin{subfigure}{\imgw}
      \centering
      \includegraphics[width=\textwidth]{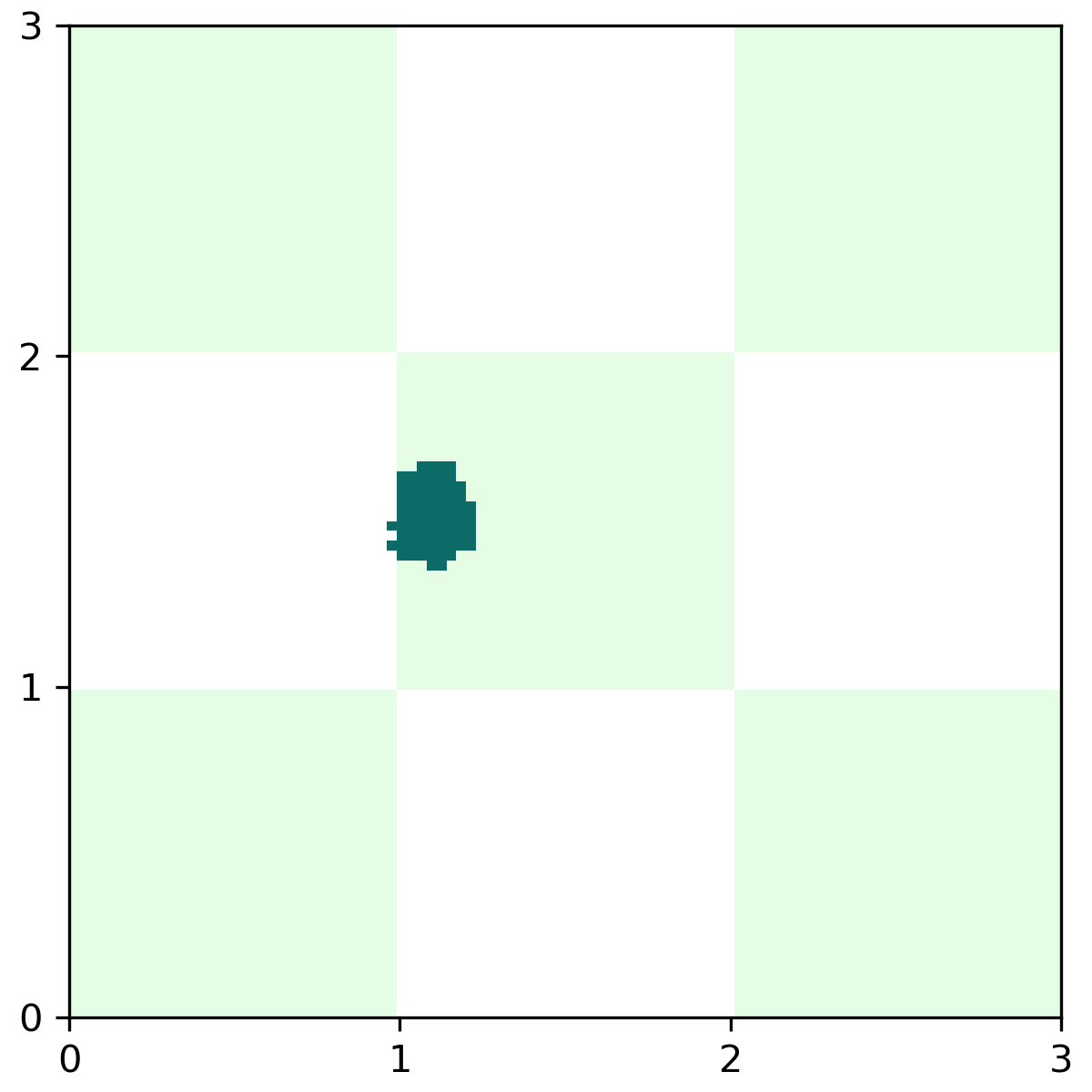}
      \caption{\AlgRecF samples}
      \label{fig:toy_top_c}
    \end{subfigure}\hfill
    \begin{subfigure}{\imgw}
      \centering
      \includegraphics[width=\textwidth]{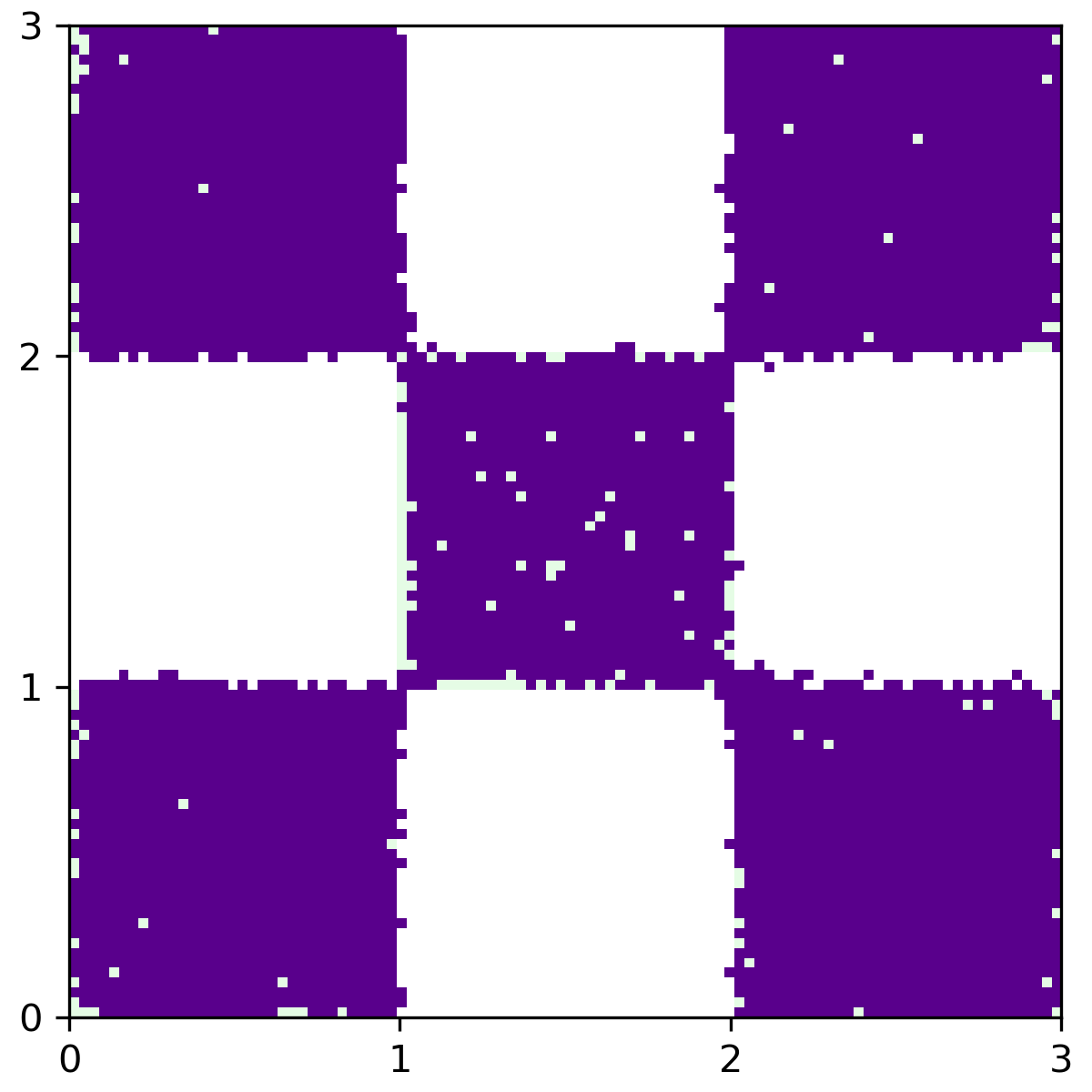}
      \caption{\AlgNameShort (ours)}
      \label{fig:toy_top_d}
    \end{subfigure}\hfill
    \\[0.4em]\vspace{-1mm}
    % row 2 (repeat)
    \setlength{\imgw}{0.33\textwidth}
    \begin{subfigure}{\imgw}
      \centering
      \includegraphics[width=\textwidth]{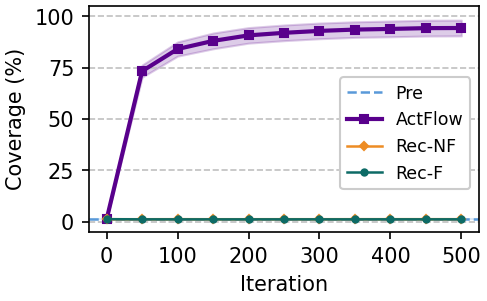}
      \caption{Coverage}
      \label{fig:toy_bottom_a}
    \end{subfigure}%
    \begin{subfigure}{\imgw}
      \centering
      \includegraphics[width=\textwidth]{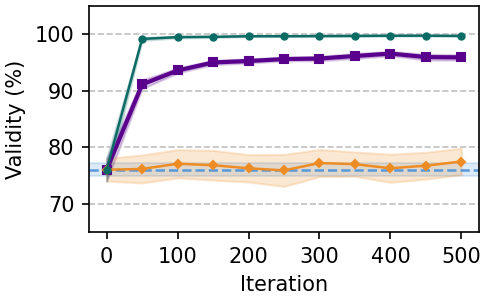}
      \caption{Validity}
      \label{fig:toy_bottom_b}
    \end{subfigure}%
    \begin{subfigure}{\imgw}
      \centering
      \includegraphics[width=\textwidth]{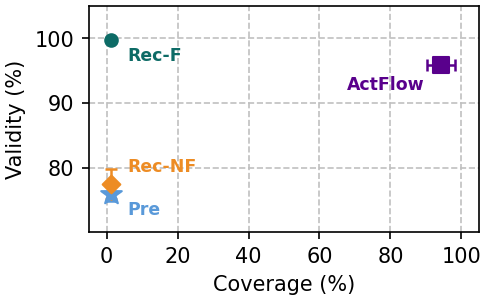}
      \caption{Coverage-Validity}
      \label{fig:toy_bottom_c}
    \end{subfigure} \vspace{-1mm}
    \caption{\looseness-1 (\ref{fig:toy_top_a}) The valid design space (green) and the estimated generable set ($\tau = 0.01)$ of a pre-trained model (blue). (\ref{fig:toy_top_a}-\ref{fig:toy_top_d}) \AlgRecNF and \AlgRecF fail to expand the pre-trained model's generable set, while \AlgNameShort is able to discover and expand over the entire valid design space. (\ref{fig:toy_bottom_a}-\ref{fig:toy_bottom_c}) \AlgNameShort increases both model coverage ($1.16\%$ to $94.27\%$) and validity ($76\%$ to $95.9\%$), \AlgRecNF fails at increasing both, while \AlgRecF increases validity while even decreasing coverage ($1.16\%$ to $1.1\%$).} 
    \label{fig:experiments_fig_2} \vspace{-0mm}
\end{figure*}
\endgroup

\setlength{\imgw}{0.25\textwidth}

\looseness -1 We evaluate \AlgNameShort for expanding the valid generable set of pre-trained generative models, and compare it against widely adopted self-generation baselines: continued pre-training on unfiltered model samples (\AlgRecNF)~\citep{shumailov2024ai, alemohammad2023self} and on verifier-filtered samples (\AlgRecF)~\citep{dong2023raft, gulcehre2023reinforced}. We propose two types of experiments: $(i)$ an illustrative visually interpretable setting, and $(ii)$ high-dimensional biochemical design tasks over molecules, therapeutic peptides, and protein sequences. Since standard generative modeling evaluation metrics are misaligned with OOD generative modeling, we employ the OOD modeling criteria presented in Apx. \ref{sec:evaluation_metrics_apx}, namely \emph{coverage} via number of valid clusters, \emph{diversity} via Vendi~\citep{friedman2022vendi}, and overall model validity. Additional experimental details are provided in Apx.~\ref{sec:experimental_details_appendix}.

\vspace{-0mm}
\paragraph{Illustrative Visually Interpretable Setting.}
\looseness -1 We first evaluate \AlgNameShort in a two-dimensional illustrative setting with valid design space shown in green in Fig.~\ref{fig:toy_top_a}--\ref{fig:toy_top_d}, and visualize the models' generable sets via discretization. All methods run for $T=500$ iterations and generate $B=64$ samples per iteration. \AlgNameShort uses $\alpha_t=0.005$, $\beta=1/13$, and flow representation at timestep $s=0.9$. Fig.~\ref{fig:toy_top_a}--\ref{fig:toy_top_d} show that \AlgNameShort (violet) expands the pre-trained generable set ($\tau = 0.01$, Fig.~\ref{fig:toy_top_a}) to near-optimally cover  the valid design space (Fig.~\ref{fig:toy_top_c}), whereas both baselines fail to expand it.  Fig.~\ref{fig:toy_bottom_a}--\ref{fig:toy_bottom_c} show that \AlgNameShort increases both coverage, from $1.16\%$ to $94.27\%$, and validity, from $76.00\%$ to $95.89\%$. By contrast, \AlgRecNF leaves coverage unchanged and barely improves validity, while \AlgRecF attains higher validity but decreases coverage. Thus, even in this simple two-dimensional setting, \AlgNameShort is the only method that achieves OOD generable-set expansion, substantially improving both coverage and validity, expanding through sparse directions (corners) to new valid regions.

\vspace{-0mm}
\paragraph{Molecular design on QM9.}
\looseness -1 We evaluate \AlgNameShort on FlowMol Gaussian~\citep{dunn2024mixed}, pre-trained on QM9~\citep{ramakrishnan2014quantum}. All methods run for $1000$ iterations after $66$ initial iterations without fine-tuning. \AlgNameShort uses the flow representation at timestep $s=0.9$ and $\beta=1/10$. Fig.~\ref{fig:qm9_a}--\ref{fig:qm9_d} show that, relative to the pre-trained model, \AlgNameShort substantially expands valid molecular coverage, reaching $88.40$ valid clusters, compared to $45.40$ for \AlgRecF and $42.80$ for \AlgRecNF. \AlgNameShort also achieves the highest diversity, with Vendi $306.08$, while preserving high validity ($95.90\%$). In contrast, \AlgRecF preserves validity ($95.24\%$) but expands significantly less, whereas \AlgRecNF severely degrades validity ($12.26\%$). Thus, on QM9, \AlgNameShort significantly outperforms both recursive sampling baselines, achieving a significantly stronger combination of valid coverage, diversity, and validity.

\vspace{-0mm}
\paragraph{Molecular design on GEOM-Drugs.}
\looseness -1 We further evaluate \AlgNameShort on FlowMol Gaussian~\citep{dunn2024mixed}, pre-trained on GEOM-Drugs~\citep{axelrod2022geom}, a substantially larger, and more chemically relevant dataset of drug-like molecules. All methods use $2000$ fine-tuning steps after a warm-up period, where $4096$ samples are acquired. \AlgNameShort uses flow representation timestep $s=0.8$, $\beta=1/7$, $\alpha_t = 0$. Fig.~\ref{fig:drugs_a}--\ref{fig:drugs_d} show that \AlgNameShort expands valid molecular coverage substantially beyond the baselines, reaching $144.3$ valid clusters, compared to $89.33$ for \AlgRecF and $44.67$ for \AlgRecNF. \AlgNameShort also achieves the highest diversity, with Vendi $303.1$, while maintaining comparable validity to \AlgRecF ($56.6\%$ versus $58.83\%$). In contrast, \AlgRecF expands substantially less, whereas \AlgRecNF collapses in validity, as reported in Fig. \ref{fig:drugs_d}. Thus, on GEOM-Drugs, \AlgNameShort provides a stronger expansion profile than both baselines, ultimately increasing the pre-trained model coverage by $144.3\%$ and its validity by $56.6\%$.

\begingroup
  \captionsetup[subfigure]{aboveskip=1.7pt, belowskip=0pt}
\setlength{\imgw}{0.25\textwidth}
\begin{figure*}[t]
    \centering
    \begin{subfigure}{\imgw}
      \centering
      \includegraphics[width=\textwidth]{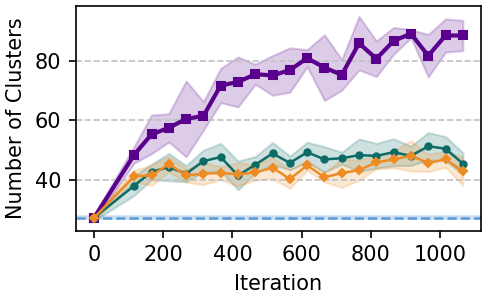}
      \caption{QM9 Coverage}
      \label{fig:qm9_a}
    \end{subfigure}%
    \begin{subfigure}{\imgw}
      \centering
      \includegraphics[width=\textwidth]{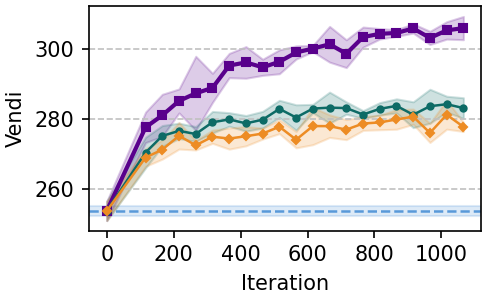}
      \caption{QM9 Diversity}
      \label{fig:qm9_b}
    \end{subfigure}%
    \begin{subfigure}{\imgw}
      \centering
      \includegraphics[width=\textwidth]{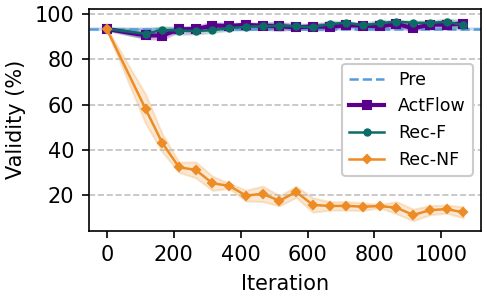}
      \caption{QM9 Validity}
      \label{fig:qm9_c}
    \end{subfigure}%
    \begin{subfigure}{\imgw}
      \centering
      \includegraphics[width=\textwidth]{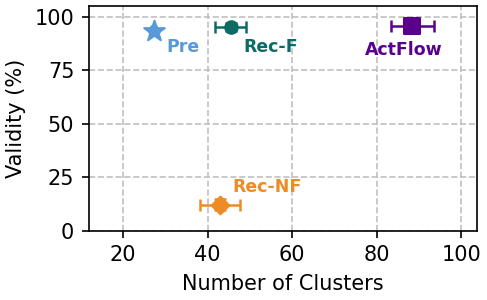}
      \caption{Coverage-Validity}
      \label{fig:qm9_d}
    \end{subfigure} 
    \\[0.4em] \vspace{-1mm}
    \begin{subfigure}{\imgw}
      \centering
      \includegraphics[width=\textwidth]{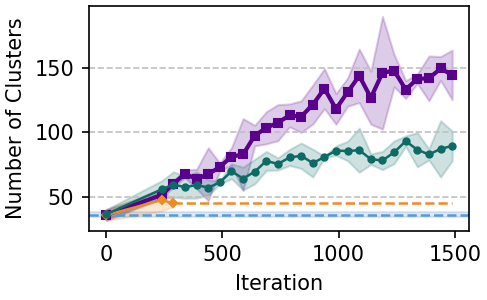}
      \caption{GEOM Coverage}
      \label{fig:drugs_a}
    \end{subfigure}%
    \begin{subfigure}{\imgw}
      \centering
      \includegraphics[width=\textwidth]{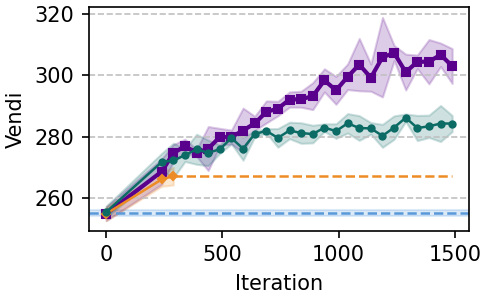}
      \caption{GEOM Diversity}
      \label{fig:drugs_b}
    \end{subfigure}%
    \begin{subfigure}{\imgw}
      \centering
      \includegraphics[width=\textwidth]{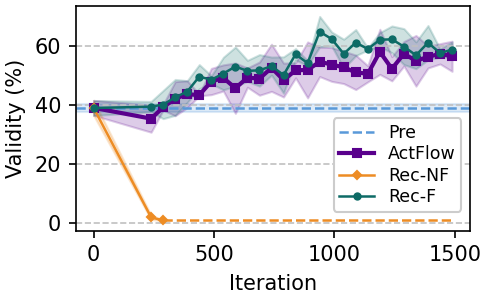}
      \caption{GEOM Validity}
      \label{fig:drugs_c}
    \end{subfigure}%
    \begin{subfigure}{\imgw}
      \centering
      \includegraphics[width=\textwidth]{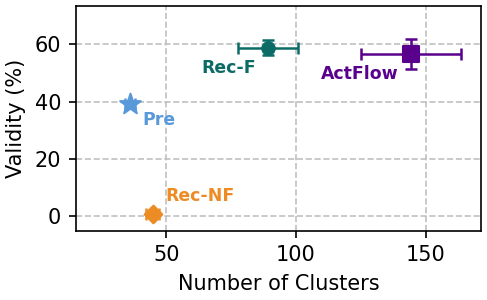}
      \caption{Coverage-Validity}
      \label{fig:drugs_d}
    \end{subfigure} \vspace{-2.5mm}
    \caption{\looseness-1
    (\ref{fig:qm9_a}-\ref{fig:qm9_d}) Molecular design on QM9~\citep{ramakrishnan2014quantum}. \AlgNameShort expands valid coverage substantially more than \AlgRecNF and \AlgRecF, reaching $88.40$ valid clusters versus $42.80$ and $45.40$ respectively (Fig. \ref{fig:qm9_a}). \AlgNameShort also achieves the highest diversity (\ref{fig:qm9_b}) while preserving high validity (\ref{fig:qm9_d}). (\ref{fig:drugs_a}-\ref{fig:drugs_d}) Design of drug-like molecules on GEOM-Drugs~\citep{axelrod2022geom}. \AlgNameShort increases the pre-trained model coverage, diversity, and validity, vastly outperforming both \AlgRecF and \AlgRecNF baselines. 
    }
    \label{fig:experiments_block_2} \vspace{-0mm}
\end{figure*}
\endgroup

\vspace{-0mm}
\paragraph{Therapeutic Peptide Design} 

\begin{wrapfigure}{r}{0.27\textwidth}
  \centering \vspace{-0mm}
\includegraphics[width=0.27\textwidth]{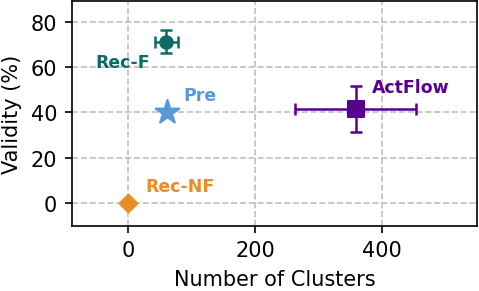}
  \caption{Peptides coverage-validity tradeoff.
}
  \label{fig:peptides_pareto} \vspace{-1mm}
\end{wrapfigure}

\looseness -1 We assess \AlgNameShort on biological sequence generation via discrete diffusion models, as detailed in Apx. \ref{sec:discrete_diffusion}. We consider the task of therapeutic peptide design \citep{wang2022therapeutic} and employ the pre-trained SMILES discrete diffusion model from PepTune \citep{tang2025peptune}. As shown in Table \ref{tab:peptides_proteins_table} and Fig \ref{fig:peptides_pareto}, \AlgNameShort significantly increases the number of clusters (\ie coverage) computed from PeptideCLM embeddings \citep{Feller2024.08.09.607221} from $61$ to $~358$, while maintaining validity, determined via the \texttt{SMILES2PEPTIDE} verifier~\citep{tang2025peptune}. Similarly, \AlgNameShort achieves the highest diversity of $58.87$ Vendi score over the same embeddings. On the contrary, \AlgRecF maintains the similar coverage and diversity as the pre-trained model, while increasing validity, and \AlgRecNF collapses entirely to $0.0\%$ validity, as a single misplaced token can result in an invalid peptide -- thus inducing no valid clusters. These results show that \AlgNameShort's strong empirical performance extends beyond continuous flows to discrete diffusion models (see Apx. \ref{sec:discrete_diffusion}), significantly increasing both model coverage and diversity, outperforming all baselines, while maintaining stable validity.

\vspace{-0mm}
\paragraph{Protein Sequence Design.}
\begin{wrapfigure}{r}{0.27\textwidth}
  \centering \vspace{-3.5mm}
\includegraphics[width=0.27\textwidth]{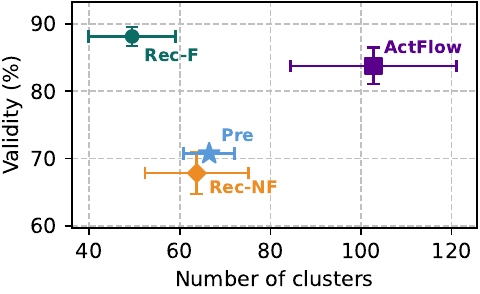}
  \caption{\looseness -1 Proteins coverage-validity tradeoff.
}
  \label{fig:proteins_pareto} \vspace{-0mm}
\end{wrapfigure}
\looseness -1  We finally evaluate \AlgNameShort on protein sequence design, via a continuous ESM diffusion model from SGPO~\citep{yang2025steeringgenerativemodelsexperimental}, pre-trained on the CreiLOV fluorescence dataset~\citep{chen2023deep}. We use $512$ iterations with $1000$ fine-tuning steps each.
\AlgNameShort employs flow representation timestep at $t=0.8$. \AlgNameShort substantially expands valid coverage, computed over token-level ESM embeddings, increasing the number of clusters from $66.50$ to $102.75$, compared to $63.75$ for \AlgRecNF and $49.50$ for \AlgRecF, which lead to significant decrease in both coverage and validity metrics, see Table \ref{tab:peptides_proteins_table} and Fig. \ref{fig:proteins_pareto}. \AlgNameShort also achieves the highest diversity, with Vendi $42.14$ vs $12.87$ for the pre-trained model and $11.85$/$11.67$ for the baselines, while improving validity from $70.81\%$ to $83.74\%$. In contrast, \AlgRecF reaches higher validity ($88.12\%$) but collapses in coverage and diversity; \AlgRecNF reduces coverage and diversity relative to the pre-trained model. \AlgNameShort achieves the strongest coverage--diversity--validity profile (see Fig. \ref{fig:proteins_pareto}), significantly expanding the pre-trained    
protein sequence model to new valid regions of the design space.

\begin{figure*}[t!]
    \centering
    \includegraphics[
        width=\textwidth,
        keepaspectratio,
        trim=0 00mm 0 00mm,
        clip
    ]{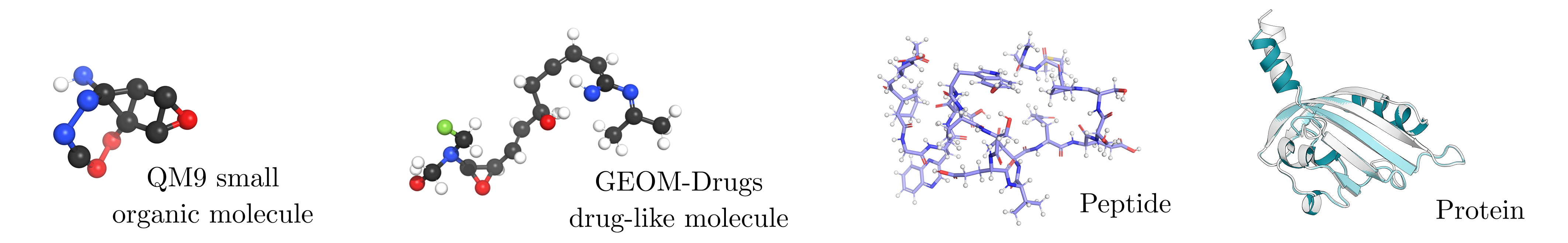}
    \vspace{-5.5mm}
    \caption{\looseness -1 3D structures directly generated or string-converted by models expanded via \AlgNameShort.}
    \label{fig:3d_structures_fig} \vspace{-1.5mm}
\end{figure*}
\definecolor{darkred}{RGB}{200,0,0}
\begin{table*}[t!]
\centering
\vspace{-1mm}
\small
\setlength{\tabcolsep}{4pt}
\renewcommand{\arraystretch}{1.12}
\resizebox{\linewidth}{!}{
\begin{tabular}{
l
S[table-format=3.2\uncert{00.00}]
S[table-format=1.2\uncert{0.00}]
S[table-format=1.2\uncert{0.00}]
S[table-format=2.2\uncert{0.00}]
S[table-format=3.2\uncert{00.00}]
S[table-format=2.2\uncert{00.00}]
S[table-format=1.2\uncert{0.00}]
S[table-format=2.2\uncert{0.00}]
}
\toprule
& \multicolumn{4}{c}{\textbf{Therapeutic Peptide Design}} 
& \multicolumn{4}{c}{\textbf{Protein Sequence Design}} \\
\cmidrule(lr){2-5} \cmidrule(lr){6-9}
\textbf{Method} 
& {\textbf{Coverage $\uparrow$}} 
& {\textbf{Diversity $\uparrow$}}
& {\textbf{FID}}
& {\textbf{Validity} (\%) $\uparrow$}
& {\textbf{Coverage $\uparrow$}}
& {\textbf{Diversity $\uparrow$}}
& {\textbf{FID}}
& {\textbf{Validity} (\%) $\uparrow$} \\
\midrule
Pre 
& 44.33\uncert{3.46} & 13.45\uncert{0.14} & 0.00\uncert{0.00} & 39.31\uncert{1.02} 
& 66.50\uncert{5.63} & 12.87\uncert{0.63} & 0.05\uncert{0.01} & 70.81\uncert{1.12} \\
\AlgRecNF
& {\color{darkred}} 0.00\uncert{0.00} & {\color{darkred}} 0.00\uncert{0.00} & 0.00\uncert{0.00} & {\color{darkred}} 0.00\uncert{0.00}
& {\color{darkred}} 63.75\uncert{11.45} & {\color{darkred}} 11.85\uncert{0.34} & 0.27\uncert{0.06} & {\color{darkred}} 67.81\uncert{3.11} \\
\AlgRecF 
& 59.67\uncert{17.97} & 13.62\uncert{4.34} & 3.18\uncert{1.77} & 71.16\uncert{5.22} 
& {\color{darkred}} 49.50\uncert{9.60} & {\color{darkred}} 11.67\uncert{0.40} & 0.25\uncert{0.04} & 88.12\uncert{1.42} \\
\rowcolor{softbluebg}
\AlgNameShort 
& 358.33\uncert{95.45} & 58.87\uncert{25.98} & 59.15\uncert{44.31} & 41.59\uncert{10.06} 
& 102.75\uncert{18.36} & 42.14\uncert{10.85} & 5.45\uncert{3.34} & 83.74\uncert{2.70} \\
\bottomrule
\end{tabular}
}
\caption{\looseness -1 \AlgNameShort significantly expands valid model coverage (\ie number of valid clusters) and diversity (\ie Vendi~\citep{friedman2022vendi}), while also increasing validity across therapeutic peptide and protein sequence design tasks. We report in \textcolor{darkred}{red} models where coverage or diversity decreased from initial model. These results show that \AlgNameShort vastly outperforms widely adopted synthetic pre-training baselines.}
\label{tab:peptides_proteins_table}
\vspace{-0mm}
\end{table*}

\vspace{-0mm}
\section{Related Work}\vspace{-0mm}
\paragraph{Diffusion and flow model reward adaptation for generative optimization.}
\looseness -1 Several works adapt pre-trained diffusion and flow models for reward maximization, either via fine-tuning~\citep[\eg][]{uehara2024understanding, santi2025flow, liu2025flow} or inference-time sampling~\citep[\eg][]{uehara2025inference, jensen2026value, uehara2025inference}. Without injecting further validity information, these methods remain constrained by the pre-trained model coverage, leading to \emph{over-optimization}, i.e., invalid samples, when steering the model excessively beyond the training distribution~\citep[][]{gutjahr2025constrained}. In this work, we formalize this limitation through the notion of \emph{generable set}, and introduce \AlgNameShort, a task-agnostic continued pre-training method that expands coverage to new valid regions of the design space.

\vspace{-0mm}
\paragraph{Synthetic data generation for diffusion model self-adaptation}
\looseness -1 Generative priors can provide useful synthetic data for downstream learning~\citep[\eg][]{azizi2023synthetic,nguyen2025we}, but recursively training on generated data can induce model collapse~\citep[][]{shumailov2024ai,alemohammad2023self}. For generative model adaptation, prior work uses synthetic data for negative guidance~\citep{alemohammad2024self} or conservative verifier-free self-play~\citep[][]{yuan2024self}, with the primary goal of improving in-distribution sample quality~\citep[\eg][]{alemohammad2024self}. In contrast, we study flow adaptation for out-of-distribution modeling: rather than further refining high-density modes, the goal is to reallocate mass toward newly verified valid regions beyond the training distribution. To our knowledge, \AlgNameShort is the first theory-backed synthetic continued-pretraining method for out-of-distribution generative modeling.

\vspace{-0mm}
\paragraph{Diffusion and flow based design space exploration}
\looseness -1 Recent works introduce scalable methods for diffusion- and flow-based design space exploration via entropy maximization~\citep[\eg][]{de2025provable, santi2025flow} or approximations~\citep[\eg][]{celik2025dime}. A more recent line assumes access to a known differentiable verifier and uses it to rebalance model density toward valid regions via verifier-constrained entropy maximization~\citep[][]{de2026verifier}. In contrast, we assume access only to black-box, unknown verifiers, covering highly-relevant settings in science, where feedback is experimental and/or non-differentiable. Moreover, \AlgNameShort expands valid coverage through directed synthetic data generation rather than verifier-constrained optimization.

\vspace{-0mm}
\paragraph{Safe (active) exploration theory}
\looseness -1 Safe exploration studies learning under unknown safety constraints. Early work uses Gaussian-process confidence bounds to restrict exploration to points certified above a safety threshold~\citep{sui2015safe, sui2018stagewise}. Subsequent safe-RL methods extend this principle to MDPs and control~\citep[\eg][]{turchetta2016safe,berkenkamp2017safe,koller2018learning}. We bridge \emph{safety} in safe exploration with \emph{validity} in generative modeling: expanding a generable set amounts to discovering new valid (i.e., safe) regions. In scientific discovery, invalid samples lead to rejected verifier queries, not unsafe actions in safety-critical systems; thus, \AlgNameShort does not perform safe-set constrained planning. Rather, we employ this viewpoint to provide a first-of-its-kind reachability-based theory of out-of-distribution flow modeling.

\vspace{-0mm}
\section{Conclusion}\vspace{-0mm}
\looseness -1 We depart from data distribution matching and formulate \emph{out-of-distribution flow modeling} via \emph{generable set expansion}, \ie  expanding the valid region of design space a model samples with non-negligible probability. We introduced \AlgNameShort, a continued-pretraining scheme that uses verifier feedback on self-generated data to actively expand in the learned flow representation -- leading to first-of-their-kind reachability-based guarantees for out-of-distribution flow modeling. Across small organic molecules, drug-like molecules, therapeutic peptides, and protein sequences, \AlgNameShort consistently improves coverage, diversity, and validity over widely adopted recursive self-generation baselines. As for limitations, while our framework allows for task-agnostic expansions toward valid, previously inaccessible regions where new-to-nature discoveries may reside -- future work will need to assess whether this form of exploration yields concrete gains (\ie discoveries) in specific real-world applications.

\section*{Acknowledgments} 
This publication was made possible by the ETH AI Center doctoral fellowship to Riccardo De Santi.
The project has received funding from the Swiss
National Science Foundation under NCCR Catalysis (grant number 180544 and 225147) and NCCR Automation (grant agreement 51NF40 180545), National Centres of Competence in Research funded by the Swiss National Science Foundation.
This work was supported by an ETH Zurich Research Grant.
This research was also supported by a grant from the High-throughput Institute for Discovery (HIT-ID) at the University of Pennsylvania to the laboratory of Pranam Chatterjee. 

\bibliography{biblio}

@article{heusel2017gans,
  title={Gans trained by a two time-scale update rule converge to a local nash equilibrium},
  author={Heusel, Martin and Ramsauer, Hubert and Unterthiner, Thomas and Nessler, Bernhard and Hochreiter, Sepp},
  journal={Advances in neural information processing systems},
  volume={30},
  year={2017}
}

@article{song2019generative,
  title={Generative modeling by estimating gradients of the data distribution},
  author={Song, Yang and Ermon, Stefano},
  journal={Advances in neural information processing systems},
  volume={32},
  year={2019}
}

@article{ho2020denoising,
  title={Denoising diffusion probabilistic models},
  author={Ho, Jonathan and Jain, Ajay and Abbeel, Pieter},
  journal={Advances in neural information processing systems},
  volume={33},
  pages={6840--6851},
  year={2020}
}

@inproceedings{corso2022diffdock,
    title={DiffDock: Diffusion Steps, Twists, and Turns for Molecular Docking}, 
    author = {Corso, Gabriele and Stärk, Hannes and Jing, Bowen and Barzilay, Regina and Jaakkola, Tommi},
    booktitle={International Conference on Learning Representations (ICLR)},
    year={2023}
}

@article{chi2025diffusion,
  title={Diffusion policy: Visuomotor policy learning via action diffusion},
  author={Chi, Cheng and Xu, Zhenjia and Feng, Siyuan and Cousineau, Eric and Du, Yilun and Burchfiel, Benjamin and Tedrake, Russ and Song, Shuran},
  journal={The International Journal of Robotics Research},
  volume={44},
  number={10-11},
  pages={1684--1704},
  year={2025},
  publisher={Sage Publications Sage UK: London, England}
}

@inproceedings{hoogeboom2022equivariant,
  title={Equivariant diffusion for molecule generation in 3d},
  author={Hoogeboom, Emiel and Satorras, V{\i}ctor Garcia and Vignac, Cl{\'e}ment and Welling, Max},
  booktitle={International conference on machine learning},
  pages={8867--8887},
  year={2022},
  organization={PMLR}
}

@article{song2020score,
  title={Score-based generative modeling through stochastic differential equations},
  author={Song, Yang and Sohl-Dickstein, Jascha and Kingma, Diederik P and Kumar, Abhishek and Ermon, Stefano and Poole, Ben},
  journal={International Conference on Learning Representations (ICLR)},
  year={2021}
}

@inproceedings{sohl2015deep,
  title={Deep unsupervised learning using nonequilibrium thermodynamics},
  author={Sohl-Dickstein, Jascha and Weiss, Eric and Maheswaranathan, Niru and Ganguli, Surya},
  booktitle={International conference on machine learning},
  pages={2256--2265},
  year={2015},
  organization={PMLR}
}

@article{lipman2024flow,
  title={Flow matching guide and code},
  author={Lipman, Yaron and Havasi, Marton and Holderrieth, Peter and Shaul, Neta and Le, Matt and Karrer, Brian and Chen, Ricky TQ and Lopez-Paz, David and Ben-Hamu, Heli and Gat, Itai},
  journal={arXiv preprint arXiv:2412.06264},
  year={2024}
}

@article{lipman2022flow,
  title={Flow matching for generative modeling},
  author={Lipman, Yaron and Chen, Ricky TQ and Ben-Hamu, Heli and Nickel, Maximilian and Le, Matt},
  journal={International Conference on Learning Representations (ICLR)},
  year={2023}
}

@inproceedings{de2025provable,
  title={Provable Maximum Entropy Manifold Exploration via Diffusion Models},
  author={De Santi, Riccardo and Vlastelica, Marin and Hsieh, Ya-Ping and Shen, Zebang and He, Niao and Krause, Andreas},
  booktitle={International Conference on Machine Learning},
  year={2025}

}

@article{dunn2024mixed,
  title={Mixed continuous and categorical flow matching for 3d de novo molecule generation},
  author={Dunn, Ian and Koes, David Ryan},
  journal={ArXiv},
  pages={arXiv--2404},
  year={2024}
}

@article{ramakrishnan2014quantum,
  title={Quantum chemistry structures and properties of 134 kilo molecules},
  author={Ramakrishnan, Raghunathan and Dral, Pavlo O and Rupp, Matthias and Von Lilienfeld, O Anatole},
  journal={Scientific data},
  volume={1},
  number={1},
  pages={1--7},
  year={2014},
  publisher={Nature Publishing Group}
}

@article{friedman2022vendi,
  title={The Vendi Score: A Diversity Evaluation Metric for Machine Learning},
  author={Dan Friedman and Adji Bousso Dieng},
  journal={Trans. Mach. Learn. Res.},
  year={2022},
  volume={2023},
  url={https://api.semanticscholar.org/CorpusID:252715476}
}

@article{uehara2024understanding,
  title={Understanding reinforcement learning-based fine-tuning of diffusion models: A tutorial and review},
  author={Uehara, Masatoshi and Zhao, Yulai and Biancalani, Tommaso and Levine, Sergey},
  journal={arXiv preprint arXiv:2407.13734},
  year={2024}
}

@inproceedings{santi2025flow,
 	author = {De Santi, Riccardo and Vlastelica, Marin and Hsieh, Ya-Ping and Shen, Zebang and He, Niao and Krause, Andreas},
 	booktitle = {Advances in Neural Information Processing Systems (NeurIPS)},
 	title = {Flow Density Control: Generative Optimization Beyond Entropy-Regularized Fine-Tuning},
 	year = {2025}}

@inproceedings{celik2025dime,
    title={{DIME}: Diffusion-Based Maximum Entropy Reinforcement Learning},
    author={Onur Celik and Zechu Li and Denis Blessing and Ge Li and Daniel Palenicek and Jan Peters and Georgia Chalvatzaki and Gerhard Neumann},
    booktitle={Forty-second International Conference on Machine Learning},
    year={2025},
    url={https://openreview.net/forum?id=Aw6dBR7Vxj}
}

@article{jumper2021highly,
  title={Highly accurate protein structure prediction with AlphaFold},
  author={Jumper, John and Evans, Richard and Pritzel, Alexander and Green, Tim and Figurnov, Michael and Ronneberger, Olaf and Tunyasuvunakool, Kathryn and Bates, Russ and {\v{Z}}{\'\i}dek, Augustin and Potapenko, Anna and others},
  journal={nature},
  volume={596},
  number={7873},
  pages={583--589},
  year={2021},
  publisher={Nature Publishing Group UK London}
}

@article{Weininger1988,
  author = {Weininger, David},
  journal = {J. Chem. Inf. Comput. Sci.},
  number = 1,
  pages = {31-36},
  publisher = {American Chemical Society},
  title = {SMILES, a chemical language and information system. 1. introduction to methodology and encoding rules},
  volume = 28,
  year = 1988
}

@inproceedings{gutjahr2025constrained,
	title={Constrained Molecular Generation via Sequential Flow Model Fine-Tuning},
	author={Gutjahr, Sven and De Santi, Riccardo and Schaufelberger, Luca and Jorner, Kjell and Krause, Andreas},
	booktitle={International Conference on Machine Learning (ICML)},
	year={2026},
}

@article{axelrod2022geom,
  title={GEOM, energy-annotated molecular conformations for property prediction and molecular generation},
  author={Axelrod, Simon and Gomez-Bombarelli, Rafael},
  journal={Scientific Data},
  volume={9},
  number={1},
  pages={185},
  year={2022},
  publisher={Nature Publishing Group UK London}
}

@inproceedings{yang2025steeringgenerativemodelsexperimental,
      title={Steering Generative Models with Experimental Data for Protein Fitness Optimization}, 
      author={Jason Yang and Wenda Chu and Daniel Khalil and Raul Astudillo and Bruce J. Wittmann and Frances H. Arnold and Yisong Yue},
      booktitle={Advances in Neural Information Processing Systems (NeurIPS)},
      year={2025},
      url={https://arxiv.org/abs/2505.15093}, 
}

@inproceedings{scholkopf2001generalized,
  title={A generalized representer theorem},
  author={Sch{\"o}lkopf, Bernhard and Herbrich, Ralf and Smola, Alex J},
  booktitle={International conference on computational learning theory},
  pages={416--426},
  year={2001},
  organization={Springer}
}

@article{pasztor2024bandits,
  title={Bandits with preference feedback: A stackelberg game perspective},
  author={P{\'a}sztor, Barna and Kassraie, Parnian and Krause, Andreas},
  journal={Advances in Neural Information Processing Systems},
  volume={37},
  pages={11997--12034},
  year={2024}
}

@inproceedings{sui2015safe,
  title={Safe exploration for optimization with Gaussian processes},
  author={Sui, Yanan and Gotovos, Alkis and Burdick, Joel and Krause, Andreas},
  booktitle={International conference on machine learning},
  pages={997--1005},
  year={2015},
  organization={PMLR}
}

@inproceedings{sui2018stagewise,
  title={Stagewise safe Bayesian optimization with Gaussian processes},
  author={Sui, Yanan and Zhuang, Vincent and Burdick, Joel and Yue, Yisong},
  booktitle={International conference on machine learning},
  pages={4781--4789},
  year={2018},
  organization={PMLR}
}

@inproceedings{de2026verifier,
	title={Verifier-Constrained Flow Expansion for Discovery Beyond the Data},
	author={De Santi, Riccardo and Protopapas, Kimon and Hsieh, Ya-Ping and Krause, Andreas},
	booktitle={International Conference on Learning Representations (ICLR)},
	year={2026},
	month={April},
}

@article{guo2024takes,
  title={TANGO: direct optimization of constrained synthesizability for generative molecular design},
  author={Guo, Jeff and Schwaller, Philippe},
  journal={Nature Computational Science},
  year={2026}
}

@article{watson2023novo,
  title={De novo design of protein structure and function with RFdiffusion},
  author={Watson, Joseph L and Juergens, David and Bennett, Nathaniel R and Trippe, Brian L and Yim, Jason and Eisenach, Helen E and Ahern, Woody and Borst, Andrew J and Ragotte, Robert J and Milles, Lukas F and others},
  journal={Nature},
  volume={620},
  number={7976},
  pages={1089--1100},
  year={2023},
  publisher={Nature Publishing Group UK London}
}

@inproceedings{alberti2025data,
 author = {Alberti, Silas and Hasanaliyev, Kenan and Shah, Manav and Ermon, Stefano},
 booktitle = {International Conference on Learning Representations},
 editor = {Y. Yue and A. Garg and N. Peng and F. Sha and R. Yu},
 pages = {3084--3100},
 title = {Data Unlearning in Diffusion Models},
 url = {https://proceedings.iclr.cc/paper_files/paper/2025/file/08a362bd4ae1934e099ce025f06039fe-Paper-Conference.pdf},
 volume = {2025},
 year = {2025}
}

@article{liu2025flow,
  title={Flow-grpo: Training flow matching models via online rl},
  author={Liu, Jie and Liu, Gongye and Liang, Jiajun and Li, Yangguang and Liu, Jiaheng and Wang, Xintao and Wan, Pengfei and Zhang, Di and Ouyang, Wanli},
  journal={Advances in Neural Information Processing Systems (NeurIPS)},
  year={2025}
}

@article{uehara2025inference,
  title={Inference-time alignment in diffusion models with reward-guided generation: Tutorial and review},
  author={Uehara, Masatoshi and Zhao, Yulai and Wang, Chenyu and Li, Xiner and Regev, Aviv and Levine, Sergey and Biancalani, Tommaso},
  journal={arXiv preprint arXiv:2501.09685},
  year={2025}
}

@inproceedings{jensen2026value,
title={Value Matching: Scalable and Gradient-Free Reward-Guided Flow Adaptation},
author={Cristian {Perez Jensen} and Luca Schaufelberger and Riccardo De Santi and Kjell Jorner and Andreas Krause},
booktitle={The Fourteenth International Conference on Learning Representations},
year={2026},
url={https://openreview.net/forum?id=7iXt44Actj}
}

@article{lou2023discrete,
  title={Discrete diffusion modeling by estimating the ratios of the data distribution},
  author={Lou, Aaron and Meng, Chenlin and Ermon, Stefano},
  journal={International Conference on Learning Representations (ICLR)},
  year={2024}
}

@article{wang2022therapeutic,
  title={Therapeutic peptides: current applications and future directions},
  author={Wang, Lei and Wang, Nanxi and Zhang, Wenping and Cheng, Xurui and Yan, Zhibin and Shao, Gang and Wang, Xi and Wang, Rui and Fu, Caiyun},
  journal={Signal transduction and targeted therapy},
  volume={7},
  number={1},
  pages={48},
  year={2022},
  publisher={Nature Publishing Group UK London}
}

@article{tang2025peptune,
  title={Peptune: De novo generation of therapeutic peptides with multi-objective-guided discrete diffusion},
  author={Tang, Sophia and Zhang, Yinuo and Chatterjee, Pranam},
  journal={42nd International Conference on Machine Learning},
  year={2025}
}

@article{pearce2023imitating,
  title={Imitating human behaviour with diffusion models},
  author={Pearce, Tim and Rashid, Tabish and Kanervisto, Anssi and Bignell, Dave and Sun, Mingfei and Georgescu, Raluca and Macua, Sergio Valcarcel and Tan, Shan Zheng and Momennejad, Ida and Hofmann, Katja and others},
  journal={International Conference on Learning Representations (ICLR)},
  year={2023}
}

@article{rector2026general,
  title={General Multimodal Protein Design Enables DNA-Encoding of Chemistry},
  author={Rector-Brooks, Jarrid and Lambert, Th{\'e}ophile and Skreta, Marta and Roth, Daniel and Long, Yueming and Li, Zi-Qi and Zhang, Xi and Cretu, Miruna and Li, Francesca-Zhoufan and Ganapathy, Tanvi and others},
  journal={arXiv preprint arXiv:2604.05181},
  year={2026}
}

@article{chen2023deep,
  title={Deep mutational scanning of an oxygen-independent fluorescent protein CreiLOV for comprehensive profiling of mutational and epistatic effects},
  author={Chen, Yongcan and Hu, Ruyun and Li, Keyi and Zhang, Yating and Fu, Lihao and Zhang, Jianzhi and Si, Tong},
  journal={ACS Synthetic Biology},
  volume={12},
  number={5},
  pages={1461--1473},
  year={2023},
  publisher={ACS Publications}
}

@article{shi2024simplified,
  title={Simplified and generalized masked diffusion for discrete data},
  author={Shi, Jiaxin and Han, Kehang and Wang, Zhe and Doucet, Arnaud and Titsias, Michalis},
  journal={Advances in neural information processing systems},
  volume={37},
  pages={103131--103167},
  year={2024}
}

@article{sahoo2024simple,
  title={Simple and effective masked diffusion language models},
  author={Sahoo, Subham and Arriola, Marianne and Schiff, Yair and Gokaslan, Aaron and Marroquin, Edgar and Chiu, Justin and Rush, Alexander and Kuleshov, Volodymyr},
  journal={Advances in Neural Information Processing Systems},
  volume={37},
  pages={130136--130184},
  year={2024}
}

@article{ou2024your,
  title={Your absorbing discrete diffusion secretly models the conditional distributions of clean data},
  author={Ou, Jingyang and Nie, Shen and Xue, Kaiwen and Zhu, Fengqi and Sun, Jiacheng and Li, Zhenguo and Li, Chongxuan},
  journal={International Conference on Learning Representations (ICLR)},
  year={2025}
}

@article{zheng2024masked,
  title={Masked diffusion models are secretly time-agnostic masked models and exploit inaccurate categorical sampling},
  author={Zheng, Kaiwen and Chen, Yongxin and Mao, Hanzi and Liu, Ming-Yu and Zhu, Jun and Zhang, Qinsheng},
  journal={International Conference on Learning Representations (ICLR)},
  year={2025}
}

@article{shumailov2024ai,
  title={AI models collapse when trained on recursively generated data},
  author={Shumailov, Ilia and Shumaylov, Zakhar and Zhao, Yiren and Papernot, Nicolas and Anderson, Ross and Gal, Yarin},
  journal={Nature},
  volume={631},
  number={8022},
  pages={755--759},
  year={2024},
  publisher={Nature Publishing Group UK London}
}

@article{azizi2023synthetic,
  title={Synthetic data from diffusion models improves imagenet classification},
  author={Azizi, Shekoofeh and Kornblith, Simon and Saharia, Chitwan and Norouzi, Mohammad and Fleet, David J},
  journal={arXiv preprint arXiv:2304.08466},
  year={2023}
}

@article{alemohammad2024self,
  title={Self-improving diffusion models with synthetic data},
  author={Alemohammad, Sina and Humayun, Ahmed Imtiaz and Agarwal, Shruti and Collomosse, John and Baraniuk, Richard},
  journal={arXiv preprint arXiv:2408.16333},
  year={2024}
}

@article{yuan2024self,
  title={Self-play fine-tuning of diffusion models for text-to-image generation},
  author={Yuan, Huizhuo and Chen, Zixiang and Ji, Kaixuan and Gu, Quanquan},
  journal={Advances in Neural Information Processing Systems},
  volume={37},
  pages={73366--73398},
  year={2024}
}

@inproceedings{nguyen2025we,
  title={Do We Need All the Synthetic Data? Targeted Image Augmentation via Diffusion Models},
  author={Nguyen, Dang and Li, Jiping and Zheng, Jinghao and Mirzasoleiman, Baharan},
  booktitle={The Fourteenth International Conference on Learning Representations},
  year={2026}
}

@inproceedings{alemohammad2023self,
  title={Self-consuming generative models go mad},
  author={Alemohammad, Sina and Casco-Rodriguez, Josue and Luzi, Lorenzo and Humayun, Ahmed Imtiaz and Babaei, Hossein and LeJeune, Daniel and Siahkoohi, Ali and Baraniuk, Richard},
  booktitle={The Twelfth International Conference on Learning Representations},
  year={2024}
}

@article{turchetta2016safe,
  title={Safe exploration in finite markov decision processes with gaussian processes},
  author={Turchetta, Matteo and Berkenkamp, Felix and Krause, Andreas},
  journal={Advances in neural information processing systems},
  volume={29},
  year={2016}
}

@article{berkenkamp2017safe,
  title={Safe model-based reinforcement learning with stability guarantees},
  author={Berkenkamp, Felix and Turchetta, Matteo and Schoellig, Angela and Krause, Andreas},
  journal={Advances in neural information processing systems},
  volume={30},
  year={2017}
}

@inproceedings{koller2018learning,
  title={Learning-based model predictive control for safe exploration},
  author={Koller, Torsten and Berkenkamp, Felix and Turchetta, Matteo and Krause, Andreas},
  booktitle={2018 IEEE conference on decision and control (CDC)},
  pages={6059--6066},
  year={2018},
  organization={IEEE}
}

@article{laurent2000adaptive,
  title={Adaptive estimation of a quadratic functional by model selection},
  author={Laurent, B{\'e}atrice and Massart, Pascal},
  journal={The Annals of Statistics},
  volume={28},
  number={5},
  pages={1302--1338},
  year={2000},
  publisher={Institute of Mathematical Statistics}
}

@book{boucheron2013concentration,
  title={Concentration Inequalities: A Nonasymptotic Theory of Independence},
  author={Boucheron, St{\'e}phane and Lugosi, G{\'a}bor and Massart, Pascal},
  year={2013},
  publisher={Oxford University Press}
}

@article{dong2023raft,
  title={{RAFT}: Reward rAnked FineTuning for Generative Foundation Model Alignment},
  author={Hanze Dong and Wei Xiong and Deepanshu Goyal and Yihan Zhang and Winnie Chow and Rui Pan and Shizhe Diao and Jipeng Zhang and KaShun SHUM and Tong Zhang},
  journal={Transactions on Machine Learning Research},
  issn={2835-8856},
  year={2023},
  url={https://openreview.net/forum?id=m7p5O7zblY},
}

@article{gulcehre2023reinforced,
  title={Reinforced self-training (rest) for language modeling},
  author={Gulcehre, Caglar and Paine, Tom Le and Srinivasan, Srivatsan and Konyushkova, Ksenia and Weerts, Lotte and Sharma, Abhishek and Siddhant, Aditya and Ahern, Alex and Wang, Miaosen and Gu, Chenjie and others},
  journal={arXiv preprint arXiv:2308.08998},
  year={2023}
}

@article{lin2022language,
  title={Language models of protein sequences at the scale of evolution enable accurate structure prediction},
  author={Lin, Zeming and Akin, Halil and Rao, Roshan and Hie, Brian and Zhu, Zhongkai and Lu, Wenting and Smetanin, Nikita and dos Santos Costa, Allan and Fazel-Zarandi, Maryam and Sercu, Tom and Candido, Sal and others},
  journal={bioRxiv},
  year={2022},
  publisher={Cold Spring Harbor Laboratory}
}

@article{halgren1996merck,
  title={Merck molecular force field. I. Basis, form, scope, parameterization, and performance of MMFF94},
  author={Halgren, Thomas A},
  journal={Journal of computational chemistry},
  volume={17},
  number={5-6},
  pages={490--519},
  year={1996},
  publisher={Wiley Online Library}
}

@article{li2021smiles,
  title={SMILES pair encoding: a data-driven substructure tokenization algorithm for deep learning},
  author={Li, Xinhao and Fourches, Denis},
  journal={Journal of chemical information and modeling},
  volume={61},
  number={4},
  pages={1560--1569},
  year={2021},
  publisher={ACS Publications}
}

@article {Feller2024.08.09.607221,
	author = {Feller, Aaron L. and Wilke, Claus O.},
	title = {Peptide-specific chemical language model successfully predicts membrane diffusion of cyclic peptides},
	elocation-id = {2024.08.09.607221},
	year = {2024},
	doi = {10.1101/2024.08.09.607221},
	publisher = {Cold Spring Harbor Laboratory},
	URL = {https://www.biorxiv.org/content/early/2024/08/09/2024.08.09.607221},
	eprint = {https://www.biorxiv.org/content/early/2024/08/09/2024.08.09.607221.full.pdf},
	journal = {bioRxiv}
}
\bibliographystyle{plain}

\newpage
\appendix
\section{Appendix}
\tableofcontents
\newpage
\addtocontents{toc}{\protect\setcounter{tocdepth}{2}}

\section{Central Limitation of Standard Pre-training with an Imperfect Model}
\label{sec:imperfect_model_discussion_apx}
In the main text, Eq.~\eqref{eq:representational_limit} is stated for clarity under the idealization that the pre-trained model generates only valid designs. In practice, this need not hold: the generable set may contain invalid designs, so that
\[
\Omega^\tau_{\theta^{\mathrm{pre}}} \cap (\X \setminus \Omega^\star) \neq \emptyset .
\]
This does not change the central limitation. One simply applies the coverage statement to the \emph{valid part} of the generable set,
\[
\Omega^{\tau,\mathrm{val}}_{\theta^{\mathrm{pre}}}
:=
\Omega^\tau_{\theta^{\mathrm{pre}}}\cap \Omega^\star
\subseteq \Omega^\star .
\]
The realistic limitation is therefore
\[
\Omega^{\tau,\mathrm{val}}_{\theta^{\mathrm{pre}}} \subset \Omega^\star,
\qquad
\mathrm{Vol}\!\left(\Omega^{\tau,\mathrm{val}}_{\theta^{\mathrm{pre}}}\right)
\ll
\mathrm{Vol}(\Omega^\star),
\]
i.e., the pre-trained model covers only a small fraction of the valid design space with non-negligible probability, even if it also assigns mass to invalid regions.

This is precisely the setting addressed by \AlgNameShort. The algorithm does not require the initial generable set to be valid; it only requires verifier feedback to identify which generated samples are valid and to adapt the model toward newly verified valid regions. Thus, generable set expansion should be understood as expansion of the valid, verifier-certified portion of the model's coverage. Moreover, when the initial model has partial validity, reallocating mass toward newly verified valid regions can also increase the model's overall validity, as we observe in our experiments, where the pre-trained models are imperfect and \AlgNameShort systematically improves not only coverage and diversity, but typically increases substantially validity as well.

\paragraph{Volume over the valid design space.}
Throughout, \(\Vol(\cdot)\) denotes volume with respect to a domain-appropriate reference measure on the valid design space \(\Omega^\star\). Formally, let \(\nu^\star\) be a reference measure supported on \(\Omega^\star\): Lebesgue measure when \(\Omega^\star\) is full-dimensional, intrinsic Hausdorff measure when \(\Omega^\star\) is a lower-dimensional manifold, and counting measure in discrete design spaces. For any \(A\subseteq\X\), we write
\[
\Vol(A) := \nu^\star(A\cap\Omega^\star).
\]
Thus, the central limitation in Eq.~\eqref{eq:representational_limit} measures how much of the valid design space is covered by the model's \(\tau\)-generable set, rather than ambient Lebesgue volume in \(\X\). This avoids degeneracies in settings where valid designs lie on lower-dimensional manifolds or discrete spaces.

\section{A Warm-Up Analysis of Failure Modes of Expansion via Synthetic Data}
\subsection{Compact Analysis}
\label{sec:warm_up_gaussian_analysis}

\paragraph{Warm Up Gaussian Setting.}

\looseness -1 A widely adopted approach to synthetic data generation is recursive sampling with closed-loop verifier filtering. The purpose of this section is \emph{not} to establish a general impossibility result for this paradigm, but to use a minimal analytical model to isolate possible failure modes relevant to generable-set expansion. To this end, we introduce a simple probabilistic notion of the \emph{generable valid frontier}: valid samples that remain reliably generable under the current model, yet lie away from its dominant modes. Such points form a natural abstraction of expansion-enabling samples. The analysis then shows that standard recursive self-generation schemes may be unlikely to produce them even in a simple low-dimensional setting, thereby shedding light on possible failure modes of data generation schemes and motivating algorithmic desiderata for expansion. We report derivations in Apx. \ref{sec:gaussian_analysis_appendix}. 

\paragraph{A Gaussian abstraction of pre-training.}
We consider a design space \(\X=\mathbb{R}^d\).
Let \(U \subset \mathbb{R}^d\) be a \(k\)-dimensional linear subspace, let \(m:=d-k\), and write
\(\mathbb{R}^d = U \oplus U^\perp\).
We interpret \(U\) as a low-dimensional region well captured by pre-training, and \(U^\perp\) as
orthogonal directions along which expansion beyond the dominant pre-trained modes could occur.
For \(x \in \mathbb{R}^d\), write \smash{\(x=x_U+x_\perp\)} with \smash{\(x_U=\Pi_U x\)} and \smash{\(x_\perp=\Pi_{U^\perp}x\)}. We model the pre-trained generator \smash{$p^{\theta_0}$} by the anisotropic Gaussian
\[
X \sim p^{\theta_0} =: p_0 = \mathcal{N} \bigl(0,\, I_U \oplus \sigma^2 I_{U^\perp}\bigr),
\qquad 0<\sigma \ll 1.
\]
Thus, the model is spread along \(U\), while it is sharply concentrated around \(U\) in the
orthogonal directions. For a density threshold \(\tau>0\), the corresponding generable set is
\[
\Omega_{\theta_0}^{\tau}
=
\left\{
x \in \mathbb{R}^d :
\|x_U\|_2^2 + \sigma^{-2}\|x_\perp\|_2^2 \le r_\tau^2
\right\},
\qquad
r_\tau^2
:=
2\log \left(\frac{(2\pi)^{-d/2}\sigma^{-m}}{\tau}\right),
\]
namely an ellipsoid elongated along \(U\) and very thin along \(U^\perp\).

\paragraph{A probabilistic notion of generable valid frontier.}
To reason about samples that may support expansion, we isolate a shell inside
\(\Omega_{\theta_0}^{\tau}\) that is close to its orthogonal boundary. Fix \(R_U>0\) and radii \(0<\rho_-<\rho_+\) such that $R_U < r_\tau$ and $\rho_+ < \sigma \sqrt{r_\tau^2 - R_U^2}$. We denote the \emph{generable frontier} by
\[
\mathcal{F}_{\mathrm{frontier}}
:=
\left\{
x \in \mathbb{R}^d :
\|x_U\|_2 \le R_U,\ 
\rho_- \le \|x_\perp\|_2 \le \rho_+
\right\}.
\]
By construction, \(\mathcal{F}_{\mathrm{frontier}} \subseteq \Omega_{\theta_0}^{\tau}\). Thus, points in \(\mathcal{F}_{\mathrm{frontier}}\) remain reliably generable under the current model, since they lie inside its generable set, while already exhibiting a substantial orthogonal deviation from the model dominant modes. Next, fix a unit vector \(u_\star \in U^\perp\) and an opening angle
\(\phi \in (0,\pi/2)\). We define the cone of \emph{valid directions of expansion}
\[
C_\phi(u_\star)
:=
\left\{
y \in U^\perp \setminus \{0\} :
\left\langle \frac{y}{\|y\|_2}, u_\star \right\rangle \ge \cos\phi
\right\}.
\]
We call \emph{generable valid frontier} the set
\[
\mathcal{V}_{\mathrm{frontier}}
:=
\left\{
x \in \mathcal{F}_{\mathrm{frontier}} :
x_\perp \in C_\phi(u_\star)
\right\}.
\]

\looseness -1 Thus \(\mathcal{V}_{\mathrm{frontier}}\) consists of samples that lie in the model generable frontier, and are aligned with a valid direction of expansion.  While \(\mathcal{V}_{\mathrm{frontier}}\) does not provide a general first-principles characterization of all useful samples, it formally captures in a minimal way the qualitative tension we want to study: useful self-generated samples should plausibly be ($i$) \emph{generable} by the current model, yet ($ii$) \emph{not} lying within its dominant modes, and ($iii$) aligned with valid directions of expansion. The next result shows that standard self-generation scheme might sample data within this natural class of frontier-valid samples with extremely low probability even in the introduced illustrative Gaussian generator setting.
\begin{restatable}[Standard self-generation is unlikely to find frontier-valid samples]{proposition}{passiveNegativeMain}
\label{thm:passive_negative_main}
Let \(X \sim p_0\), and let \(\mathcal{V}_{\mathrm{frontier}}\) be defined above. For any \(\eta \in (0,1)\), define
$
\rho_\eta
:=
\sigma
\sqrt{
m + 2\sqrt{m\log(1/\eta)} + 2\log(1/\eta)
}$. Assume this choice is feasible for the frontier shell, i.e. \(\rho_\eta<\rho_+\). If \(\rho_-=\rho_\eta\), then
\[
\Pr \bigl(X \in \mathcal{V}_{\mathrm{frontier}}\bigr)
\le
\eta \exp \left(-\frac{m-1}{2}\cos^2\phi\right).
\]
Consequently, for \(N\) i.i.d.\ passive samples \(X_1,\dots,X_N \sim p_0\),
\[
\Pr \left(\exists i \in [N] : X_i \in \mathcal{V}_{\mathrm{frontier}}\right)
\le
N \eta \exp \left(-\frac{m-1}{2}\cos^2\phi\right).
\]
\end{restatable}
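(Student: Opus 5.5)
Under $p_0$ the two components $X_U$ and $X_\perp$ are independent, and $X_\perp = \sigma G$ with $G \sim \mathcal{N}(0, I_m)$ on $U^\perp$. Drop the constraint $\|X_U\|_2 \le R_U$ and the upper radius $\rho_+$; both only shrink the event. This gives
\[
\Pr(X\in\mathcal{V}_{\mathrm{frontier}}) \le \Pr\bigl(\|G\|_2 \ge \rho_\eta/\sigma,\ G/\|G\|_2 \in C_\phi(u_\star)\bigr).
\]
By rotational invariance, $\|G\|_2$ is independent of the direction $\Theta := G/\|G\|_2$, which is uniform on the sphere $S^{m-1}$. The probability therefore factorizes as
\[
\Pr\bigl(\|G\|_2^2 \ge \rho_\eta^2/\sigma^2\bigr)\cdot \Pr\bigl(\langle \Theta, u_\star\rangle \ge \cos\phi\bigr).
\]

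For the radial factor we use the Laurent--Massart $\chi^2$ upper tail~\citep{laurent2000adaptive}: $\Pr(\chi^2_m \ge m + 2\sqrt{mx} + 2x) \le e^{-x}$. Take $x = \log(1/\eta)$. Since $\rho_\eta^2/\sigma^2$ is exactly $m + 2\sqrt{m\log(1/\eta)} + 2\log(1/\eta)$, this factor is at most $\eta$.

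For the angular factor we bound the spherical cap mass $\Pr(\langle\Theta,u_\star\rangle \ge c)$ with $c = \cos\phi$. We aim for $\exp(-\tfrac{m-1}{2}c^2)$, and the route is to reduce to a lower-dimensional spherical cap. Write $t := \langle\Theta,u_\star\rangle$; its density is proportional to $(1-t^2)^{(m-3)/2}$ on $[-1,1]$. We will use the standard cap bound $\Pr(t \ge c) \le \exp(-\tfrac{m-1}{2}c^2)$ for $m \ge 2$, and prove it directly as follows. Write $\Pr(t\ge c) = A(c)/A(0) \cdot \tfrac12$. Bound the integrand via $(1-t^2)^{(m-3)/2} \le e^{-(m-3)t^2/2}$, and substitute $t = c + s$ to extract the factor $e^{-(m-3)c^2/2}$. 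This yields a bound of order $e^{-(m-3)c^2/2}/(c\sqrt{m})$, not quite the stated form. The cleaner path is the known sharp bound from Ball's lecture notes: the cap $\{\langle\Theta,u\rangle\ge c\}$ has mass at most $e^{-mc^2/2}$. It follows by comparing the cap with a ball of radius $\sqrt{1-c^2}$ centered at $cu$, since the cone over the cap lies in that ball when $c \ge 1/\sqrt2$ (for smaller $c$, use the ball of radius $1/(2c)$). This gives $(1-c^2)^{m/2} \le e^{-mc^2/2}$ in the first regime, and $e^{-(m-1)c^2/2}$ is weaker. Alternatively, the $m-1$ version follows from the density representation by a log-concavity/Chernoff argument on $t$, whose variance is $1/m$.

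Multiplying the two factors gives the first claim. The second claim follows from a union bound over the $N$ i.i.d.\ samples. The main obstacle is the cap estimate with the exact constant $\tfrac{m-1}{2}$ across all angles $\phi$. If the ball comparison is awkward for small $c$, we will fall back on a direct Chernoff bound: $t$ is $1/(m-1)$-sub-Gaussian, since it is a Lipschitz function on the sphere and concentration of measure on $S^{m-1}$ gives $\Pr(t\ge c)\le e^{-(m-1)c^2/2}$ for mean-zero $t$ (a Lévy-type inequality with this exact constant is standard, e.g.~\citep{boucheron2013concentration}).
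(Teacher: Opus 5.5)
This is correct and is the paper's argument: drop the $R_U$ and $\rho_+$ constraints, factor via the independence of $\|G\|_2$ and $G/\|G\|_2$, bound the radial factor by $\eta$ using Laurent--Massart at $x=\log(1/\eta)$, bound the angular factor by the spherical-cap estimate (which the paper simply invokes as standard), and take a union bound. One slip in your sketch of Ball's lemma is that the regimes are swapped: the ball of radius $\sqrt{1-c^2}$ about $c\,u_\star$ contains the solid cone only when $c\le 1/\sqrt{2}$ (it must contain the origin), while the ball of radius $1/(2c)$ about $u_\star/(2c)$ handles $c>1/\sqrt{2}$; with that fixed, Ball's bound $e^{-mc^2/2}$, or your concentration-of-measure fallback on $S^{m-1}$, gives the required $e^{-(m-1)c^2/2}$.
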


\paragraph{What the proposition reveals.}
The proposition isolates two sources of difficulty in recursive self-generation. The \(\eta\)-term reflects the rarity of reaching the generable frontier at all, namely of producing a sample with substantial orthogonal deviation from the dominant modes. Beyond this, an additional exponential penalty governs the chance of landing in the correct valid cone. Hence, when valid expansion directions are sparse, the sample complexity of passively obtaining a frontier-valid point becomes exponential in the orthogonal dimension \(m\).

The algorithmic implication is clear: closed-loop verifier filtering alone is not enough for reliable expansion. Successful expansion requires actively steering generation toward the generable valid frontier.

\subsection{Extensive Analysis with Proofs}
\paragraph{Gaussian warm-up: self-generation requires rare valid frontier events}
\label{app:gaussian_negative_result}

We present a simple Gaussian model that isolates the two difficulties behind standard self-generation for efficient out-of-distribution discovery: sampling a \emph{large deviation} away
from the low-dimensional region captured by pre-training, and doing so along a \emph{valid direction}
of expansion.

We work directly on the design space \(\X=\mathbb{R}^d\), so in this warm-up there is no separate
representation map and \(\X=\Z\).
Let \(U \subset \mathbb{R}^d\) be a \(k\)-dimensional linear subspace, and write
\[
m := d-k, \qquad \mathbb{R}^d = U \oplus U^\perp .
\]
For every \(x \in \mathbb{R}^d\), denote by
\[
x_U := \Pi_U x, \qquad x_\perp := \Pi_{U^\perp} x,
\]
so that \(x=x_U+x_\perp\).
Throughout this subsection we assume \(m \ge 2\), since the directional effect of interest only
appears when the orthogonal complement has dimension at least two.

\paragraph{Pre-trained model.}
We model a pre-trained generator $p^{\theta_0}$ by the anisotropic Gaussian
\[
X \sim p^{\theta_0} =: p_0 = \mathcal{N}\!\bigl(0,\, I_U \oplus \sigma^2 I_{U^\perp}\bigr),
\qquad 0<\sigma \ll 1.
\]
Equivalently,
\[
X_U \sim \mathcal{N}(0,I_k), \qquad X_\perp \sim \mathcal{N}(0,\sigma^2 I_m),
\]
independently.
Its density is
\[
p_0(x)
=
(2\pi)^{-d/2}\sigma^{-m}
\exp\!\left(
-\frac12\Bigl(\|x_U\|_2^2 + \sigma^{-2}\|x_\perp\|_2^2\Bigr)
\right).
\]

\paragraph{Generable set.}
Fix a level \(\epsilon \in (0,(2\pi)^{-d/2}\sigma^{-m})\).
In the sense of Definition \ref{definition:generable_set}, the \(\epsilon\)-level generable set of the pre-trained model is
\[
\Omega^{\epsilon}_{\theta_0}
:=
\{x \in \mathbb{R}^d : p_0(x)\ge \epsilon\}.
\]
For the Gaussian above this is the ellipsoid
\[
\Omega^{\epsilon}_{\theta_0}
=
\left\{
x \in \mathbb{R}^d :
\|x_U\|_2^2 + \sigma^{-2}\|x_\perp\|_2^2 \le r_\epsilon^2
\right\},
\]
where
\[
r_\epsilon^2
:=
2\log\!\left(\frac{(2\pi)^{-d/2}\sigma^{-m}}{\epsilon}\right).
\]

\paragraph{Valid direction and generable valid frontier.}
Fix a unit vector \(u_\star \in U^\perp\) and an opening angle \(\phi \in (0,\pi/2)\).
We define the one-sided cone of valid orthogonal directions by
\[
C_\phi(u_\star)
:=
\left\{
y \in U^\perp \setminus \{0\} :
\left\langle \frac{y}{\|y\|_2}, u_\star \right\rangle \ge \cos\phi
\right\}.
\]
We also fix a radius \(R_U>0\) and shell radii \(0<\rho_-<\rho_+\) satisfying
\[
R_U < r_\epsilon,
\qquad
\rho_+ < \sigma \sqrt{r_\epsilon^2 - R_U^2}.
\]
The corresponding \emph{generable valid frontier} is
\[
\mathcal{V}_{\mathrm{frontier}}
:=
\left\{
x \in \mathbb{R}^d :
\|x_U\|_2 \le R_U,\ 
\rho_- \le \|x_\perp\|_2 \le \rho_+,\ 
x_\perp \in C_\phi(u_\star)
\right\}.
\]
These are rare points that are still inside the current generable set, but already lie on a specific
valid orthogonal frontier along which future continued pre-training may expand.

\begin{lemma}[The generable valid frontier lies inside the current generable set]
\label{lem:gaussian_frontier_inside_generable}
Under the above choice of \(R_U,\rho_+,\epsilon\), one has
\[
\mathcal{V}_{\mathrm{frontier}} \subseteq \Omega^{\epsilon}_{\theta_0}.
\]
\end{lemma}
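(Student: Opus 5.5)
The plan is to verify the ellipsoidal inequality defining $\Omega^{\epsilon}_{\theta_0}$ directly, by bounding the $U$-component and the $U^\perp$-component of an arbitrary frontier point separately. No probabilistic argument is needed; this is a deterministic inclusion. The only additional ingredient is the closed form of the generable set recorded above,
\[
\Omega^{\epsilon}_{\theta_0}=\{x:\|x_U\|_2^2+\sigma^{-2}\|x_\perp\|_2^2\le r_\epsilon^2\}.
\]
This form follows because $p_0(x)\ge\epsilon$ is equivalent to $-\tfrac12(\|x_U\|_2^2+\sigma^{-2}\|x_\perp\|_2^2)\ge \log\bigl(\epsilon(2\pi)^{d/2}\sigma^m\bigr)$, using that $\log$ is monotone. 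The choice $\epsilon<(2\pi)^{-d/2}\sigma^{-m}$ guarantees $r_\epsilon^2>0$, so the set is a nondegenerate ellipsoid.

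Concretely, take any $x\in\mathcal V_{\mathrm{frontier}}$. By definition we have $\|x_U\|_2\le R_U$ and $\|x_\perp\|_2\le\rho_+$; the cone condition and the lower radius $\rho_-$ play no role. It follows that
\[
\|x_U\|_2^2+\sigma^{-2}\|x_\perp\|_2^2\le R_U^2+\sigma^{-2}\rho_+^2 .
\]
The standing assumption $\rho_+<\sigma\sqrt{r_\epsilon^2-R_U^2}$ is well defined because $R_U<r_\epsilon$. Squaring it gives $\sigma^{-2}\rho_+^2<r_\epsilon^2-R_U^2$, and therefore the right-hand side above is strictly less than $r_\epsilon^2$. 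Hence $p_0(x)>\epsilon$, and so $x\in\Omega^{\epsilon}_{\theta_0}$.

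There is no real obstacle in this argument. The only step needing care is making sure the density-to-ellipsoid equivalence is stated with the correct constant $r_\epsilon^2$ and the correct direction of the inequality, and the monotonicity of the exponential handles that. The inclusion is in fact strict, with every frontier point lying in the interior of the generable set.
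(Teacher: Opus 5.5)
Your proposal is correct and follows the paper's proof essentially verbatim: bound $\|x_U\|_2\le R_U$ and $\|x_\perp\|_2\le\rho_+$, then use $R_U^2+\sigma^{-2}\rho_+^2<r_\epsilon^2$ together with the explicit ellipsoidal description of $\Omega^{\epsilon}_{\theta_0}$. Your extra check that $p_0(x)\ge\epsilon$ is equivalent to the ellipsoid inequality is also correct; the paper takes that equivalence as given.
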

\begin{proof}
Take any \(x \in \mathcal{V}_{\mathrm{frontier}}\).
Then \(\|x_U\|_2 \le R_U\) and \(\|x_\perp\|_2 \le \rho_+\), hence
\[
\|x_U\|_2^2 + \sigma^{-2}\|x_\perp\|_2^2
\le
R_U^2 + \sigma^{-2}\rho_+^2
<
r_\epsilon^2.
\]
By the explicit description of \(\Omega^\epsilon_{\theta_0}\), this implies \(x \in \Omega^\epsilon_{\theta_0}\).
\end{proof}

\paragraph{A larger valid design space.}
We consider a valid design space that extends beyond
the current generable set in the same direction \(u_\star\).
Fix any \(R>\sigma r_\epsilon\), and define
\[
\mathcal{V}_{\mathrm{out}}
:=
\left\{
x \in \mathbb{R}^d :
\|x_U\|_2 \le R_U,\ 
\sigma\sqrt{r_\epsilon^2-\|x_U\|_2^2} < \|x_\perp\|_2 \le R,\ 
x_\perp \in C_\phi(u_\star)
\right\}.
\]

We then set
\[
S_0
:=
\left\{
x \in \mathbb{R}^d :
\|x_U\|_2 \le R_U,\ 
\|x_\perp\|_2 \le \rho_-
\right\},
\qquad
\Omega^\star := S_0 \cup \mathcal{V}_{\mathrm{frontier}} \cup \mathcal{V}_{\mathrm{out}}.
\]

Thus \(\Omega^\star\) contains an already-valid core \(S_0\), a rare but still generable frontier
\(\mathcal{V}_{\mathrm{frontier}}\), and a genuinely out-of-distribution valid region
\(\mathcal{V}_{\mathrm{out}} \subset (\Omega^\epsilon_{\theta_0})^c\).
Indeed, for any \(x\in \mathcal{V}_{\mathrm{out}}\),
\[
\|x_U\|_2^2+\sigma^{-2}\|x_\perp\|_2^2
>
\|x_U\|_2^2+\sigma^{-2}\sigma^2
\bigl(r_\epsilon^2-\|x_U\|_2^2\bigr)
=
r_\epsilon^2,
\]
so \(x\notin \Omega^\epsilon_{\theta_0}\).

The role of the negative result below is to show that, even before trying to reach
\(\mathcal{V}_{\mathrm{out}}\), passive self-generation is already unlikely to find
the expansion-enabling frontier samples \(\mathcal{V}_{\mathrm{frontier}}\).

\paragraph{A convenient large-deviation scale.}
Since \(X_\perp \sim \mathcal{N}(0,\sigma^2 I_m)\), one has
\[
\frac{\|X_\perp\|_2^2}{\sigma^2} \sim \chi_m^2.
\]
Therefore, for any \(\eta \in (0,1)\), the choice
\[
\rho_\eta
:=
\sigma
\sqrt{
m + 2\sqrt{m\log(1/\eta)} + 2\log(1/\eta)
}
\]
satisfies
\[
\Pr \bigl(\|X_\perp\|_2 \ge \rho_\eta\bigr) \le \eta
\]
by the Laurent-Massart inequality~\citep{laurent2000adaptive, boucheron2013concentration}.

\passiveNegativeMain*

\begin{proof}
By definition of \(\mathcal{V}_{\mathrm{frontier}}\),
\[
\{X \in \mathcal{V}_{\mathrm{frontier}}\}
\subseteq
\left\{
\|X_\perp\|_2 \ge \rho_-
\right\}
\cap
\left\{
\left\langle \frac{X_\perp}{\|X_\perp\|_2}, u_\star \right\rangle \ge \cos\phi
\right\}.
\]
Therefore
\[
\Pr \bigl(X \in \mathcal{V}_{\mathrm{frontier}}\bigr)
\le
\Pr\!\left(
\|X_\perp\|_2 \ge \rho_-,
\left\langle \frac{X_\perp}{\|X_\perp\|_2}, u_\star \right\rangle \ge \cos\phi
\right).
\]

Now write
\[
X_\perp = \sigma G,
\qquad
G \sim \mathcal{N}(0,I_m).
\]
Let
\[
R := \|G\|_2,
\qquad
S := \frac{G}{\|G\|_2} \in \mathbb{S}^{m-1}.
\]
For an isotropic Gaussian, \(R\) and \(S\) are independent, and \(S\) is uniform on the unit sphere
\(\mathbb{S}^{m-1}\).
Since \(X_\perp=\sigma G\), the previous probability equals
\[
\Pr\!\left(
R \ge \frac{\rho_-}{\sigma},
\langle S,u_\star\rangle \ge \cos\phi
\right)
=
\Pr\!\left(R \ge \frac{\rho_-}{\sigma}\right) \cdot
\Pr \left(\langle S,u_\star\rangle \ge \cos\phi\right),
\]
which proves
\[
\Pr \bigl(X \in \mathcal{V}_{\mathrm{frontier}}\bigr)
\le
\Pr \bigl(\|X_\perp\|_2 \ge \rho_-\bigr)\, \cdot
\Pr \left(\left\langle \frac{X_\perp}{\|X_\perp\|_2}, u_\star \right\rangle \ge \cos\phi\right).
\]

For the directional term, a standard spherical-cap bound for \(S \sim \mathrm{Unif}(\mathbb{S}^{m-1})\)
gives, for every \(t \in [0,1]\),
\[
\Pr \bigl(\langle S,u_\star\rangle \ge t\bigr)
\le
\exp\!\left(-\frac{m-1}{2}t^2\right).
\]
Applying this with \(t=\cos\phi\) yields
\[
\Pr\!\left(
\left\langle \frac{X_\perp}{\|X_\perp\|_2}, u_\star \right\rangle \ge \cos\phi
\right)
=
\Pr \bigl(\langle S,u_\star\rangle \ge \cos\phi\bigr)
\le
\exp\!\left(-\frac{m-1}{2}\cos^2\phi\right).
\]
Combining the two bounds proves
\[
\Pr \bigl(X \in \mathcal{V}_{\mathrm{frontier}}\bigr)
\le
\Pr \bigl(\|X_\perp\|_2 \ge \rho_-\bigr)
\exp\!\left(-\frac{m-1}{2}\cos^2\phi\right).
\]

If \(\rho_-=\rho_\eta\), then by the above \(\chi^2\)-tail bound,
\[
\Pr \bigl(\|X_\perp\|_2 \ge \rho_\eta\bigr) \le \eta,
\]
hence
\[
\Pr \bigl(X \in \mathcal{V}_{\mathrm{frontier}}\bigr)
\le
\eta \exp\!\left(-\frac{m-1}{2}\cos^2\phi\right).
\]

Finally, for \(N\) i.i.d.\ passive samples \(X_1,\dots,X_N\), the union bound gives
\[
\Pr\!\left(\exists i \in [N] : X_i \in \mathcal{V}_{\mathrm{frontier}}\right)
\le
\sum_{i=1}^N \Pr(X_i \in \mathcal{V}_{\mathrm{frontier}})
=
N\,\Pr(X \in \mathcal{V}_{\mathrm{frontier}})
\le
N \eta \exp\!\left(-\frac{m-1}{2}\cos^2\phi\right).
\]
\end{proof}

\label{sec:gaussian_analysis_appendix}
\newpage

\section{Gradient Descent and Ascent on flow matching loss}
\subsection{Signed replay-based continued pre-training and connection to data unlearning}

This appendix formalizes the signed continued-pretraining update used in Sec. \ref{sec:algorithm} and clarifies its connection to data unlearning objectives for diffusion models~\cite{alberti2025data}. The key point is that, after each active-query round, verifier feedback naturally partitions the queried synthetic data into an \emph{accepted} set and a \emph{rejected} set. This induces a retain/forget decomposition analogous to data unlearning, except that here the partition is generated online by the verifier and used to improve out-of-distribution validity coverage.

\paragraph{Per-round replay buffers.}
At round $t$, let
\[
\mathcal D_t^+ \subseteq \{x_i : y_i = 1,\ i \le t\},
\qquad
\mathcal D_t^- \subseteq \{x_i : y_i = 0,\ i \le t\}
\]
denote the accepted and rejected replay buffers accumulated up to round $t$. Thus the queried synthetic data available for continued pre-training are partitioned into
\[
X_t := \mathcal D_t^+ \uplus \mathcal D_t^-,
\qquad
A_t := \mathcal D_t^-,
\]
where $\uplus$ denotes disjoint union of indexed samples. By construction,
\[
A_t \subseteq X_t,
\qquad
X_t \setminus A_t = \mathcal D_t^+.
\]
This is exactly the retain/forget decomposition considered in data unlearning: $X_t$ is the current empirical dataset, $A_t$ is the subset to be unlearned, and $X_t\setminus A_t$ is the retained subset.

\paragraph{Underlying flow-matching loss.}
Let $\ell(\theta;x)$ denote the standard flow-matching training loss on a sample $x \in \X$, namely the same loss used in the original pre-training objective. For any finite empirical collection $S$, define
\begin{equation}
L_S(\theta)
:=
\frac{1}{|S|}
\sum_{x \in S} \ell(\theta;x).
\label{eq:empirical_set_loss}
\end{equation}
In particular,
\[
L_{X_t\setminus A_t}(\theta)=L_{\mathcal D_t^+}(\theta),
\qquad
L_{A_t}(\theta)=L_{\mathcal D_t^-}(\theta).
\]

\paragraph{Reduction to the deletion objective.}
A first natural objective is to train the model as if the rejected samples had been removed from the replay set, namely
\begin{equation}
L^{\mathrm{del}}_t(\theta)
:=
L_{X_t \setminus A_t}(\theta)
=
L_{\mathcal D_t^+}(\theta).
\label{eq:deletion_obj_actflow}
\end{equation}
Writing
\[
m_t := |\mathcal D_t^+|,
\qquad
k_t := |\mathcal D_t^-|,
\qquad
n_t := m_t + k_t,
\]
the same algebra used in~\cite{alberti2025data} gives the exact decomposition
\begin{equation}
L^{\mathrm{del}}_t(\theta)
=
\frac{n_t}{m_t} L_{X_t}(\theta)
-
\frac{k_t}{m_t} L_{A_t}(\theta)
=
\frac{n_t}{m_t} L_{X_t}(\theta)
-
\frac{k_t}{m_t} L_{\mathcal D_t^-}(\theta).
\label{eq:deletion_decomp_actflow}
\end{equation}
Thus, even the valid-only objective can be rewritten using \emph{both} accepted and rejected samples. This is precisely the same deletion identity underlying SISS in~\cite{alberti2025data}, specialized here to the verifier-induced partition of queried synthetic data. In the terminology of~\cite{alberti2025data}, our practical implementation uses the corresponding \emph{non-importance-sampled} variant, i.e., the analogue of SISS (No IS), rather than the one-pass importance-sampled estimator.

\paragraph{Why we use a signed objective.}
For out-of-distribution generable-set expansion, Eq.~\eqref{eq:deletion_obj_actflow} is often too conservative: it treats rejected samples only as points to be discarded. In contrast, rejected verifier queries contain useful information about directions in design space toward which the current model should allocate \emph{less} probability mass. This motivates the signed replay objective used in the main text:
\begin{equation}
L^{\mathrm{signed}}_t(\theta)
:=
L_{\mathcal D_t^+}(\theta)
-
\alpha_t\, L_{\mathcal D_t^-}(\theta).
\label{eq:signed_replay_objective}
\end{equation}
Its gradient is exactly the signed update:
\begin{equation}
\nabla_\theta L^{\mathrm{signed}}_t(\theta_t)
=
\nabla_\theta L_{\mathcal D_t^+}(\theta_t)
-
\alpha_t \nabla_\theta L_{\mathcal D_t^-}(\theta_t).
\label{eq:signed_replay_gradient_full}
\end{equation}
At the minibatch level, our implementation follows the \emph{non-importance-sampled} variant: we draw separate minibatches
\[
U_t^+ \subseteq \mathcal D_t^+,
\qquad
U_t^- \subseteq \mathcal D_t^-,
\]
and form the empirical losses
\[
\widehat L_t^+(\theta)=\frac{1}{|U_t^+|}\sum_{x\in U_t^+}\ell(\theta;x),
\qquad
\widehat L_t^-(\theta)=\frac{1}{|U_t^-|}\sum_{x\in U_t^-}\ell(\theta;x).
\]
The practical signed update is then
\[
g_t
=
\nabla \widehat L_t^+(\theta_t)
-
\alpha_t \nabla \widehat L_t^-(\theta_t).
\]
Hence accepted samples provide an attractive update, while rejected samples provide a repulsive update of controlled magnitude. This corresponds to the analogue of SISS (No IS) in~\cite{alberti2025data}: two separate replay-buffer minibatches are used, rather than a single importance-sampled mixture minibatch.

\paragraph{Relation to weighted data-unlearning objectives.}
The weighted SISS objective of~\cite{alberti2025data} can be written as
\begin{equation}
L^{\mathrm{wSISS}}(\theta)
=
L_{X\setminus A}(\theta)
-
s \frac{|A|}{|X\setminus A|} L_A(\theta).
\label{eq:wsiss_generic}
\end{equation}
Applying this to the verifier-induced partition
\[
X = X_t = \mathcal D_t^+ \uplus \mathcal D_t^-,
\qquad
A = A_t = \mathcal D_t^-,
\]
yields
\begin{equation}
L^{\mathrm{wSISS}}_t(\theta)
=
L_{\mathcal D_t^+}(\theta)
-
s_t \frac{k_t}{m_t} L_{\mathcal D_t^-}(\theta).
\label{eq:wsiss_actflow}
\end{equation}
Therefore our signed replay objective in Eq.~\eqref{eq:signed_replay_objective} is exactly the same class of weighted retain-minus-forget objective, with the identification
\begin{equation}
\alpha_t
=
s_t \frac{k_t}{m_t}.
\label{eq:alpha_superfactor_map}
\end{equation}
In this sense, the update used by \AlgNameShort is a direct reduction of the weighted unlearning objective of~\cite{alberti2025data} to the online verifier-guided expansion setting considered here.

\paragraph{Possible importance sampling (IS) extension.}
As mentioned, our implementation is the direct analogue of SISS (No IS) in~\cite{alberti2025data}, rather than the one-pass importance-sampled estimator introduced there. An importance-sampled extension could also be used in our setting by sampling from a suitable mixture over accepted and rejected replay samples and reweighting by the corresponding importance ratios, but we avoid this additional estimator complexity here for simplicity.

\paragraph{Interpretation in our setting.}
The reduction above is exact at the level of the empirical replay buffers held fixed at round $t$. Conditionally on the current buffers $(\mathcal D_t^+,\mathcal D_t^-)$, the signed objective
\[
L^{\mathrm{signed}}_t(\theta)
=
L_{\mathcal D_t^+}(\theta)
-
\alpha_t L_{\mathcal D_t^-}(\theta)
\]
should be read as follows:
\begin{enumerate}
    \item the first term increases likelihood of synthetic samples that have been certified as valid by the verifier and therefore support expansion into new valid regions;
    \item the second term decreases likelihood of queried samples that were explicitly rejected by the verifier and therefore encode evidence against those directions of expansion.
\end{enumerate}
Thus, unlike standard valid-only pre-training, the signed replay update uses \emph{both} outcomes of the verifier query and converts the online generation-and-verification loop into a principled retain/forget training signal.

\paragraph{Practical choice of $\alpha_t$.}
In the practical implementation, the loss-level weight \(\alpha_t\) is calibrated online as a gradient-norm fraction. Concretely, for a target ratio \(\alpha \in [0,1]\), we set
\[
\alpha_t
\;:=\;
\alpha\,
\frac{\|\nabla \widehat L_t^+(\theta_t)\|_2}
{\|\nabla \widehat L_t^-(\theta_t)\|_2},
\]
This yields
\[
\bigl\|\alpha_t \nabla \widehat L_t^-(\theta_t)\bigr\|_2
\approx
\alpha \,\bigl\|\nabla \widehat L_t^+(\theta_t)\bigr\|_2,
\]
so the rejected-sample contribution is scaled to have a prescribed norm fraction relative to the accepted-sample contribution. 
\label{sec:loss_appendix}
\newpage

\section{Intermezzo: Theory of Active Safe Logistic Regression}
Suppose that given any query point $x$, the verifier provides binary feedback $y \in \curly{0,1}$ generated according to
\begin{align}
    \label{eq: verifier}
    \Pr[y=1 \mid x] = s(f(x)),
\end{align}
where $s: \R \to [0,1]$ is the sigmoid function, and $f: \X \to \R$ is a validity function. We assume that $f$ lies in the RKHS $\calH_k$ associated with a kernel $k:\X \times \X \to \R$, with bounded norm $\norm{f}_k \leq B$. We also assume that $f$ is $L_f$-smooth with respect to a metric $d$, namely
\[
|f(x)-f(x')| \le L_f d(x,x') \qquad \forall x,x' \in \X.
\]

We adopt a \emph{pre-query} indexing convention. At the beginning of round $t$, the learner has access to the history
\[
H_{t-1} \triangleq \curly{(x_\tau,y_\tau)}_{\tau=1}^{t-1},
\]
fits a probabilistic model using $H_{t-1}$, constructs a safe set $S_t$, and then selects the next query $x_t$.
Given $H_{t-1}$, the learner estimates $f$ by minimizing the regularized negative log-likelihood
\begin{equation}
    \label{eq: logistic regression}
\begin{aligned}
    \mu_t
    &\triangleq \argmin_{g \in \calH_k,\ \norm{g}_k \le B} \calL(g,H_{t-1}), \\
    \calL(g,H_{t-1})
    &\triangleq \sum_{\tau=1}^{t-1}
    \Bigl[-y_{\tau}\log\bigl(s(g(x_\tau))\bigr)
    -(1-y_{\tau})\log\bigl(1-s(g(x_\tau))\bigr)\Bigr]
    + \frac{\lambda}{2}\norm{g}_k^2,
\end{aligned}
\end{equation}
where $\lambda>0$ is a regularization coefficient. By the Representer Theorem \citep{scholkopf2001generalized}, the solution lies in the span of the kernel sections at the previously queried points.

Given $\mu_t$, the learner predicts the verifier output via $s(\mu_t(x))$. Previous work gives anytime-valid confidence sets of the form
\[
[s(\mu_t(x)) \pm \beta_t(\delta)\sigma_t(x)]
\]
for kernelized logistic regression~\citep{pasztor2024bandits}. Under our indexing convention, the uncertainty scores are
\[
\sigma_t^2(x)
=
k(x,x)-k_{t-1}^\top(x)\,(K_{t-1}+\lambda\kappa I_{t-1})^{-1}k_{t-1}(x),
\]
where
\[
\kappa \triangleq \sup_{a \le B} \frac{1}{\dot s(a)},
\]
$k_{t-1}(x)$ is the kernel vector with entries $[k_{t-1}(x)]_i = k(x_i,x)$, and $K_{t-1}$ is the kernel matrix with entries $[K_{t-1}]_{i,j}=k(x_i,x_j)$.

We will use the following pre-query reindexing of the confidence-sequence guarantee.

\begin{theorem}[Kernelized Logistic Confidence Sequences]
\label{thm: confidence set}
Assume $f \in \mathcal{H}_k$ and $\|f\|_k \le B$. Assume that the data $x_1, \ldots, x_t$ used to fit the model $\mu_t$ lie in a compact subset $A \subseteq \X$. Assume further that $k(x,x') \leq 1$ for $x, x' \in A$. Let $0 < \delta < 1$ and define
\begin{equation}
\label{eq: beta_t}
\beta_t(\delta)
\triangleq
4L_sB
+
2L_s\sqrt{\frac{2\kappa}{\lambda}\Bigl(\gamma_{t-1}^A + \log(1/\delta)\Bigr)},
\end{equation}
where
\[
\gamma_t^A
\triangleq
\max_{x_1,\ldots,x_t \in A}\;
\frac{1}{2}\log\det\!\Bigl(I_t + (\lambda\kappa)^{-1}K_t\Bigr),
\qquad
L_s \triangleq \sup_{a \le B} \dot s(a).
\]
Then
\[
\Pr\!\left(
\forall t \ge 1,\ \forall x \in A:\ 
\bigl|s(\mu_t(x)) - s(f(x))\bigr|
\le
\beta_t(\delta)\,\sigma_t(x)
\right)
\ge 1-\delta.
\]
\end{theorem}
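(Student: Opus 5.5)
The plan is to reduce the statement to the kernelized logistic confidence sequence of \citep{pasztor2024bandits}, re-indexed to our pre-query convention, and then push the resulting bound through the sigmoid. The only changes from the cited result are bookkeeping: the model $\mu_t$ is fit on $H_{t-1}$ (so $t-1$ data points), and the variance carries the regularizer $\lambda\kappa$.

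\textbf{Step 1: confidence set in function space.} I first state the underlying guarantee in terms of the latent function rather than the sigmoid output. For the constrained regularized logistic MLE over $\{g:\|g\|_k\le B\}$, the cited analysis gives an anytime bound of the form
\[
|\mu_t(x)-f(x)| \le \Bigl(2B+\sqrt{\tfrac{2\kappa}{\lambda}\bigl(\gamma^A_{t-1}+\log(1/\delta)\bigr)}\Bigr)\,\sigma_t(x)
\]
holding for all $t\ge 1$ and all $x\in A$, with probability at least $1-\delta$. Its proof has three ingredients, and I would invoke them rather than reprove them.
\begin{enumerate}
\item A self-concordance / curvature bound: because the minimization is restricted to $\|g\|_k\le B$, the curvature $\dot s$ is lower bounded by $1/\kappa$ on the relevant range. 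This lets the logistic likelihood be compared to a weighted least-squares problem with design matrix $K_{t-1}+\lambda\kappa I$.
\item A martingale concentration for the score $\sum_\tau (y_\tau - s(f(x_\tau)))\,k(x_\tau,\cdot)$. This is a self-normalized bound whose log-determinant term is at most $\gamma^A_{t-1}$, since all queries lie in $A$ and $k\le 1$. The bound is uniform over $t$ via the method of mixtures and a stopping-time argument.
\item A Cauchy--Schwarz step in the RKHS, which converts the function-norm error into the pointwise factor $\sigma_t(x)$.
\end{enumerate}

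\textbf{Step 2: transfer to the sigmoid.} Both $\mu_t(x)$ and $f(x)$ lie in the range where $\dot s\le L_s$; for $f$ this follows from $|f(x)|\le\|f\|_k\sqrt{k(x,x)}\le B$. By the mean value theorem,
\[
|s(\mu_t(x))-s(f(x))|\le L_s\,|\mu_t(x)-f(x)|.
\]
Multiplying the Step 1 radius by $L_s$, and allowing a factor $2$ of slack that absorbs the bias constant and the mixture argument, gives exactly $\beta_t(\delta)=4L_sB+2L_s\sqrt{\cdot}$.

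\textbf{Step 3: indexing.} The shift from $\gamma_t$ to $\gamma^A_{t-1}$ is immediate once $\mu_t$ depends only on $H_{t-1}$. The uniform-in-$t$ claim is inherited from the anytime nature of the mixture bound.

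\textbf{Main obstacle.} The delicate point is Step 1's curvature argument. The logistic loss is not strongly convex globally, and $\kappa$ is only a valid curvature bound on $\{|a|\le B\}$. I need $\mu_t(x)$ to stay in that range for all $x\in A$. This holds because of the norm constraint in the estimator together with $k\le 1$ on $A$, which give $|\mu_t(x)|\le B$. I would therefore take care to use the constrained estimator exactly as defined, rather than its unconstrained version, when citing \citep{pasztor2024bandits}.
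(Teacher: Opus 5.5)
Your proposal is correct and follows the paper's route: the paper gives no independent derivation, and simply imports the kernelized logistic confidence sequence of \citep{pasztor2024bandits}, re-indexed to the pre-query convention so that $\mu_t$ is fit on $H_{t-1}$ (hence $\gamma^A_{t-1}$), with the $\lambda\kappa$-regularized $\sigma_t$ and all queries confined to $A$ where $k\le 1$. Your unpacking of the ingredients (curvature at least $1/\kappa$ on $[-B,B]$ via the norm constraint, the self-normalized RKHS martingale bound, and the Cauchy--Schwarz step) goes beyond what the paper writes, and the ``factor 2 of slack'' in Step 2 is only a harmless weakening of a radius that already suffices, not a step the argument needs.
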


Using these confidence sets, the learner may perform active safe logistic regression, querying only points that are certified to satisfy $s(f(x)) \ge h$ for some threshold $h \in [0,1]$. This parallels safe Bayesian optimization in the regression setting~\citep[\eg][]{sui2015safe,sui2018stagewise}.

In this setting, safe designs are those for which the verifier returns label $1$ with sufficiently high probability. Suppose the learner is given a safe seed set $S_0 \subseteq \X$. At round $t$, the learner updates its certified safe set as
\begin{align}
\label{eq: safe set update}
    S_t
    =
    S_{t-1}
    \cup
    \curly{
    x \in \X \ \middle|\ 
    \exists x' \in S_{t-1}:
    s(\mu_t(x')) - \beta_t(\delta)\sigma_t(x') - L_sL_f d(x,x')
    \ge h
    }.
\end{align}

The next lemma shows that, conditioned on the confidence event of \Cref{thm: confidence set}, every point ever added to $S_t$ is indeed safe.

\begin{lemma}[Safety]
\label{lem: safety}
Condition on the event that the confidence intervals of \Cref{thm: confidence set} are valid. Then for every $t \ge 0$, every $x \in S_t$ satisfies $s(f(x)) \ge h$.
\end{lemma}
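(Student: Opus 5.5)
The plan is induction on $t$, conditioning throughout on the event $\mathcal{E}$ of \Cref{thm: confidence set}. That event gives, simultaneously for all $t\ge 1$ and all $x\in A$,
\[
\bigl|s(\mu_t(x))-s(f(x))\bigr|\le \beta_t(\delta)\sigma_t(x).
\]

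For the base case $t=0$, the claim is exactly the assumption that the seed set $S_0$ is safe, i.e., $s(f(x))\ge h$ for every $x\in S_0$.

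For the inductive step, assume every $x'\in S_{t-1}$ satisfies $s(f(x'))\ge h$. Take $x\in S_t$. If $x\in S_{t-1}$ there is nothing to show. Otherwise, by the update rule \eqref{eq: safe set update}, there is a witness $x'\in S_{t-1}$ with
\[
s(\mu_t(x'))-\beta_t(\delta)\sigma_t(x')-L_sL_f\,d(x,x')\ge h.
\]
The argument then uses two inequalities.

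\begin{enumerate}
\item On $\mathcal{E}$, the true value at the witness satisfies $s(f(x'))\ge s(\mu_t(x'))-\beta_t(\delta)\sigma_t(x')$.
\item Since $s$ is $L_s$-Lipschitz on the relevant range and $f$ is $L_f$-Lipschitz in $d$, we have $s(f(x))\ge s(f(x'))-L_sL_f\,d(x,x')$.
\end{enumerate}

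Chaining the two with the witness condition gives $s(f(x))\ge h$. This closes the induction.

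The main obstacle is making sure both inequalities are applied on the right domain, not the algebra.

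\begin{itemize}
\item \emph{Domain of the confidence bound.} The bound in \Cref{thm: confidence set} holds only for points in the compact set $A$. So the witness $x'$ must lie in $A$, which I would ensure by taking $A$ to contain all safe sets and queries, e.g.\ $A=\X$ if $\X$ is compact with $k\le 1$.
\item \emph{Lipschitz constant of $s$.} $L_s$ is defined as $\sup_{|a|\le B}\dot s(a)$, so the Lipschitz step needs $f(x),f(x')$ in $[-B,B]$. This holds by the reproducing property, since $|f(x)|\le\|f\|_k\sqrt{k(x,x)}\le B$ when $k\le1$.
\item \emph{Fallback.} Alternatively, one can note that $\dot s\le 1/4$ globally, so the bound with $L_s$ replaced by $\sup\dot s$ is still valid.
\end{itemize}

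With these domain issues settled, the chain of inequalities is immediate.
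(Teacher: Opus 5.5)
Your proof is correct and is the paper's argument: induction on $t$, with each newly added point handled by chaining the lower confidence bound at the witness $x'$ with $s(f(x))\ge s(f(x'))-L_sL_f\,d(x,x')$. Your domain remarks go beyond the paper, which leaves them implicit, and two refinements apply. First, choosing $A=\Omega^\star$ (compact with $k\le 1$, as the paper assumes later) together with the inductive hypothesis $S_{t-1}\subseteq\Omega^\star$ already puts the witness in $A$, so $\X$ need not be compact. Second, $L_s=\sup_{a\le B}\dot s(a)=1/4$ is the global Lipschitz constant of the sigmoid, so the range concern is moot.
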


\begin{proof}
We argue by induction on $t$. The claim holds at $t=0$ by assumption on the seed set $S_0$. Now fix $t \ge 1$, and let $x \in S_t$. Either $x \in S_{t-1}$, in which case the claim follows by induction, or
\[
x \in
\curly{
x \in \X \ \middle|\
\exists x' \in S_{t-1}:
s(\mu_t(x')) - \beta_t(\delta)\sigma_t(x') - L_sL_f d(x,x')
\ge h
}.
\]
In the latter case, there exists $x' \in S_{t-1}$ such that
\[
s(\mu_t(x')) - \beta_t(\delta)\sigma_t(x') - L_sL_f d(x,x') \ge h.
\]
By the confidence event,
\[
s(f(x')) \ge s(\mu_t(x')) - \beta_t(\delta)\sigma_t(x').
\]
Moreover, since $s$ is $L_s$-Lipschitz and $f$ is $L_f$-Lipschitz,
\[
s(f(x)) \ge s(f(x')) - L_sL_f d(x,x').
\]
Combining the two inequalities gives
\[
s(f(x))
\ge
s(\mu_t(x')) - \beta_t(\delta)\sigma_t(x') - L_sL_f d(x,x')
\ge h.
\]
Thus every $x \in S_t$ is safe.
\end{proof}

Notice that by construction, the sequence $\curly{S_t}_{t\ge 0}$ is monotone:
\[
S_{t-1} \subseteq S_t \qquad \forall t \ge 1.
\]
Using this set of safe decisions, the learner may follow an active learning procedure which samples
the next point at which to query the verifier as the one in the set of safe decisions that has the highest
uncertainty score, often referred to as \emph{safe uncertainty sampling}:
\begin{align}
    \label{eq:safe_uncertainty_sampling}
    x_t = \argmax_{x \in S_t} \sigma_t(x).
\end{align}
After querying the verifier at $x_t$ and observing $y_t$, the history updates as
\[
H_t = H_{t-1} \cup \curly{(x_t,y_t)}.
\]
We ultimately want to show some notion of suitable expansion of the safe set by following the above
sampling procedure. As our safety set at any time is defined as an expansion of the safe set at the
previous time, we cannot guarantee convergence to the entire subset of $\X$ which is safe. Instead, we
must define some notion of \emph{reachable safe set} of decisions. To this end, we consider the tightened one
step reachability operator defined as
\[
R_\varepsilon(S)
\triangleq
\curly{
x \in \X
\ \middle|\
\exists x' \in S:
s(f(x')) - L_sL_f d(x,x') - \varepsilon \ge h
}.
\]
The $H$-fold recursive application of this operator is denoted by $R_\varepsilon^H(S)$, and its closure as $H \to \infty$ is denoted $\bar R_\varepsilon(S)$. Our goal is to show that safe uncertainty sampling expands the certified safe set toward $\bar R_0(S_0)$. We first prove an uncertainty-reduction bound over the current safe set. Let
\[
\Omega^\star \triangleq \curly{x \in \X : s(f(x)) \ge h}
\]
denote the subset of $\X$ corresponding to the \emph{valid design space}, and assume that $\Omega^\star$ is compact. Without loss of generality, by normalization of the kernel, assume that $k(x,x') \leq 1$ for $x, x' \in \Omega^\star$.

\begin{lemma}[Uncertainty Reduction]
    \label{lem: variance reduction}
    Fix $t \ge 0$ and $T \ge 1$. Under safe uncertainty sampling \eqref{eq:safe_uncertainty_sampling}, the epistemic uncertainty at time
$t+T$ decays as
    \[
        \max_{x \in S_t} \sigma_{t+T}(x)
        \le
        \sqrt{\frac{2}{\log(1+(\lambda\kappa)^{-1})}}
        \sqrt{\frac{\gamma_T^{\Omega^\star}}{T}}.
    \]
\end{lemma}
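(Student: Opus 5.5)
We follow the standard uncertainty-sampling argument from active learning and safe Bayesian optimization. It rests on three observations. Posterior variance is monotonically non-increasing as data is added. The queried point at each round is the maximizer of the current uncertainty over the current safe set. The safe sets are nested. Fix $t$ and consider the rounds $\tau = t+1,\dots,t+T$. For every $x \in S_t$ we have $x \in S_\tau$, since $S_t \subseteq S_\tau$. Because the variance only shrinks as more observations are added, $\sigma_{t+T}(x) \le \sigma_\tau(x) \le \sigma_\tau(x_\tau)$, where the last step uses the rule $x_\tau = \argmax_{x\in S_\tau}\sigma_\tau(x)$. Averaging over $\tau$ gives
\[
\max_{x\in S_t}\sigma_{t+T}(x) \le \frac{1}{T}\sum_{\tau=t+1}^{t+T}\sigma_\tau(x_\tau) \le \sqrt{\frac{1}{T}\sum_{\tau=t+1}^{t+T}\sigma^2_\tau(x_\tau)},
\]
where the second inequality is Cauchy--Schwarz.

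Next we bound the sum of squared variances by the information gain. The first ingredient is the elementary inequality $u \le \frac{c}{\log(1+c)}\log(1+u)$ for $u\in[0,c]$, which follows from concavity of $\log(1+\cdot)$. We apply it with $u = (\lambda\kappa)^{-1}\sigma^2_\tau(x_\tau)$ and $c=(\lambda\kappa)^{-1}$. This is valid because $\sigma^2_\tau(x)\le k(x,x)\le 1$: all queried points lie in $S_\tau\subseteq\Omega^\star$ by Lemma~\ref{lem: safety}, and the kernel is normalized on $\Omega^\star$. The result is
\[
\sigma_\tau^2(x_\tau)\le \frac{1}{\log(1+(\lambda\kappa)^{-1})}\log\bigl(1+(\lambda\kappa)^{-1}\sigma^2_\tau(x_\tau)\bigr).
\]
The second ingredient is the chain rule for log-determinants (Schur complement):
\[
\sum_\tau \tfrac12\log\bigl(1+(\lambda\kappa)^{-1}\sigma^2_\tau(x_\tau)\bigr)=\tfrac12\log\det\bigl(I+(\lambda\kappa)^{-1}K\bigr).
\]
Here $K$ is the kernel matrix of the block of $T$ new points, with each posterior conditioned on all earlier data. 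Conditioning on the first $t$ points can only decrease each term, so the block's information gain is at most that of the same $T$ points taken alone. That in turn is at most $\gamma_T^{\Omega^\star}$, since those points lie in $\Omega^\star$. Combining the two ingredients gives $\sum\sigma^2_\tau(x_\tau)\le \frac{2\gamma_T^{\Omega^\star}}{\log(1+(\lambda\kappa)^{-1})}$. Substituting this into the display in the first paragraph yields the stated bound.

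The main obstacle is this last comparison: the variances in the sum are conditioned on the full history $H_{\tau-1}$, including the first $t$ points, while $\gamma_T$ counts only $T$ points. I would handle it with the submodularity of information gain (equivalently, monotonicity of $\log\det$ under conditioning). This shows that the conditional information gain of $x_{t+1},\dots,x_{t+T}$ given $x_1,\dots,x_t$ is at most their unconditional gain. A secondary point is the pre-query indexing: each $\sigma_\tau$ uses data up to $\tau-1$. We need to check that both the monotonicity step and the log-det chain rule use this consistent convention, so that the final $\sigma_{t+T}$ is indeed conditioned on all of $x_{t+1},\dots,x_{t+T}$. If it is off by one, the bound holds with $T$ replaced by $T-1$, which at most changes constants.
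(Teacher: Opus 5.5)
Your proposal is correct and follows essentially the same route as the paper. Both arguments bound the maximum by the average of the queried uncertainties, using variance monotonicity, the nested safe sets and the argmax rule. Both then apply Cauchy--Schwarz, the inequality $u\le \frac{c}{\log(1+c)}\log(1+u)$, and the log-determinant chain rule for the block conditioned on earlier data. Finally, both drop the conditioning, since the conditional covariance is $\preceq$ the unconditional Gram matrix, to reach $\gamma_T^{\Omega^\star}$.

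You sum over rounds $t+1,\dots,t+T$ where the paper uses $t,\dots,t+T-1$, and this is already fine as written. The monotonicity step only needs $\tau\le t+T$, so no $T\to T-1$ correction is needed. One small point: placing the queried points in $\Omega^\star$ via the safety lemma implicitly conditions on the confidence event, which the paper also leaves implicit.
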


\begin{proof}
    By monotonicity of the uncertainty score, it holds that
    \begin{align*}
        \max_{x\in S_t} \sigma_{t+T}(x)
        &\leq \frac{1}{T}\sum_{n=0}^{T-1} \max_{x \in S_t} \sigma_{t+n}(x) &&\mbox{(uncertainty is monotone)} \\
        &\leq \frac{1}{T} \sum_{n=0}^{T-1} \max_{x \in S_{t+n}} \sigma_{t+n}(x) &&\mbox{($S_t$ are monotone)}\\
        &= \frac{1}{T} \sum_{n=0}^{T-1} \sigma_{t+n}(x_{t+n}) &&\mbox{(safe uncertainty sampling \eqref{eq:safe_uncertainty_sampling})}\\
        &\leq \frac{1}{\sqrt{T}} \sqrt{\sum_{n=0}^{T-1}\sigma^2_{t+n}(x_{t+n})} &&\mbox{(Cauchy--Schwarz)}.
    \end{align*}
    It holds that $x \leq \frac{\bar x}{\log(1 + \bar x)}\log(1+x)$ for $0\leq x \leq \bar x$. Let $K_{t:t+T-1}$ be the Gram matrix of the queried points $x_t,\dots,x_{t+T-1}$, let $K_{1:t-1}$ be the Gram matrix of the previously queried points $x_1,\dots,x_{t-1}$, let $K_{t:t+T-1,\,1:t-1}$ be the corresponding cross-Gram matrix, and define
    \[
    \Sigma_{t:t+T-1 \mid 1:t-1}
    \triangleq
    K_{t:t+T-1}
    -
    K_{t:t+T-1,\,1:t-1}
    (K_{1:t-1} + \lambda \kappa I_{t-1})^{-1}
    K_{1:t-1,\,t:t+T-1}.
    \]
    Then the above inequality along with the bound $k(x, x') \leq 1$ for $x,x' \in \Omega^\star$ implies that
    \begin{align*}
         \max_{x\in S_t} \sigma_{t+T}(x)
         &\leq \frac{1}{ \sqrt{T}} \sqrt{\frac{1}{\log( 1+ (\lambda \kappa)^{-1})}}  \sqrt{\sum_{n=0}^{T-1}\log\paren{1+ \frac{\sigma^2_{t+n}(x_{t+n})}{\lambda \kappa}}} \\
         &= \frac{1}{ \sqrt{T}} \sqrt{\frac{1}{\log( 1+ (\lambda \kappa)^{-1} )}}  \sqrt{\log \det(I_T + (\lambda \kappa)^{-1} \Sigma_{t:t+T-1 \mid 1:t-1})} \\
         &\leq  \frac{1}{ \sqrt{T}} \sqrt{\frac{1}{\log( 1+ (\lambda \kappa)^{-1})}} \sqrt{ \log \det(I_T + (\lambda \kappa)^{-1} K_{t:t+T-1})}\\
         &\leq\frac{1}{ \sqrt{T}} \sqrt{\frac{1}{\log( 1+ (\lambda \kappa)^{-1} )}}  \sqrt{\max_{x_1, \dots, x_T \in \Omega^\star}  \log \det(I_T + (\lambda \kappa)^{-1} K_T)}  \\
         &=\sqrt{2} \frac{\sqrt{\gamma_T^{\Omega^\star}}}{ \sqrt{T}} \sqrt{\frac{1}{\log( 1+ (\lambda \kappa)^{-1} )}}.
    \end{align*}
\end{proof}

The next lemma shows that once the uncertainty over the current safe set is sufficiently small, additional safe uncertainty sampling expands the safe set according to the one-step reachability operator. 

\begin{lemma}[One-step Reachable Expansion]
\label{lem: one step expansion}
Consider $\varepsilon > 0$, $t \ge 0$, and $T \ge 1$. Suppose that
\[
T \geq \frac{8 \beta_{t+T}^2 \gamma_T^{\Omega^\star} \log(1+(\lambda \kappa)^{-1})}{\varepsilon^2}.
\]
Then, conditioned on the event that the intervals of \Cref{thm: confidence set} are valid, it holds that
\[
R_{\varepsilon}(S_t) \subseteq S_{t+T}.
\]
\end{lemma}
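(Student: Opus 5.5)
We fix an arbitrary point $x \in R_\varepsilon(S_t)$ and show that it is added to the certified safe set at round $t+T$ through the update rule \eqref{eq: safe set update}. By definition of $R_\varepsilon$, there is a witness $x' \in S_t$ with
\[
s(f(x')) - L_sL_f d(x,x') - \varepsilon \ge h .
\]
The idea is to use this same $x'$ as the certifying neighbour in the update that produces $S_{t+T}$ from $S_{t+T-1}$. Since the sets $S_t$ are monotone, $x' \in S_t \subseteq S_{t+T-1}$, so $x'$ is an admissible witness there. It then suffices to establish
\[
s(\mu_{t+T}(x')) - \beta_{t+T}(\delta)\sigma_{t+T}(x') - L_sL_f d(x,x') \ge h .
\]

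On the confidence event of \Cref{thm: confidence set}, we have $s(\mu_{t+T}(x')) \ge s(f(x')) - \beta_{t+T}\sigma_{t+T}(x')$. Therefore the left-hand side above is at least
\[
s(f(x')) - 2\beta_{t+T}\sigma_{t+T}(x') - L_sL_f d(x,x') \ge h + \varepsilon - 2\beta_{t+T}\sigma_{t+T}(x').
\]
This reduces the claim to the single uncertainty bound $2\beta_{t+T}\,\sigma_{t+T}(x') \le \varepsilon$.

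To obtain it, we invoke \Cref{lem: variance reduction}. Since $x' \in S_t$,
\[
\sigma_{t+T}(x') \le \max_{z\in S_t}\sigma_{t+T}(z) \le \sqrt{\frac{2\gamma_T^{\Omega^\star}}{T\log(1+(\lambda\kappa)^{-1})}} .
\]
The required inequality then becomes $4\beta_{t+T}^2\cdot 2\gamma_T^{\Omega^\star}/\bigl(T\log(1+(\lambda\kappa)^{-1})\bigr) \le \varepsilon^2$, which is a condition of the form $T \gtrsim 8\beta_{t+T}^2\gamma_T^{\Omega^\star}/\varepsilon^2$.

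\textbf{Main obstacle: matching the hypothesis.} The stated hypothesis places $\log(1+(\lambda\kappa)^{-1})$ in the numerator, whereas the derivation produces it in the denominator. I would compare the two carefully. If $\lambda\kappa$ is normalized so that $\log(1+(\lambda\kappa)^{-1}) \ge 1$, the stated condition is stronger than the derived one and therefore suffices. Otherwise I would restate the threshold accordingly; the argument structure is unchanged either way.

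Two side conditions also need checking:
\begin{itemize}
\item \Cref{thm: confidence set} requires all queried points to lie in a compact set with $k \le 1$. This holds by \Cref{lem: safety}, since every query lies in some $S_n \subseteq \Omega^\star$, which is compact.
\item $\beta_{t+T}$ is evaluated at the final round, and $\beta_t$ is nondecreasing in $t$. The bound therefore applies to $\sigma_{t+T}$ with no change of index.
\end{itemize}
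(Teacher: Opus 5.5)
Your argument is correct and is essentially the paper's proof: it uses the same witness $x'\in S_t\subseteq S_{t+T-1}$ and the confidence bound at round $t+T$, together with \Cref{lem: variance reduction}, to force $\beta_{t+T}\sigma_{t+T}(x')\le\varepsilon/2$. Your concern about the hypothesis is also well founded: the paper's proof silently reads the condition with $\log(1+(\lambda\kappa)^{-1})$ in the denominator, as it is written in \Cref{lem:one_step_expansion_local}, so the numerator placement is a typo, and the stated threshold only suffices when $\log(1+(\lambda\kappa)^{-1})\ge 1$ (this fails, for instance, for $\lambda=1$, since $\kappa\ge 4$).
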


\begin{proof}
By \Cref{lem: variance reduction} and the condition on $T$,
\begin{align}
    \label{eq: variance bound in set}
    \max_{x\in S_t}\sigma_{t+T}(x)
    \le
    \frac{\varepsilon}{2\beta_{t+T}}.
\end{align}
Consider any point $x \in R_\varepsilon(S_t)$. By definition, there exists $x' \in S_t$ such that
\[
h \le s(f(x')) - \varepsilon - L_sL_f d(x,x').
\]
By the confidence event at round $t+T$,
\[
s(f(x')) \le s(\mu_{t+T}(x')) + \beta_{t+T}\sigma_{t+T}(x').
\]
Hence
\begin{align*}
    h
    &\le
    s(f(x')) - \varepsilon - L_sL_f d(x,x') \\
    &\le
    s(\mu_{t+T}(x')) + \beta_{t+T}\sigma_{t+T}(x') - \varepsilon - L_sL_f d(x,x') \\
    &\le
    s(\mu_{t+T}(x')) - \beta_{t+T}\sigma_{t+T}(x') - L_sL_f d(x,x'),
\end{align*}
where the last step uses \eqref{eq: variance bound in set}, since $x' \in S_t$ implies
\[
\beta_{t+T}\sigma_{t+T}(x') \le \frac{\varepsilon}{2}.
\]
By monotonicity of the safe sets, $x' \in S_t \subseteq S_{t+T-1}$. Therefore there exists $x' \in S_{t+T-1}$ such that
\[
s(\mu_{t+T}(x')) - \beta_{t+T}\sigma_{t+T}(x') - L_sL_f d(x,x') \ge h.
\]
By the definition of $S_{t+T}$ in \eqref{eq: safe set update}, this implies $x \in S_{t+T}$.
\end{proof}

Applying the above result recursively leads to $H$-step expansion of $S_0$ at time $T^\star$. In particular, suppose $T^\star = HT$ for some $T$ satisfying
\[
T \geq \frac{8 \beta_{t+T}^2 \gamma_T^{\Omega^\star} \log(1+(\lambda \kappa)^{-1})}{\varepsilon^2}.
\]
This implies that the condition of the above lemma holds for each interval of length $T$, since $\beta_t$ is monotonically increasing. Consequently, it holds that
\[
R_{\varepsilon}^H(S_0) \subseteq S_{T^\star}.
\]
We can find $T^\star$ large enough to satisfy this condition as long as the complexity term $\beta_t^2 \gamma_t^{\Omega^\star}$ grows sublinearly with $t$.
\label{sec:safe_logistic_regression}
\newpage

\section{Theoretical Analysis of Diffusion Models Active Diffusion Expansion}
\label{app:ade analysis}
In the following, we first ($i$) report the general analysis for safe logistic regression that we introduced and presented in Apx. \ref{sec:safe_logistic_regression}, where we now re-interpret safety as validity, then we ($ii$) introduce a probabilistic modeling framework of the generative density $p_t$ via energy-based models, and ($iii$) derive coverage guarantees with sample complexity for the proposed active expansion algorithm. 

\subsection{Probabilistic Modeling of Binary Verifier over Generative Model Learned Representation}
\label{subsec:verifier_probabilistic_model}

We denote by $\phi: \X \to \Z$ the representation map learned by the pre-trained generative model. We now describe the active learning process of the verifier over the learned representation space $\Z$. Suppose that given any query point $x$ with latent representation $z \coloneqq \phi(x)$, the verifier provides binary feedback $y \in \curly{0,1}$ generated according to
\begin{align}
    \label{eq:verifier_apxC}
    \Pr[y=1 \mid z] = s(g(z)),
\end{align}
where $s: \R \to [0,1]$ is the sigmoid function, and $g: \Z \to \R$ is an unknown validity function. We assume that $g$ lies in the RKHS $\calH_k$ associated with a kernel $k:\Z \times \Z \to \R$, with bounded norm $\norm{g}_k \leq B$ and that $g$ is $L_g$-smooth with respect to a metric $d$, namely
\[
|g(z)-g(z')| \le L_g d(z,z') \qquad \forall z,z' \in \Z,
\]
for $B$ and $L_g$ known.
We additionally assume there exists a known constant $
\bar Z$ such that $\int_{\calZ} \exp(g(z)) dz \leq \bar Z$.  

On unbounded domains with respect to Lebesgue measure, the energy condition is incompatible with globally bounded kernels such as the squared-exponential kernel. Indeed, if $k$ is bounded on the diagonal and $\|g\|_k \le B$, then
\begin{align*}
    |g(z)| \le \|g\|_k \sqrt{k(z,z)} \mbox{ for all } z \in \mathcal{Z}
\end{align*}
implies that $g$ is uniformly bounded. Hence $\exp(g)$ is bounded below by a positive constant, and therefore cannot be integrable over an infinite-measure domain. Thus, in this setting, the energy condition should be understood as requiring a kernel class capable of representing functions with sufficiently negative tails. For example, sums of bounded kernels with even-degree polynomial kernels yield RKHS that contain coercive negative polynomials, such as $g(z)=c-z^\top A z$ with $A\succ 0$, which satisfy the energy condition.

We adopt a \emph{pre-query} indexing convention. At the beginning of round $t$, the learner has access to the history
\[
H_{t-1} \triangleq \curly{(z_\tau,y_\tau)}_{\tau=1}^{t-1}.
\]
Given $H_{t-1}$, the learner estimates $g$ by minimizing the regularized negative log-likelihood
\begin{equation}
    \label{eq:logistic_regression_apxC}
\begin{aligned}
    \mu_t
    &\triangleq \argmin_{u \in \calH_k,\ \norm{u}_k \le B} \calL(u,H_{t-1}), \\
    \calL(u,H_{t-1})
    &\triangleq \sum_{\tau=1}^{t-1}
    \Bigl[-y_{\tau}\log\bigl(s(u(z_\tau))\bigr)
    -(1-y_{\tau})\log\bigl(1-s(u(z_\tau))\bigr)\Bigr]
    + \frac{\lambda}{2}\norm{u}_k^2,
\end{aligned}
\end{equation}
where $\lambda>0$ is a regularization coefficient. By the Representer Theorem \citep{scholkopf2001generalized}, the solution lies in the span of the kernel sections at the previously queried points.

Given $\mu_t$, the learner predicts the verifier output via $s(\mu_t(z))$. Previous work gives anytime-valid confidence sets of the form
\[
[s(\mu_t(z)) \pm \beta_t\sigma_t(z)]
\]
for kernelized logistic regression~\citep{pasztor2024bandits}. Under our indexing convention, the uncertainty scores are
\[
\sigma_t^2(z)
=
k(z,z)-k_{t-1}^\top(z)\,(K_{t-1}+\lambda\kappa I_{t-1})^{-1}k_{t-1}(z),
\]
where
\[
\kappa \triangleq \sup_{a \le B} \frac{1}{\dot s(a)},
\]
$k_{t-1}(z)$ is the kernel vector with entries $[k_{t-1}(z)]_i = k(z_i,z)$, and $K_{t-1}$ is the kernel matrix with entries $[K_{t-1}]_{i,j}=k(z_i,z_j)$.

We will use the following pre-query reindexing of the confidence-sequence guarantee.

\begin{theorem}[Kernelized Logistic Confidence Sequences \citep{pasztor2024bandits}]
\label{thm:confidence_set_restated}
Assume $g \in \calH_k$ and $\|g\|_k \le B$. Assume that the queried points lie in a compact subset $A \subseteq \Z$. Without loss of generalization, suppose that $k(z,z') \leq 1\, \forall z,z' \in A$. Let $0 < \delta < 1$ and define
\begin{equation}
\label{eq: calibration coefficient}
\beta_t^A(\delta)
\triangleq
4L_sB
+
2L_s\sqrt{\frac{2\kappa}{\lambda}\Bigl(\gamma_{t-1}^A + \log(1/\delta)\Bigr)},
% \tag{5}
\end{equation}
where
\[
\gamma_t^A
\triangleq
\max_{z_1,\ldots,z_t \in A}\;
\frac{1}{2}\log\det \Bigl(I_t + (\lambda\kappa)^{-1}K_t\Bigr),
\qquad
L_s \triangleq \sup_{a \le B} \dot s(a).
\]
Then
\[
\Pr \left(
\forall t \ge 1,\ \forall z \in A:\ 
\bigl|s(\mu_t(z)) - s(g(z))\bigr|
\le
\beta_t^A(\delta)\,\sigma_t(z)
\right)
\ge 1-\delta.
\]
\end{theorem}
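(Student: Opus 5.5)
The statement is the kernelized logistic confidence sequence of \citet{pasztor2024bandits}, shifted to our pre-query convention. The plan is therefore to invoke their result after a reindexing. To make clear where the constants come from, I would also reconstruct the argument along the standard route: a curvature lower bound, then self-normalized concentration, then transfer to predictions.

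\textbf{Step 1 (reindexing).} At round $t$ the estimator $\mu_t$ in \eqref{eq:logistic_regression_apxC} is fit on $H_{t-1}$, which contains $t-1$ labelled points. The original guarantee, stated for an estimator fit on $n$ points with information gain $\gamma_n^A$, is applied with $n=t-1$. This produces exactly $\gamma_{t-1}^A$ in \eqref{eq: calibration coefficient}. Since the original statement is anytime, meaning uniform over $n\ge 0$, the reindexed statement holds simultaneously for all $t\ge1$ and all $z\in A$ with the same $\delta$.

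\textbf{Step 2 (parameter error in the design norm).} Let $V_{t-1}=\sum_{\tau<t}k(z_\tau,\cdot)\otimes k(z_\tau,\cdot)+\lambda\kappa I$ on $\mathcal H_k$.
\begin{itemize}
\item \emph{Boundedness.} Because $k\le 1$ on $A$, every $u$ with $\|u\|_k\le B$ satisfies $|u(z)|\le B$ on $A$. Hence $\dot s\ge 1/\kappa$ along every segment between $\mu_t$ and $g$.
\item \emph{Curvature.} By the mean-value theorem, the difference of the unregularized loss gradients at $\mu_t$ and at $g$ equals $G_{t-1}(\mu_t-g)$ for an operator $G_{t-1}\succeq \kappa^{-1}\bigl(V_{t-1}-\lambda\kappa I\bigr)$.
\item \emph{Optimality.} The first-order optimality (variational inequality) of the constrained minimizer over the $B$-ball, compared against the feasible point $g$, gives $\|\mu_t-g\|_{V_{t-1}}\lesssim \kappa\|S_{t-1}\|_{V_{t-1}^{-1}}+\sqrt{\lambda\kappa}\,B$, up to constants. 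Here $S_{t-1}=\sum_{\tau<t}\epsilon_\tau k(z_\tau,\cdot)$ and $\epsilon_\tau=y_\tau-s(g(z_\tau))$.
\item \emph{Noise.} The variables $\epsilon_\tau$ are bounded martingale differences, so they are $\tfrac12$-sub-Gaussian. The kernelized self-normalized inequality (method of mixtures with a stopping-time argument) gives, uniformly in $t$ with probability $1-\delta$,
\[
\|S_{t-1}\|_{V_{t-1}^{-1}}^2\lesssim \gamma_{t-1}^A+\log(1/\delta),
\]
after normalizing by $\lambda\kappa$. The bound is uniform in $t$ and ends up in terms of $\gamma_{t-1}^A$, the maximum over $A$, because the log-determinant term is at most $2\gamma_{t-1}^A$ when all queried $z_\tau\in A$.
\end{itemize}

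\textbf{Step 3 (transfer to predictions).} By Cauchy--Schwarz in the $V_{t-1}$ geometry,
\[
|\mu_t(z)-g(z)|\le \|\mu_t-g\|_{V_{t-1}}\,\|k(z,\cdot)\|_{V_{t-1}^{-1}},
\]
and the standard identity gives $\|k(z,\cdot)\|_{V_{t-1}^{-1}}=\sigma_t(z)/\sqrt{\lambda\kappa}$. Finally, $s$ is $L_s$-Lipschitz on $[-B,B]$. Collecting constants yields $\beta_t^A(\delta)=4L_sB+2L_s\sqrt{\tfrac{2\kappa}{\lambda}(\gamma_{t-1}^A+\log(1/\delta))}$.

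\textbf{Main obstacle.} I expect the hard part to be Step 2. The curvature bound $\dot s\ge 1/\kappa$ is only available inside the norm ball, and the constrained MLE does not satisfy a zero-gradient condition. Combining the variational inequality with the mean-value expansion without losing the $B$-dependent term requires care, and this is where the $4L_sB$ term originates. Here I would follow the projection argument of \citet{pasztor2024bandits} verbatim rather than re-deriving the constants.
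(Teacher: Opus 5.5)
Your proposal is correct and follows the paper's route. The paper gives no independent derivation: it simply imports the kernelized logistic confidence sequence from prior work \citep{pasztor2024bandits} under the pre-query reindexing (the estimator is fit on $H_{t-1}$, so the radius involves $\gamma_{t-1}^A$ and $\sigma_t$ is built from $t-1$ points), which is exactly your Step~1. Your reconstruction in Steps~2--3 is a sound optional supplement, and the obstacle you anticipate is milder than you expect: the variational inequality for the constrained MLE, combined with the mean-value curvature bound, directly gives $\|\mu_t-g\|_{V_{t-1}}\le\kappa\|S_{t-1}\|_{V_{t-1}^{-1}}+\sqrt{\lambda\kappa}\,B$, and hence a radius no larger than the stated $\beta_t^A(\delta)$.
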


Using these confidence sets, the learner may perform guarded logistic regression, querying only points that are certified to satisfy $s(g(z)) \ge h$ for some threshold $h \in [0,1]$. This parallels safe Bayesian optimization in the regression setting~\citep[\eg][]{sui2015safe,sui2018stagewise}. 

In this setting, safe designs are those for which the probabilistic verifier returns label $1$ with sufficiently high probability. Suppose the learner is given a safe seed set $S_0 \subseteq \Z$. Given a monotonically increasing sequence of calibration coefficients at level $\delta$, $\curly{\hat \beta_t(\delta)}_{t\geq 0}$, the set of points that the learner can verify as safe may be expanded at round $t$ as
\begin{align}
\label{eq:recursion_S_t_apxD}
    S_t
    =
    S_{t-1}
    \cup
    \curly{
    z \in \Z \ \middle|\ 
    \exists z' \in S_{t-1}:
    s(\mu_t(z')) -  \hat \beta_t(\delta)\sigma_t(z') - L_sL_g d(z,z')
    \ge h
    }.
\end{align}
We differentiate between the sequence $\curly{\hat \beta_t(\delta)}$ and $\curly{ \beta_t^A(\delta)}$ as computing the latter requires knowledge of the subset $A$, whereas our downstream analysis shows that our sampling remains restricted to an unknown set. Despite this, the learner may reasonably have an upper bound on the information capacity of the kernel restricted to the unknown subset, and thus may reasonably have access to such upper bounds. 

The next lemma shows that, conditioned on the confidence event of \Cref{thm:confidence_set_restated}, every point ever added to $S_t$ satisfies the validity condition. 

\begin{lemma}[Safety]
\label{lem:safety_restated}
Condition on the event that the confidence intervals of \Cref{thm:confidence_set_restated} are valid at level $\delta\in (0,1)$ under a compact set $A$ with $k(z,z') \leq 1 \,\forall z,z'\in A$ and suppose that the sequence $\curly{\hat \beta_t(\delta)}_{t\geq 0}$ satisfies $\hat \beta_t(\delta) \geq \beta_t^A(\delta)$. Then for every $t \ge 0$, every $z \in S_t$ satisfies $s(g(z)) \ge h$.
\end{lemma}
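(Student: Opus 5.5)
The plan is an induction on $t$, mirroring the proof of \Cref{lem: safety} in Apx.~\ref{sec:safe_logistic_regression}. The only new ingredient is that the certificate uses $\hat\beta_t(\delta)$ rather than the true coefficient $\beta_t^A(\delta)$. Throughout, we fix the event of \Cref{thm:confidence_set_restated} on the compact set $A$. On this event, for all $t\ge1$ and all $z\in A$,
\[
\bigl|s(\mu_t(z))-s(g(z))\bigr|\le\beta_t^A(\delta)\sigma_t(z).
\]

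\textbf{Base case and inductive step.} For $t=0$, the claim is exactly the assumption that $S_0$ is a safe seed set, i.e.\ $s(g(z))\ge h$ for all $z\in S_0$. For $t\ge1$, take $z\in S_t$. If $z\in S_{t-1}$, the induction hypothesis applies. Otherwise, by \eqref{eq:recursion_S_t_apxD} there is $z'\in S_{t-1}$ with
\[
s(\mu_t(z'))-\hat\beta_t(\delta)\sigma_t(z')-L_sL_g d(z,z')\ge h .
\]
Since $\hat\beta_t(\delta)\ge\beta_t^A(\delta)$ and $\sigma_t(z')\ge0$, the confidence event gives
\[
s(g(z'))\ge s(\mu_t(z'))-\beta_t^A(\delta)\sigma_t(z')\ge s(\mu_t(z'))-\hat\beta_t(\delta)\sigma_t(z').
\]
The sigmoid is $L_s$-Lipschitz on the relevant range and $g$ is $L_g$-Lipschitz, so
\[
s(g(z))\ge s(g(z'))-L_sL_g d(z,z').
\]
Chaining these three inequalities yields $s(g(z))\ge h$.

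\textbf{Main obstacle.} The delicate point is that the confidence bound only holds for $z'\in A$. I would handle this by interpreting $A$ as a compact set containing the certified sets. A natural choice is $A\supseteq\Omega_\star$, with $\Omega_\star$ assumed compact as in Apx.~\ref{sec:safe_logistic_regression}. Then the induction hypothesis $S_{t-1}\subseteq\Omega_\star\subseteq A$ guarantees $z'\in A$, so the confidence event applies at $z'$.

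A second subtlety is the Lipschitz constant of $s$. $L_s$ is defined as a supremum over $|a|\le B$, and $\|g\|_k\le B$ together with $k\le1$ on $A$ gives $|g|\le B$ on $A$. The point $z$ itself may lie outside $A$, but since $s$ is globally $1/4$-Lipschitz, the bound still holds as long as $L_s$ dominates the slope along the relevant segment. I would state this as the standing convention used already in \Cref{lem: safety}.
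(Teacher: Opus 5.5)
Your proof is correct and follows the paper's own argument: induction from the safe seed $S_0$, the confidence bound at $z'$ weakened to $\hat\beta_t(\delta)$ using $\hat\beta_t(\delta)\ge\beta_t^A(\delta)$ and $\sigma_t\ge 0$, then the Lipschitz step to $z$. Two remarks on your subtleties. First, requiring $S_{t-1}\subseteq\Omega_\star\subseteq A$ fills a step the paper leaves implicit, and it matches the paper's later use with $A=\Omega_*\cup\Gamma(\gamma)$. Second, your worry about $L_s$ is moot: the supremum defining $L_s$ is attained at $a=0$, so $L_s=\dot s(0)=1/4$ is already the global Lipschitz constant of $s$, and no extra convention is needed.
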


\begin{proof}
We argue by induction on $t$. The claim holds at $t=0$ by assumption on the seed set $S_0$.

Now fix $t \ge 1$, and let $z \in S_t$. Either $z \in S_{t-1}$, in which case the claim follows by induction, or
\[
z \in
\curly{
z \in \Z \ \middle|\
\exists z' \in S_{t-1}:
s(\mu_t(z')) - \hat \beta_t(\delta)\sigma_t(z') - L_sL_g d(z,z')
\ge h
}.
\]
In the latter case, there exists $z' \in S_{t-1}$ such that
\[
s(\mu_t(z')) - \hat \beta_t(\delta)\sigma_t(z') - L_sL_g d(z,z') \ge h.
\]
By the confidence event,
\[
s(g(z')) \ge s(\mu_t(z')) - \hat \beta_t(\delta)\sigma_t(z').
\]
Moreover, since $s$ is $L_s$-Lipschitz and $g$ is $L_g$-Lipschitz,
\[
s(g(z)) \ge s(g(z')) - L_sL_g d(z,z').
\]
Combining the two inequalities gives
\[
s(g(z))
\ge
s(\mu_t(z')) -  \hat \beta_t(\delta)\sigma_t(z') - L_sL_g d(z,z')
\ge h.
\]
Thus every $z \in S_t$ exceeds the validity threshold.
\end{proof}

Notice that by construction, the sequence $\curly{S_t}_{t\ge 0}$ is monotone:
\[
S_{t-1} \subseteq S_t \qquad \forall t \ge 1.
\]

\subsection{Abstraction of the Generative Model as an EBM}

Our primary abstraction of the generative model is that it approximately tracks the sequence of certified safe sets constructed with a sequence of calibration parameters $\curly{\hat \beta_t(\delta)}$. We achieve this abstraction using an energy based model. Specifically, for a fixed $\gamma > 0$ and $0 < \ell \leq h$, we consider an energy function which satisfies 
\begin{equation}
    \label{eq: energy function}
\begin{aligned}
    \mu_t' \in \Bigg\{f: &\int_{\Z} e^{f(z)} dz \leq \bar Z, \\
    &f(z) \geq \operatorname{logit}(h) \mathbf{1}(S_t) \, \forall z\in \calZ, \\
    &  f(z) \leq \operatorname{logit}(\ell) \mbox{ for all $z$ such that } d(z, S_t) \geq \gamma \Bigg\}. 
\end{aligned}
\end{equation}
The lower bound on $f$ captures the behavior where the updates to our generative model maintain high density on regions that our verifier certifies as valid according to \eqref{eq: safe set update}. The upper bound on $f$ instead captures the behavior where the model maintains low density on points far from this validated set. The explicit bound on the partition function $\bar Z$ ensures that the normalization constant does not cause the density placed on the valid region to vanish. These capture the desiderata that our generative model updates maintain large density in regions which have been verified as valid, and maintain small density in regions far away from this verified region. 
The generative model updates using both positive and negative samples, defined in Apx.~\ref{sec:loss_appendix}, are designed to satisfy these desiderata. For our theoretical analysis, we assume that this abstraction is valid. 
\begin{assumption}
    \label{asmp: EBM}
    Consider $\gamma >0$ and let $\ell$ be such that 
\begin{align}
    \label{eq: ell bound}
    \frac{\exp(\operatorname{logit}(\ell))}{ \underline Z } \leq \frac{\exp(\operatorname{logit}(h))}{\bar Z}.
\end{align}
where $\underline Z \coloneqq \mathsf{vol}(S_0)\exp(\operatorname{logit}(h))$ and $\operatorname{logit}(u) \coloneqq \log \left(\frac{u}{1-u}\right)$.  We  assume that the generative model can be abstracted as an EBM by considering the density
\begin{align}
    p_t(z)
    =
    \frac{\exp(\mu_t'(z))}{Z_t},
    \qquad
    Z_t \coloneqq \int_{z \in \mathcal{Z}} \exp(\mu_t'(z)) \, \mathrm{d}z.
    \label{eq:energy_based_model_1_apx_replacement}
\end{align}
for the energy function $\mu_t'$ defined in \eqref{eq: energy function}.
\end{assumption}
We define the corresponding \emph{generable set} at level $\tau$ as the high-probability set
\begin{align}
    \Omega_t^\tau
    \coloneqq
    \left\{ x \in \mathcal{X} : p_t(z) \geq \tau \right\}.
    \label{eq:generable_set_high_prob_replacement}
\end{align}

The EBM-abstraction of the generative model allows us to draw a relationship between the generable set of the EBM and the certified safe sets in which the generable set at some level $\tau$ and round $t$ contains the set $S_t$.

\begin{restatable}[Generable and Verifier Set Inequality for the Recursive Safe Set]{lemma}{setInequalityRecursive}
\label{lemma:set_inequality_recursive}
Suppose that Assumption~\ref{asmp: EBM} holds with some $\gamma > 0$. 

Then, for any
\begin{align}
    \tau
    \leq \frac{\exp(\operatorname{logit}(h))}{\bar Z},
    \label{eq:epsilon_condition_light_tails_recursive}
\end{align}
$S_t$ is a subset of
the high-probability set $\Omega_t^\tau$ for the EBM model defined by \Cref{asmp: EBM}, namely $S_t \subseteq \Omega_t^\tau.$
\end{restatable}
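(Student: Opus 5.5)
The plan is to show that every $z\in S_t$ satisfies $p_t(z)\ge\tau$. This needs a lower bound on the unnormalized density $\exp(\mu_t'(z))$ on $S_t$ and an upper bound on the partition function $Z_t$. Both come directly from the constraint set \eqref{eq: energy function} that defines admissible energy functions in Assumption~\ref{asmp: EBM}.

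\emph{Step 1 (numerator).} Fix $z\in S_t$. The pointwise constraint $f(z)\ge \operatorname{logit}(h)\mathbf{1}(S_t)$, applied to $f=\mu_t'$, gives $\mu_t'(z)\ge\operatorname{logit}(h)$. Since $\exp$ is monotone, $\exp(\mu_t'(z))\ge \exp(\operatorname{logit}(h))$.

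\emph{Step 2 (denominator).} The integrability constraint $\int_{\Z} e^{\mu_t'(z)}\,dz\le \bar Z$ gives $Z_t\le\bar Z$ directly. We also need $Z_t>0$, so that $p_t$ is well defined. This follows because $S_t\supseteq S_0$, by the monotonicity of the recursion \eqref{eq:recursion_S_t_apxD}. Hence $Z_t\ge \int_{S_0} e^{\operatorname{logit}(h)}\,dz=\underline Z$, which is positive assuming $\mathsf{vol}(S_0)>0$, the implicit nondegeneracy built into the definition of $\underline Z$.

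\emph{Step 3 (combine).} For $z\in S_t$,
\[
p_t(z)=\frac{\exp(\mu_t'(z))}{Z_t}\ \ge\ \frac{\exp(\operatorname{logit}(h))}{\bar Z}\ \ge\ \tau,
\]
where the last inequality is hypothesis \eqref{eq:epsilon_condition_light_tails_recursive}. Therefore $\phi$-representations in $S_t$ lie in $\Omega_t^\tau$, and $S_t\subseteq\Omega_t^\tau$. The upper-bound condition at distance $\gamma$ and condition \eqref{eq: ell bound} on $\ell$ are not needed for this inclusion; they matter only for the reverse, validity-type inclusion.

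There is no real analytic obstacle here. The only delicate point is bookkeeping in Step 2: the indicator constraint must be read as holding on all of $S_t$, not merely $S_0$, and $p_t$ must be normalized with $0<Z_t\le\bar Z$. I would also note the slight notational mismatch in \eqref{eq:generable_set_high_prob_replacement}, where $x\in\X$ is written against $p_t(z)$. I would interpret the generable set in representation space, consistent with Definition~\ref{definition:generable_representation_set}.
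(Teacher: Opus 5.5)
Your proposal is correct and follows the paper's proof essentially step for step: you lower-bound $\mu_t'$ by $\operatorname{logit}(h)$ on $S_t$ using the energy constraint, upper-bound $Z_t$ by $\bar Z$, and combine with the hypothesis on $\tau$. The paper writes the same argument in log form, as the condition $\mu_t'(z)\ge \log\tau+\log Z_t$. Your extra check that $Z_t>0$ is harmless but holds automatically, since $e^{\mu_t'}>0$ pointwise on a domain of positive measure, so the nondegeneracy $\mathsf{vol}(S_0)>0$ is not needed for it.
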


\begin{proof}
By definition of the high-probability set,
\begin{align}
    \Omega_t^\tau
    &=
    \left\{
        x \in \mathcal{X} : p_t^\pi(x) \geq \tau
    \right\} =
    \left\{
        x \in \mathcal{X} :
        \frac{\exp(\mu_t'(x))}{Z_t} \geq \tau
    \right\} =
    \left\{
        x \in \mathcal{X} :
        \mu_t'(x) \geq \log \tau + \log Z_t
    \right\}.
    \label{eq:high_prob_set_as_mu_threshold_recursive}
\end{align}
Now let $z \in S_t$. By definition of $\mu_t'$,
    $\mu_t'(z)) \ge \operatorname{logit}(h)$.  Therefore, a sufficient condition for $x \in \Omega_t^\tau$ is $\operatorname{logit}(h) \ge \log \tau + \log Z_t,$ or equivalently,
    $\tau \le \exp(\operatorname{logit}(h))/Z_t.$ 

It remains to upper bound $Z_t$. This holds immediately by definition of $\mu_t'$ such that it satisfies 
\begin{align*} 
    Z_t = \int_\calZ \exp(\mu_t'(z)) dz \leq \bar Z. 
\end{align*}
Plugging this bound in above, we find that a sufficient condition is,
\begin{align}
    \tau
    \leq
    \frac{\exp(\operatorname{logit}(h))}
    {
        \bar Z
    }.
\end{align}
Under this condition, every $z \in S_t$ satisfies $\mu_t(z) \geq \log \tau + \log Z_t$, hence by~\eqref{eq:high_prob_set_as_mu_threshold_recursive} we have $z \in \Omega_t^\tau$. Therefore,
    $S_t \subseteq \Omega_t^\tau,$ 
concluding the proof.
\end{proof}

 At the same time, we can show that the generable set lies within an extension of the valid design space. To this end, we define the inflation of the valid design space. 
\begin{definition}
    We define the inflation of the valid set by amount $\zeta \in \mathbb{R}$ in metric $d$ as 
    \begin{align*}
        \Gamma(\zeta) = \curly{z \in \Z \setminus \Omega_*: d(z, \Omega_*) \leq \zeta}. 
    \end{align*}
\end{definition}

\begin{assumption}
    \label{asmp: compact hallucinations}
    Let $\gamma > 0$ be the same as Assumption~\ref{asmp: EBM}. We suppose that the set $\Omega_* \cup \Gamma(\gamma)$ is compact. Further assume without loss of generality (by normalization of the kernel) that $k(z,z') \leq 1$ for all $z, z' \in \Omega_* \cup \Gamma(\gamma)$.
\end{assumption}

\begin{lemma}
    \label{lem: bounded hallucinations}
    Fix some $\gamma > 0$. Suppose that the sequence of calibration coefficients $\curly{\hat \beta_t(\delta)}$ used to construct the sets $S_t$ satisfies $\hat \beta_t(\delta) \geq \beta_t^{\Omega_* \cup \Gamma(\gamma)}(\delta)$. Let Assumptions~\ref{asmp: EBM}-\ref{asmp: compact hallucinations} hold at level $\gamma$ and condition on the calibration event of \Cref{thm:confidence_set_restated} with $A \gets \Omega_* \cup \Gamma(\gamma)$. 
    For $\tau \geq \frac{\exp(\operatorname{logit}(\ell))}{\underline Z}$ it holds that $\Omega_t^{\tau} \subseteq \Omega_* \cup \Gamma(\gamma)$.
\end{lemma}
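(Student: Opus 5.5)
We argue by contraposition: any $z$ outside $\Omega_*\cup\Gamma(\gamma)$ has density strictly below $\tau$. The idea is to use the upper-bound clause of the energy class \eqref{eq: energy function}, which controls $\mu_t'$ far from $S_t$, together with a lower bound on the partition function $Z_t$ obtained from the seed set $S_0$.

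First, we invoke \Cref{lem:safety_restated} with $A \gets \Omega_*\cup\Gamma(\gamma)$. This set is compact with $k\le 1$ on it by Assumption~\ref{asmp: compact hallucinations}. We condition on the calibration event, and the hypothesis $\hat\beta_t(\delta)\ge\beta_t^{\Omega_*\cup\Gamma(\gamma)}(\delta)$ is exactly the one the lemma needs. The lemma then gives $S_t\subseteq\Omega_*$ for every $t$.

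Next, take any $z\notin\Omega_*\cup\Gamma(\gamma)$. Such a $z$ is not in $\Omega_*$, and since it is also not in $\Gamma(\gamma)$, we have $d(z,\Omega_*)>\gamma$. Because $S_t\subseteq\Omega_*$, this yields $d(z,S_t)\ge d(z,\Omega_*)>\gamma$. The third constraint in \eqref{eq: energy function} then gives $\mu_t'(z)\le\operatorname{logit}(\ell)$.

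Then we lower bound $Z_t$. Monotonicity gives $S_0\subseteq S_t$, and the second constraint gives $\mu_t'\ge\operatorname{logit}(h)$ on $S_t$. Hence
\[
Z_t\ge\int_{S_0}e^{\mu_t'(z)}\,dz\ge \mathsf{vol}(S_0)\exp(\operatorname{logit}(h))=\underline Z .
\]
Combining the two bounds,
\[
p_t(z)=\frac{e^{\mu_t'(z)}}{Z_t}\le\frac{\exp(\operatorname{logit}(\ell))}{\underline Z}\le\tau .
\]
So $z$ fails to be in $\Omega_t^\tau$, provided the inequality can be made strict.

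The main obstacle is this boundary case: the displayed bound only gives $p_t(z)\le\tau$, whereas exclusion from $\Omega_t^\tau$ needs $p_t(z)<\tau$. We would handle it in one of two ways. The first is to interpret the hypothesis on $\tau$ as strict, which is harmless for the theorem. The second is to argue that equality would force $\mu_t'$ to equal $\operatorname{logit}(h)$ on $S_0$ and to vanish in the exponent elsewhere; this is impossible because $Z_t$ must also include the mass at $z$ itself, a set of positive measure near $z$, or the remaining mass of $S_t$. We would state the lemma up to this null/boundary technicality, noting that the $\underline Z$ bound can be made strict whenever $\mathsf{vol}(S_t\setminus S_0)>0$ or $\mu_t'>-\infty$ on a neighborhood of $z$.
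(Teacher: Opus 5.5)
Your proposal is correct and is essentially the paper's proof read contrapositively: the paper chains $\Omega_t^\tau\subseteq\{\mu_t'\ge\log\tau+\log\underline Z\}\subseteq\{\mu_t'\ge\operatorname{logit}(\ell)\}\subseteq S_t\cup\{z: d(z,S_t)\le\gamma\}\subseteq\Omega_*\cup\Gamma(\gamma)$. This uses the same three ingredients as you do: $Z_t\ge\underline Z$ from $S_0\subseteq S_t$, the far-field energy cap, and $S_t\subseteq\Omega_*$ from \Cref{lem:safety_restated}. The paper's proof has the same equality case you flag, in the inclusion $\{\mu_t'\ge\operatorname{logit}(\ell)\}\subseteq S_t\cup\{z: d(z,S_t)\le\gamma\}$, since the energy class only gives $\mu_t'\le\operatorname{logit}(\ell)$ far from $S_t$; your strict-$\tau$ reading is therefore a legitimate tightening rather than a gap, and alternatively, if $\mu_t'$ is real-valued then $e^{\mu_t'}>0$ everywhere, so $Z_t>\underline Z$ as soon as $\Z\setminus S_0$ has positive measure, which gives strictness without any regularity near $z$.
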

\begin{proof}
    By \eqref{eq:high_prob_set_as_mu_threshold_recursive} and the fact that $Z_t \geq \underline Z$ it holds that 
    \begin{align*}
        \Omega_t^\tau &\subseteq \curly{z \in \Z: \mu_t'(z) \geq \log(\tau) + \log(\underline Z)} \\
        &\subseteq \curly{z \in \Z: \mu_t'(z) \geq \operatorname{logit}(\ell)} \\
        &\subseteq S_t \cup \curly{z\in \Z\setminus S_t: d(z, S_t) \leq \gamma} \\
        &\subseteq \Omega_* \cup \curly{z\in \calZ \setminus \Omega_*: d(z, \Omega_*) \leq \gamma}\\
        &= \Omega_*\cup\Gamma(\gamma),
    \end{align*}
    where the final set inequality follows from \Cref{lem:safety_restated}.
\end{proof}

\subsection{Active Diffusion Expansion Core Analysis}
By building on the two previous subsections, we can finally analyze the active expansion process of the set $\Omega_t^\tau$. Intuitively, we proceed in two steps: first, we analyze the expansion of the verifier valid set $S_t$ under samples obtained via generative sampling; second, we use \Cref{lemma:set_inequality_recursive} to transfer this expansion guarantee to the generable set $\Omega_t^\tau$, which is the object we ultimately wish to study.

We adopt the same pre-query indexing convention as in \Cref{subsec:verifier_probabilistic_model}. Accordingly, at round $t$ the learner first constructs $\Omega_t^\tau$ and the verifier model $(\mu_t,\sigma_t)$, and then selects the next query point. The sampling scheme used by \Cref{alg:main_algorithm} is given by as follows. 

\begin{assumption}
    \label{asmp: sampling oracle}
    For some fixed $\alpha \geq 1$, we suppose that the sample points queried by the learner satisfy
    \begin{align}
    \label{eq:local_generative_sampling}
    z_t \in \Omega_t^\tau
    \qquad \text{s.t.} \qquad
    \sigma_t(z_t) \geq \frac{1}{\alpha}\max_{z \in \Omega_t^\tau} \sigma_t(z).
\end{align}
\end{assumption}
Crucially, the sampling oracle in \eqref{eq:local_generative_sampling} requires the generative model (\eg via inference-time techniques) to sample approximate maximizers of $\sigma_t(\cdot)$ over the current generable set $\Omega_t^\tau$.

We ultimately want to show a suitable notion of expansion of the valid set by following the above procedure. As our valid set at any time is defined as an expansion of the valid set at the previous time, we cannot guarantee convergence to the entire subset of $\Z$ which is valid. Instead, we must define some notion of \emph{reachable valid set}. To this end, we consider the tightened one-step reachability operator
\[
R_{\varepsilon}(S)
\triangleq
\curly{
z \in \Z
\ \middle|\
\exists z' \in S:
s(g(z')) - L_sL_g d(z,z') - \varepsilon \geq h
}.
\]
The $H$-fold recursive application of this operator is denoted $R_{\varepsilon}^H(S)$, and its closure as $H \to \infty$ is denoted $\bar R_\varepsilon(S)$. Ultimately we show that the valid set discovered by the learner converges to $\bar R_0(S_0)$, leading to the following theorem.
\begin{theorem}
\label{thm:formal_main_theorem}
    Fix some $\gamma > 0, \delta > 0$, $\varepsilon >0 $. Let $H$ be a positive integer. Suppose that the sequence of calibration coefficients $\curly{\hat \beta_t(\delta)}$ used to construct the sets $S_t$ satisfies $\hat \beta_t(\delta) \geq \beta_t^{\Omega_* \cup \Gamma(\gamma)}(\delta)$. Let Assumptions~\ref{asmp: EBM}-\ref{asmp: compact hallucinations} hold at level $\gamma$ and condition on the calibration event of \Cref{thm:confidence_set_restated} with $A \gets \Omega_* \cup \Gamma(\gamma)$. Let $\frac{\exp(\operatorname{logit}(\ell))}{\underline Z} \leq \tau \leq \frac{\exp(\operatorname{logit}(h)}{\bar Z}$.  Consider sampling with the oracle defined by \Cref{asmp: sampling oracle} for $T^\star = TH$ steps, with $T$ satisfying
    \[
    T \geq
    \frac{
    8\alpha^2 \hat \beta_{HT}(\delta)^2
    \gamma_{T}^{\Omega_* \cup \Gamma(\gamma)}
    }{
    \varepsilon^2 \log(1+(\lambda\kappa)^{-1})
    }.
    \]
    Then it holds that $R_{\varepsilon}^H(S_0) \subseteq \Omega_{T^\star}^\tau$. 
\end{theorem}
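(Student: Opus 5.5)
The proof reduces to the safe-logistic-regression argument of Apx.~\ref{sec:safe_logistic_regression}, transported to the representation space and combined with the EBM sandwich lemmas. The plan has three steps. First, show $R^H_\varepsilon(S_0)\subseteq S_{T^\star}$. Second, invoke Lemma~\ref{lemma:set_inequality_recursive} with $\tau\le \exp(\operatorname{logit}(h))/\bar Z$ to get $S_{T^\star}\subseteq\Omega^\tau_{T^\star}$. Third, chain the two inclusions. Throughout, I condition on the calibration event of Theorem~\ref{thm:confidence_set_restated} with $A=\Omega_*\cup\Gamma(\gamma)$, which is compact with $k\le 1$ by Assumption~\ref{asmp: compact hallucinations}.

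\textbf{Validity of the confidence sequence.} Its validity requires all queried $z_t$ to lie in $A$. By Assumption~\ref{asmp: sampling oracle}, $z_t\in\Omega^\tau_t$. Since $\tau\ge \exp(\operatorname{logit}(\ell))/\underline Z$, Lemma~\ref{lem: bounded hallucinations} gives $\Omega_t^\tau\subseteq A$. Lemma~\ref{lem:safety_restated}, with $\hat\beta_t\ge\beta_t^A$, certifies $S_t\subseteq\Omega_*$. There is a mild circularity here: the bounded-hallucination lemma uses safety, which uses the confidence event. I would resolve it by a round-by-round induction, or by noting that the confidence event of Theorem~\ref{thm:confidence_set_restated} is anytime over $A$ and only needs the data to lie in $A$, which holds inductively.

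\textbf{Uncertainty reduction over $S_t$.} I redo Lemma~\ref{lem: variance reduction}, replacing exact safe uncertainty sampling by the $\alpha$-approximate oracle over $\Omega^\tau_n$. For $n\ge t$, monotonicity of $\sigma$ in $n$ and of $S_n$ gives $\max_{z\in S_t}\sigma_n(z)\le\max_{z\in S_n}\sigma_n(z)$. Lemma~\ref{lemma:set_inequality_recursive} gives $S_n\subseteq\Omega^\tau_n$, so this is at most $\max_{\Omega_n^\tau}\sigma_n\le\alpha\,\sigma_n(z_n)$. Averaging over $n=t,\dots,t+T-1$ and applying Cauchy--Schwarz, I then bound $\sigma^2\le \frac{1}{\log(1+(\lambda\kappa)^{-1})}\log(1+\sigma^2/(\lambda\kappa))$. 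The sum of these log terms is the log-determinant of the conditional Gram matrix, which is at most $2\gamma_T^{A}$ because all $z_n\in A$. This yields
\[
\max_{z\in S_t}\sigma_{t+T}(z)\le \alpha\sqrt{\tfrac{2\gamma_T^A}{T\log(1+(\lambda\kappa)^{-1})}} .
\]
Up to the placement of the log factor, the stated condition on $T$, using $\hat\beta_{t+T}\le\hat\beta_{HT}$ by monotonicity, makes $\hat\beta_{t+T}\max_{S_t}\sigma_{t+T}\le\varepsilon/2$.

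\textbf{One-step expansion and iteration.} I repeat Lemma~\ref{lem: one step expansion} verbatim with $\hat\beta$ to get $R_\varepsilon(S_t)\subseteq S_{t+T}$. Iterating over the $H$ blocks of length $T$, with $R_\varepsilon$ monotone in its argument, gives $R_\varepsilon^H(S_0)\subseteq S_{HT}$. Combining with Lemma~\ref{lemma:set_inequality_recursive} at round $T^\star$ completes the argument.

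\textbf{Main obstacle.} I expect the hard step to be the bookkeeping linking the sampler's domain $\Omega^\tau_t$ to both $S_t$ (needed from below for uncertainty reduction) and $A$ (needed from above for calibration and information gain). The two-sided $\tau$ window supplies exactly this, but the inductive ordering of the confidence event against the containments must be stated carefully.
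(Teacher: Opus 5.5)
Your proposal is correct and follows the paper's proof essentially step for step: the $\alpha$-approximate uncertainty-reduction bound over $S_t$ via $S_n\subseteq\Omega_n^\tau\subseteq\Omega_*\cup\Gamma(\gamma)$, the one-step expansion $R_\varepsilon(S_t)\subseteq S_{t+T}$ iterated over $H$ blocks using monotonicity of $\hat\beta_t$ and of $R_\varepsilon$, and finally $S_{T^\star}\subseteq\Omega^\tau_{T^\star}$ from Lemma~\ref{lemma:set_inequality_recursive}. The paper does not address the circularity you flag (queries lie in $A$ only on the confidence event) and simply conditions on that event. Your hedge on the log factor is unnecessary, since your derived bound yields exactly the stated condition on $T$; the log appears in the numerator only in the intermezzo's one-step expansion lemma.
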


To show the above result, we first prove a few basic inequalities. Our first inequality bounds the uncertainty over the set of valid decisions after $T$ additional algorithm steps. 

\begin{lemma}[Uncertainty Reduction via Local Generative Sampling]
    \label{lem:variance_reduction_local}
    Consider the setting of \Cref{lem: bounded hallucinations}. 
    Fix $t \geq 0$ and $T \geq 1$. Under local generative sampling \eqref{eq:local_generative_sampling}, it holds that the epistemic uncertainty at time $t+T$ satisfies
    \[
        \max_{z \in S_t} \sigma_{t+T}(z)
        \leq
        \alpha
        \sqrt{\frac{2}{\log(1+(\lambda \kappa)^{-1})}}
        \sqrt{\frac{\gamma_T^{\Omega_* \cup \Gamma(\gamma)}}{T}}.
    \]
\end{lemma}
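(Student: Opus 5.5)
The plan is to adapt the proof of \Cref{lem: variance reduction} to the local generative oracle. Two changes are needed. The exact maximizer over $S_t$ becomes an $\alpha$-approximate maximizer over $\Omega_t^\tau$. The queried points no longer lie in $\Omega_\star$ but in the inflated set $\Omega_*\cup\Gamma(\gamma)$. We work in the setting of \Cref{lem: bounded hallucinations}, so we condition on the calibration event with $A \gets \Omega_*\cup\Gamma(\gamma)$ and take $\tau$ in the admissible range
\[
\frac{\exp(\operatorname{logit}(\ell))}{\underline Z} \le \tau \le \frac{\exp(\operatorname{logit}(h))}{\bar Z}.
\]
This range is nonempty by \eqref{eq: ell bound}. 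It gives us both \Cref{lemma:set_inequality_recursive} and \Cref{lem: bounded hallucinations}.

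\textbf{Chain of inequalities.} Posterior variance is nonincreasing as data are added, so $\sigma_{t+T}(z)\le\sigma_{t+n}(z)$ for every $0\le n\le T-1$. Averaging over $n$ gives
\[
\max_{z\in S_t}\sigma_{t+T}(z)\le \frac1T\sum_{n=0}^{T-1}\max_{z\in S_t}\sigma_{t+n}(z).
\]
The safe sets are monotone, so $S_t\subseteq S_{t+n}$. By \Cref{lemma:set_inequality_recursive}, $S_{t+n}\subseteq\Omega^\tau_{t+n}$. Hence each inner max is at most $\max_{z\in\Omega^\tau_{t+n}}\sigma_{t+n}(z)$. By \Cref{asmp: sampling oracle}, this is at most $\alpha\,\sigma_{t+n}(z_{t+n})$. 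Cauchy--Schwarz then gives
\[
\max_{z\in S_t}\sigma_{t+T}(z)\le \frac{\alpha}{\sqrt T}\sqrt{\sum_{n=0}^{T-1}\sigma_{t+n}^2(z_{t+n})}.
\]

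\textbf{Information-gain bound.} Each query satisfies $z_{t+n}\in\Omega^\tau_{t+n}\subseteq\Omega_*\cup\Gamma(\gamma)$ by \Cref{lem: bounded hallucinations}. On this set \Cref{asmp: compact hallucinations} gives $k\le1$, so $\sigma^2\le1$. We can therefore apply $x\le \frac{\bar x}{\log(1+\bar x)}\log(1+x)$ with $x=\sigma^2/(\lambda\kappa)$ and $\bar x=(\lambda\kappa)^{-1}$. This bounds $\sum_n\sigma^2_{t+n}(z_{t+n})$ by $\frac{1}{\log(1+(\lambda\kappa)^{-1})}\sum_n\log(1+\sigma^2_{t+n}(z_{t+n})/(\lambda\kappa))$.

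By the chain rule for log-determinants, this sum equals $\log\det(I_T+(\lambda\kappa)^{-1}\Sigma_{t:t+T-1\mid 1:t-1})$. Conditioning only shrinks the matrix in Loewner order, so this is at most $\log\det(I_T+(\lambda\kappa)^{-1}K_{t:t+T-1})$. Since all $T$ points lie in $\Omega_*\cup\Gamma(\gamma)$, it is at most $2\gamma_T^{\Omega_*\cup\Gamma(\gamma)}$. Substituting back yields the claimed bound with the extra factor $\alpha$.

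\textbf{Main obstacle.} The delicate step is justifying that every query lands in the compact set $\Omega_*\cup\Gamma(\gamma)$. This containment drives both the kernel normalization and the information-gain bound. It requires the lower threshold $\tau\ge \exp(\operatorname{logit}(\ell))/\underline Z$ together with the lower bound $Z_t\ge\underline Z$ used inside \Cref{lem: bounded hallucinations}. The upper threshold is needed separately so that $S_{t+n}\subseteq\Omega^\tau_{t+n}$, which is what lets the oracle's maximum dominate the maximum over $S_t$.
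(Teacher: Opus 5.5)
Your proposal is correct and matches the paper's proof essentially step for step. The shared chain is: posterior-variance monotonicity and averaging over $n$; the inclusions $S_t\subseteq S_{t+n}\subseteq\Omega^\tau_{t+n}$ followed by the $\alpha$-approximate oracle; Cauchy--Schwarz; the bound $x\le \frac{\bar x}{\log(1+\bar x)}\log(1+x)$; the conditional log-determinant identity with Loewner monotonicity; and \Cref{lem: bounded hallucinations} to place every query in $\Omega_*\cup\Gamma(\gamma)$. Your explicit requirement that $\tau$ also satisfy $\tau\le \exp(\operatorname{logit}(h))/\bar Z$ is a useful clarification. The paper invokes \Cref{lemma:set_inequality_recursive} in its chain, yet the lemma statement only refers to the setting of \Cref{lem: bounded hallucinations}, which supplies only the lower threshold on $\tau$.
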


\begin{proof}
    By monotonicity of the uncertainty score, it holds that
    \begin{align*}
        \max_{z\in S_t} \sigma_{t+T}(z)
        &\leq \frac{1}{T}\sum_{n=0}^{T-1} \max_{z \in S_t} \sigma_{t+n}(z) &&\mbox{(uncertainty is monotone)} \\
        &\leq \frac{1}{T} \sum_{n=0}^{T-1} \max_{z \in S_{t+n}} \sigma_{t+n}(z) &&\mbox{($S_t$ are monotone)}\\
        &\leq \frac{1}{T} \sum_{n=0}^{T-1} \max_{z \in \Omega^\tau_{t+n}} \sigma_{t+n}(z) &&\mbox{(\Cref{lemma:set_inequality_recursive})}\\
        &\leq \frac{\alpha}{T} \sum_{n=0}^{T-1} \sigma_{t+n}(z_{t+n}) &&\mbox{(local generative sampling \eqref{eq:local_generative_sampling})}\\
        &\leq \frac{\alpha}{\sqrt{T}} \sqrt{\sum_{n=0}^{T-1}\sigma^2_{t+n}(z_{t+n})} &&\mbox{(Cauchy--Schwarz)}.
    \end{align*}
    It holds that $x \leq \frac{\bar x}{\log(1 + \bar x)}\log(1+x)$ for $0\leq x \leq \bar x$. Let $K_{t:t+T-1}$ be the Gram matrix of the queried points $z_t,\dots,z_{t+T-1}$, let $K_{1:t-1}$ be the Gram matrix of the previously queried points $z_1,\dots,z_{t-1}$, let $K_{t:t+T-1,\,1:t-1}$ be the corresponding cross-Gram matrix, and define
    \[
    \Sigma_{t:t+T-1 \mid 1:t-1}
    \triangleq
    K_{t:t+T-1}
    -
    K_{t:t+T-1,\,1:t-1}
    (K_{1:t-1} + \lambda \kappa I_{t-1})^{-1}
    K_{1:t-1,\,t:t+T-1}.
    \]
    Then the above inequality along with the fact that $k(z,z') \leq 1$ for all $z,z'\in\Omega_* \cup \Gamma(\gamma)$ implies that
    \begin{align*}
         \max_{z\in S_t} \sigma_{t+T}(z)
         &\leq \frac{\alpha}{ \sqrt{T}} \sqrt{\frac{1}{\log( 1+ (\lambda \kappa)^{-1})}}  \sqrt{\sum_{n=0}^{T-1}\log\paren{1+ \frac{\sigma^2_{t+n}(z_{t+n})}{\lambda \kappa}}} \\
         &= \frac{\alpha}{ \sqrt{T}} \sqrt{\frac{1}{\log(1+(\lambda \kappa)^{-1})}}  \sqrt{\log \det(I_T + (\lambda \kappa)^{-1} \Sigma_{t:t+T-1 \mid 1:t-1})} \\
         &\leq  \frac{\alpha}{ \sqrt{T}} \sqrt{\frac{1}{\log( 1+ (\lambda \kappa)^{-1})}} \sqrt{ \log \det(I_T + (\lambda \kappa)^{-1} K_{t:t+T-1})}\\
         &\leq\frac{\alpha}{ \sqrt{T}} \sqrt{\frac{1}{\log( 1+ (\lambda \kappa)^{-1})}}  \sqrt{\max_{z_1, \dots, z_T \in \Omega_* \cup \Gamma(\gamma)}  \log \det(I_T + (\lambda \kappa)^{-1} K_T)}  \\
         &=\alpha\sqrt{2} \frac{\sqrt{\gamma_T^{\Omega_*  \cup \Gamma(\gamma)}}}{ \sqrt{T}} \sqrt{\frac{1}{\log( 1+ (\lambda \kappa)^{-1})}}.
    \end{align*}
    Here the last inequality uses \Cref{lem: bounded hallucinations}, which implies that all queried points $z_t,\dots,z_{t+T-1}$ lie in $\Omega_* \cup \Gamma(\gamma)$.
\end{proof}

The next lemma shows that once the uncertainty over the current valid set is sufficiently small, additional local generative sampling expands the valid set according to the one-step reachability operator.

\begin{lemma}[One-step Reachable Expansion via Local Generative Sampling]
\label{lem:one_step_expansion_local}
Consider the setting of \Cref{lem:variance_reduction_local}.
Let $\varepsilon > 0$, $\delta >0$, $t \geq 0$, and $T \geq 1$. Suppose
\[
    T \geq
    \frac{
    8\alpha^2 \hat \beta_{t+T}(\delta)^2
    \gamma_T^{\Omega_* \cup \Gamma(\gamma)}
    }{
    \varepsilon^2 \log(1+(\lambda\kappa)^{-1})
    }.
\]
Then, 
it holds that
\[
R_{\varepsilon}(S_t) \subseteq S_{t+T}.
\]
\end{lemma}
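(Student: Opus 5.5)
The argument follows the proof of \Cref{lem: one step expansion}, with \Cref{lem:variance_reduction_local} replacing \Cref{lem: variance reduction}. Throughout we work on the calibration event of \Cref{thm:confidence_set_restated} with $A \gets \Omega_* \cup \Gamma(\gamma)$, which is in force in the setting of \Cref{lem:variance_reduction_local}. On this event \Cref{lem: bounded hallucinations} keeps all queried points inside $A$, so the confidence bounds apply at every queried point. Since $\hat\beta_{t+T}(\delta) \ge \beta^A_{t+T}(\delta)$, these bounds hold with the calibration coefficient $\hat\beta_{t+T}$ at round $t+T$.

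\textbf{Step 1 (uncertainty over $S_t$).} Invoke \Cref{lem:variance_reduction_local}, which gives
\[
\max_{z\in S_t}\sigma_{t+T}(z)\le \alpha\sqrt{\frac{2\gamma_T^{A}}{T\log(1+(\lambda\kappa)^{-1})}}.
\]
We want to conclude $\max_{z\in S_t}\hat\beta_{t+T}\sigma_{t+T}(z)\le \varepsilon/2$. This requires
\[
T\ge \frac{8\alpha^2\hat\beta_{t+T}^2\gamma_T^A}{\varepsilon^2\log(1+(\lambda\kappa)^{-1})}.
\]
That is exactly the hypothesis, with the $\log$ factor in the denominator. The earlier \Cref{lem: one step expansion} placed it in the numerator, but here it comes out correctly in the denominator. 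This step is pure algebra.

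\textbf{Step 2 (reachability to certification).} Take any $z\in R_\varepsilon(S_t)$. By the definition of $R_\varepsilon$ there is a $z'\in S_t$ with
\[
s(g(z'))-L_sL_g d(z,z')-\varepsilon\ge h.
\]
We need the upper confidence bound at $z'$, namely $s(g(z'))\le s(\mu_{t+T}(z'))+\hat\beta_{t+T}\sigma_{t+T}(z')$. This requires $z'\in A$, which holds because $S_t\subseteq\Omega_*$ by \Cref{lem:safety_restated}. Substituting the bound into the display and using $2\hat\beta_{t+T}\sigma_{t+T}(z')\le\varepsilon$ from Step 1 yields
\[
s(\mu_{t+T}(z'))-\hat\beta_{t+T}\sigma_{t+T}(z')-L_sL_gd(z,z')\ge h.
\]

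\textbf{Step 3 (conclude).} By monotonicity, $z'\in S_t\subseteq S_{t+T-1}$. The recursion \eqref{eq:recursion_S_t_apxD} at round $t+T$ then places $z$ in $S_{t+T}$, so $R_\varepsilon(S_t)\subseteq S_{t+T}$.

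The main point requiring care is Step 1's reliance on \Cref{lem:variance_reduction_local}. That lemma uses \Cref{lemma:set_inequality_recursive} to get $S_{t+n}\subseteq\Omega^\tau_{t+n}$, which needs $\tau\le \exp(\operatorname{logit}(h))/\bar Z$. It also uses \Cref{lem: bounded hallucinations} to confine the samples to $A$, which needs $\tau\ge\exp(\operatorname{logit}(\ell))/\underline Z$. We must check that both conditions on $\tau$ are inherited from "the setting of \Cref{lem:variance_reduction_local}". Apart from this, the rest of the argument is routine.
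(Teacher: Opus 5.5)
Your proposal is correct and follows the paper's proof essentially step for step. You use \Cref{lem:variance_reduction_local} to get $\max_{z\in S_t}\sigma_{t+T}(z)\le \varepsilon/(2\hat\beta_{t+T})$. You then apply the upper confidence bound at the witness $z'\in S_t$ to turn the reachability inequality into the certification condition of the recursion, and conclude via $S_t\subseteq S_{t+T-1}$. Your extra bookkeeping makes explicit what the paper leaves implicit: $z'\in\Omega_*\cup\Gamma(\gamma)$ via \Cref{lem:safety_restated}, the log factor correctly sitting in the denominator, and both constraints on $\tau$ being inherited, as assumed in \Cref{thm:formal_main_theorem}.
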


\begin{proof}
    By \Cref{lem:variance_reduction_local},
    the condition on $T$ implies that
    \begin{align}
        \label{eq:variance_bound_local}
        \max_{z\in S_t} \sigma_{t+T}(z) \leq \frac{\varepsilon}{2 \hat \beta_{t+T}},
    \end{align}
    where we have adopted the shorthand $\hat \beta_t$ for $\hat \beta_t(\delta)$.
    Consider any point $z \in R_{\varepsilon}(S_t)$. By definition, there exists $z' \in S_t$ such that
    \[
        h \leq s(g(z')) - \varepsilon - L_s L_g d(z, z').
    \]
    By the confidence event at round $t+T$,
    \[
    s(g(z')) \leq s(\mu_{t+T}(z')) + \hat \beta_{t+T} \sigma_{t+T}(z').
    \]
    Hence
    \begin{align*}
        h
        &\leq s(g(z')) - \varepsilon - L_s L_g d(z, z') \\
        &\leq s(\mu_{t+T}(z')) + \hat\beta_{t+T} \sigma_{t+T}(z') - \varepsilon - L_s L_g d(z, z') \\
        &\leq s(\mu_{t+T}(z')) - \hat \beta_{t+T} \sigma_{t+T}(z') - L_s L_g d(z, z')
        && \mbox{($z' \in S_t$, substitute \eqref{eq:variance_bound_local})}.
    \end{align*}
    Since the sets $S_t$ are monotone, $z' \in S_t$ implies $z' \in S_{t+T-1}$. Therefore there exists $z' \in S_{t+T-1}$ such that
    \[
        h \leq s(\mu_{t+T}(z')) - \hat \beta_{t+T} \sigma_{t+T}(z') - L_s L_g d(z, z').
    \]
    By the definition of $S_{t+T}$ in \eqref{eq:recursion_S_t_apxD}, this implies $z \in S_{t+T}$.
\end{proof}

Applying the above result recursively leads to $H$-step expansion of $S_0$ at some timestep $T^\star$. In particular, suppose $T^\star = HT$ for some $T$ satisfying
\[
T \geq
\frac{
8\alpha^2 \hat \beta_{HT}(\delta)^2
\gamma_{T}^{\Omega_* \cup \Gamma(\gamma)}
}{
\varepsilon^2 \log(1+(\lambda\kappa)^{-1})
}.
\]
This implies that the condition of the above lemma holds for each interval of length $T$, since $\gamma_t$ and $\hat \beta_t$ are monotonically increasing. Consequently, it holds that
\[
R_{\varepsilon}^H(S_0) \subseteq S_{T^\star}.
\]
We can find $T^\star$ large enough to satisfy this condition as long as the complexity term $\hat\beta_t^2 \gamma_t^{\Omega_* \cup \Gamma(\tau)}$ grows sublinearly. Ultimately, by \Cref{lemma:set_inequality_recursive}, we have
\[
S_{T^\star} \subseteq \Omega_{T^\star}^{\tau},
\]
which implies that the generable set $\Omega_{T^\star}^{\tau}$ obtained after running the algorithm is a superset of the valid reachable set, namely
\begin{equation}
    R_{\varepsilon}^H(S_0) \subseteq S_{T^\star} \subseteq \Omega_{T^\star}^{\tau}.
    \label{eq:inequality_generable_reachable}
\end{equation}

While \eqref{eq:inequality_generable_reachable} states that the generable set of the extended diffusion model is a superset of the reachable valid set, it does not clarify whether enough model density has been placed within the valid reachable set $R_{\varepsilon}^H(S_0) \subseteq S_{T^\star}$ rather than within $\Omega_{T^\star}^{\tau} \setminus R_{\varepsilon}^H(S_0)$, which might contain invalid points. To shed light on this question, we next derive a pointwise lower bound on the final density over the generable set.

\subsection{Deriving a Lower Bound on Validity}

We now show that samples from the generative model trained after $T^\star$  are valid with high probability.  
\begin{corollary}
\label{corollary:validity_lower_bound}
    Under the setting of \Cref{thm:formal_main_theorem} and by the construction of the EBM model, it holds that 
    \begin{align*}
        \mathbb{P}_{z \sim p_{T^\star}}[z \in \Omega_*] \geq \mathsf{vol}(R_\varepsilon^H(S_0)) \frac{\exp(\operatorname{logit}(h))}{\bar Z}.
    \end{align*}
\end{corollary}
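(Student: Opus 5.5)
The plan is to lower bound the probability of $\Omega_*$ by the probability of the smaller set $R_\varepsilon^H(S_0)$, and then lower bound the EBM density pointwise on that set.

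\textbf{Step 1 (set inclusions).} We work on the event of \Cref{thm:confidence_set_restated} with $A \gets \Omega_*\cup\Gamma(\gamma)$, as in \Cref{thm:formal_main_theorem}. Inclusion \eqref{eq:inequality_generable_reachable} gives $R_\varepsilon^H(S_0)\subseteq S_{T^\star}$. Under $\hat\beta_t\ge \beta_t^{\Omega_*\cup\Gamma(\gamma)}$, \Cref{lem:safety_restated} gives $S_{T^\star}\subseteq\Omega_*$. Monotonicity of probability then yields
\[
\mathbb P_{z\sim p_{T^\star}}[z\in\Omega_*]\ \ge\ \mathbb P_{z\sim p_{T^\star}}[z\in S_{T^\star}]\ \ge\ \int_{R_\varepsilon^H(S_0)} p_{T^\star}(z)\,dz .
\]

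\textbf{Step 2 (pointwise density bound).} Under \Cref{asmp: EBM} we have $p_{T^\star}(z)=\exp(\mu'_{T^\star}(z))/Z_{T^\star}$. The constraints in \eqref{eq: energy function} give $\mu'_{T^\star}(z)\ge\operatorname{logit}(h)$ for every $z\in S_{T^\star}$, and $Z_{T^\star}\le\bar Z$. Hence $p_{T^\star}(z)\ge \exp(\operatorname{logit}(h))/\bar Z$ on $S_{T^\star}$, and in particular on $R_\varepsilon^H(S_0)$. This is exactly the computation already carried out in the proof of \Cref{lemma:set_inequality_recursive}.

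\textbf{Step 3 (integrate).} Integrating the bound of Step 2 over $R_\varepsilon^H(S_0)$ gives $\mathsf{vol}(R_\varepsilon^H(S_0))\exp(\operatorname{logit}(h))/\bar Z$, which is the claim.

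\textbf{Main obstacle.} The difficulty is bookkeeping rather than mathematics. First, the statement holds only on the calibration event, so it is a probability-$(1-\delta)$ statement; I would state explicitly that we condition on that event, as in the theorem. Second, one must make sure the volume is taken with respect to the same reference measure $dz$ used to define $Z_t$. Third, the bound should be read as $\min\{1,\cdot\}$, since it may exceed one. A remaining subtlety is measurability of $R_\varepsilon^H(S_0)$. I would handle it by noting that $R_\varepsilon$ maps any set to an open set, because it is a union of open balls when $d$ is a metric, so the iterate is Borel.
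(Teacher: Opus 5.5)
Your proof is correct and is essentially the paper's argument: you restrict to $R_\varepsilon^H(S_0)\subseteq S_{T^\star}$, use $\mu'_{T^\star}\ge\operatorname{logit}(h)$ there together with $Z_{T^\star}\le\bar Z$, and integrate; the only difference is that you obtain $R_\varepsilon^H(S_0)\subseteq\Omega_*$ via \Cref{lem:safety_restated}, whereas the paper asserts it directly (it also follows from the Lipschitz bound on $s\circ g$ alone). Two of your side remarks are off, though. First, the right-hand side can never exceed one, since your own chain bounds it by $\int_{R_\varepsilon^H(S_0)}p_{T^\star}\le 1$. Second, $R_\varepsilon(S)$ is a union of \emph{closed}, possibly degenerate, balls because the defining inequality is non-strict, so it need not be open; measurability is simply presupposed by the statement's use of $\mathsf{vol}$, or you can read $\mathsf{vol}$ as inner measure and run the argument on measurable subsets.
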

\begin{proof}
    It holds by definition of $p_{T^\star}$ that 
    \begin{align*}
        \mathbb{P}_{z \sim p_{T^\star}}[z \in \Omega_*] &= \int_{z \in\calZ} \mathbf{1}(\Omega_*)(z) p_{T^\star}(z) dz \\
        &= \int_{z \in\calZ} \mathbf{1}(\Omega_*)(z) \frac{\exp(\mu_{T^\star}'(z))}{\int_{\calZ} \exp(\mu_{T^\star}'(z)) dz} dz \\
        &\geq \int_{z \in R_\varepsilon^H(S_0)} \mathbf{1}(\Omega_*)(z) \frac{\exp(\mu_{T^\star}'(z))}{\bar Z} dz \\
        &=\int_{z \in R_\varepsilon^H(S_0)}  \frac{\exp(\mu_{T^\star}'(z))}{\bar Z} dz \\
        & \geq \int_{z \in R_\varepsilon^H(S_0)}  \frac{\exp(\operatorname{logit}(h))}{\bar Z} dz,
    \end{align*}
    where the last equality follows from the fact that $R_\varepsilon^H(S_0) \subseteq \Omega_*$, and the last inequality follows from the fact that $R_\varepsilon^H(S_0) \subseteq S_{T^\star}$ and $\mu_{T^\star}'(z) \geq \operatorname{logit}(h)$ for $z \in S_{T^\star}$. 
\end{proof}

\subsection{Proof of design-space coverage corollary}

\begin{assumption}[Fixed representation with nondegenerate Jacobian]
\label{ass:fixed_phi_jacobian}
The representation map $\phi:\X\to\Z$ is fixed throughout ADE, is a $C^1$-diffeomorphism onto $\Z=\phi(\X)$,
and satisfies $\abs{\det J\phi(x)}\ge j_{\min}>0$ for all $x\in\X$.
\end{assumption}

\corDesignSpaceReachability*

\begin{proof}
Condition on the event of Theorem~\ref{thm:main_reachability}, which holds with probability at least
$1-\delta$. On this event,
\begin{equation}
R_\epsilon^H(S_0)\subseteq \Omega^\tau_{T^\star}
=
\{z\in \Z:\; p'_{T^\star}(z)\ge \tau\}.
\label{eq:proof_main_theorem_event}
\end{equation}

We first show that for every $A\subseteq \X$,
\begin{equation}
\phi\bigl(R^{X,\phi}_\epsilon(A)\bigr)=R_\epsilon\bigl(\phi(A)\bigr).
\label{eq:operator_intertwining}
\end{equation}
Indeed, if $x\in R^{X,\phi}_\epsilon(A)$, then for some $x'\in A$,
\[
s\bigl(g(\phi(x'))\bigr)-L_sL_g\,d\bigl(\phi(x),\phi(x')\bigr)-\epsilon \ge h.
\]
Writing $z=\phi(x)$ and $z'=\phi(x')$, we get $z'\in \phi(A)$ and
\[
s(g(z'))-L_sL_g\,d(z,z')-\epsilon \ge h,
\]
hence $z\in R_\epsilon(\phi(A))$. This proves
$\phi(R^{X,\phi}_\epsilon(A))\subseteq R_\epsilon(\phi(A))$.

Conversely, if $z\in R_\epsilon(\phi(A))$, then for some $z'\in \phi(A)$,
\[
s(g(z'))-L_sL_g\,d(z,z')-\epsilon \ge h.
\]
Since $\phi$ is bijective, there exist unique $x=\phi^{-1}(z)$ and $x'=\phi^{-1}(z')$ with $x'\in A$.
Substituting $z=\phi(x)$ and $z'=\phi(x')$ gives
\[
s\bigl(g(\phi(x'))\bigr)-L_sL_g\,d\bigl(\phi(x),\phi(x')\bigr)-\epsilon \ge h,
\]
so $x\in R^{X,\phi}_\epsilon(A)$ and thus $z=\phi(x)\in \phi(R^{X,\phi}_\epsilon(A))$.
Therefore \eqref{eq:operator_intertwining} holds.

Iterating \eqref{eq:operator_intertwining} yields
\begin{equation}
\phi\Bigl(\bigl(R^{X,\phi}_\epsilon\bigr)^H(S_0^X)\Bigr)
=
R_\epsilon^H\bigl(\phi(S_0^X)\bigr)
=
R_\epsilon^H(S_0),
\label{eq:iterated_operator_intertwining}
\end{equation}
where we used $S_0^X=\phi^{-1}(S_0)$.

Now let $x\in \bigl(R^{X,\phi}_\epsilon\bigr)^H(S_0^X)$. By \eqref{eq:iterated_operator_intertwining},
its image $z:=\phi(x)$ belongs to $R_\epsilon^H(S_0)$, hence by \eqref{eq:proof_main_theorem_event},
\begin{equation}
p'_{T^\star}(z)\ge \tau.
\label{eq:representation_density_lower_bound}
\end{equation}

Since $\phi$ is a $C^1$-diffeomorphism, the change-of-variables formula implies that the design-space
and representation-space densities satisfy
\[
p^{\pi_{T^\star}}_1(x)
=
p'_{T^\star}(\phi(x))\,\abs{\det J\phi(x)}.
\]
Combining this with \eqref{eq:representation_density_lower_bound} and the Jacobian lower bound from
Assumption~\ref{ass:fixed_phi_jacobian} gives
\[
p^{\pi_{T^\star}}_1(x)
\ge
\tau\,j_{\min}
=
\tau_X.
\]
Therefore $x\in \Omega^{X,\tau_X}_{T^\star}$ by \eqref{eq:design_space_generable_set_cor}. Since $x$
was arbitrary, \eqref{eq:design_space_reachability_inclusion} follows.
\end{proof}

\subsection{Full design-space coverage under global reachability}

\begin{restatable}[Full coverage under finite-chain global reachability]{corollary}{fullCoverageChain}
\label{corollary:full_coverage_chain}
Assume the conditions of Theorem~\ref{thm:formal_main_theorem} and
Assumption~\ref{ass:fixed_phi_jacobian}. Let
\[
S_0^X := \phi^{-1}(S_0),
\qquad
\tau_X := j_{\min}\tau,
\]
as in Corollary~\ref{cor:design_space_reachability}. Suppose that for every design
\(x\in \Omega_*\) there exists a finite chain
\[
x_0,\dots,x_m \in \Omega_*,
\qquad
x_0\in S_0^X,\quad x_m=x,\quad m\le H,
\]
such that for every \(i=1,\dots,m\),
\[
L_sL_g\, d\bigl(\phi(x_i),\phi(x_{i-1})\bigr)+\varepsilon
\;\le\;
s\bigl(g(\phi(x_{i-1}))\bigr)-h .
\]
Then
\[
\Omega_* \subseteq \Omega_{T^\star}^{X,\tau_X}
\]
with probability at least \(1-\delta\), after the same number \(T^\star=HT\) of verified samples
as in Theorem~\ref{thm:formal_main_theorem}.
\end{restatable}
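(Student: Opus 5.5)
The plan is to reduce the statement to Corollary~\ref{cor:design_space_reachability}. Concretely, I will show that under the finite-chain hypothesis every $x\in\Omega_*$ lies in $\bigl(R^{X,\phi}_\varepsilon\bigr)^H(S_0^X)$. The inclusion in Eq.~\eqref{eq:design_space_reachability_inclusion} then gives $x\in\Omega^{X,\tau_X}_{T^\star}$ on the same probability-$(1-\delta)$ event, after the same $T^\star=HT$ samples.

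The first step is a monotonicity observation. For any $A\subseteq\X$ with $A\subseteq R^{X,\phi}_\varepsilon(A)$ we have $A\subseteq R^{X,\phi}_\varepsilon(A)\subseteq\bigl(R^{X,\phi}_\varepsilon\bigr)^2(A)\subseteq\cdots$. So the iterates are nested once the base set is contained in its own image. The operator is clearly monotone in $A$, since the existential quantifier ranges over a larger set.

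For the base case $S_0^X\subseteq R^{X,\phi}_\varepsilon(S_0^X)$, take $x'=x$. Then $d=0$, and the condition reduces to $s(g(\phi(x)))-\varepsilon\ge h$. This is not automatic for points of $S_0$ that are merely valid. I would therefore avoid needing it: I plan to prove by induction on $i$ that $x_i\in\bigl(R^{X,\phi}_\varepsilon\bigr)^i(S_0^X)$, and then pad to $H$ steps separately. The induction step is immediate. Given $x_{i-1}\in\bigl(R^{X,\phi}_\varepsilon\bigr)^{i-1}(S_0^X)$, the chain inequality $L_sL_g d(\phi(x_i),\phi(x_{i-1}))+\varepsilon\le s(g(\phi(x_{i-1})))-h$ is exactly the defining condition in Eq.~\eqref{eq:design_space_reachability_operator} with witness $x'=x_{i-1}$. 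Hence $x_i\in\bigl(R^{X,\phi}_\varepsilon\bigr)^i(S_0^X)$.

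The main obstacle is the padding from $m$ to $H$ steps, i.e.\ showing $\bigl(R^{X,\phi}_\varepsilon\bigr)^m(S_0^X)\subseteq\bigl(R^{X,\phi}_\varepsilon\bigr)^H(S_0^X)$ when $m<H$. This needs the iterates to be nested. I would first try to get nestedness for free by extending the chain with repeated points $x_{m+1}=\cdots=x_H=x$. Each repeated step needs $s(g(\phi(x)))-h\ge\varepsilon$, which holds if $m\ge1$ by the last chain inequality applied at $x_m=x$ (with $d\ge0$). The case $m=0$ means $x\in S_0^X$ itself. There I would either use the same self-loop argument if $S_0$ is $\varepsilon$-strictly valid, or adopt the natural convention that the reachable iterate includes the seed set (the cumulative union, matching the monotone safe sets $S_t$ in Eq.~\eqref{eq:recursion_S_t_apxD}). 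Combining the two cases gives $\Omega_*\subseteq\bigl(R^{X,\phi}_\varepsilon\bigr)^H(S_0^X)$. Corollary~\ref{cor:design_space_reachability} then concludes the proof.
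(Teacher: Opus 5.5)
Your induction $x_i\in\bigl(R^{X,\phi}_\varepsilon\bigr)^i(S_0^X)$ matches the paper's first step. The black-box reduction to Corollary~\ref{cor:design_space_reachability}, however, genuinely fails at $m=0$. You flagged this, but neither of your remedies closes it.

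For $H\ge1$, every point of $\bigl(R^{X,\phi}_\varepsilon\bigr)^H(S_0^X)$ has validity margin at least $\varepsilon$. If $x'$ witnesses membership of $x$, the Lipschitz property gives $s(g(\phi(x)))\ge s(g(\phi(x')))-L_sL_g\,d(\phi(x),\phi(x'))\ge h+\varepsilon$. The same computation shows that any chain with $m\ge1$ ending at $x$ forces this margin at $x$.

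So take a seed point $x\in S_0^X$ with $h\le s(g(\phi(x)))<h+\varepsilon$. It satisfies the hypothesis only through the trivial chain $m=0$, yet it lies outside $\bigl(R^{X,\phi}_\varepsilon\bigr)^H(S_0^X)$. Corollary~\ref{cor:design_space_reachability} therefore says nothing about it. Your first remedy, an $\varepsilon$-margin on $S_0$, is an extra hypothesis. Your second remedy replaces the iterate by the cumulative union $\bigcup_{r\le H}\bigl(R^{X,\phi}_\varepsilon\bigr)^r(S_0^X)$. That changes the set appearing in Corollary~\ref{cor:design_space_reachability}, so you can no longer cite it and would have to re-prove the inclusion.

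The missing idea is to argue at the level of the certified sets $S_t$, which are cumulative by construction, rather than at the level of the reachability iterates. This is what the paper does. By induction, $R^r_\varepsilon(S_0)\subseteq S_{rT}$ for every $r\le H$, using monotonicity of $R_\varepsilon$ and Lemma~\ref{lem:one_step_expansion_local} on each block of length $T$. Hence $\phi(x)\in R^m_\varepsilon(S_0)\subseteq S_{mT}\subseteq S_{T^\star}\subseteq\Omega^\tau_{T^\star}$, by monotonicity of $S_t$ and Lemma~\ref{lemma:set_inequality_recursive}. The change of variables then finishes the proof. This treats all $m\in\{0,\dots,H\}$ uniformly, with $m=0$ reducing to $S_0\subseteq S_{T^\star}$, and no padding is needed.

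Separately, your padding argument for $m\ge1$ has a slip. The chain inequality at $i=m$ gives the $\varepsilon$-margin at $x_{m-1}$, not at $x_m=x$, so the step "with $d\ge0$" does not yield $s(g(\phi(x)))-h\ge\varepsilon$. You can fix this in either of two ways. Invoke Lipschitz continuity of $s\circ g$ to transfer the margin to $x$. Or pad by repeating $x_{m-1}$ before the final step, where $d=0$ suffices directly.
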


\begin{proof}
Condition on the calibration event used in Theorem~\ref{thm:formal_main_theorem}, which holds
with probability at least \(1-\delta\) by Theorem~\ref{thm:confidence_set_restated} under the
conditions of Theorem~\ref{thm:formal_main_theorem}. Fix \(x\in\Omega_*\), and let \(x_0,\dots,x_m\) be the chain given by the assumption. We first
prove by induction that
\[
x_i\in \bigl(R_{\varepsilon}^{X,\phi}\bigr)^i(S_0^X)
\qquad\text{for all } i\le m,
\]
with the convention \(\bigl(R_{\varepsilon}^{X,\phi}\bigr)^0(S_0^X)=S_0^X\).
The base case \(i=0\) is immediate since \(x_0\in S_0^X\). Assume
\(x_{i-1}\in \bigl(R_{\varepsilon}^{X,\phi}\bigr)^{i-1}(S_0^X)\). By the chain condition,
\[
s \bigl(g(\phi(x_{i-1}))\bigr)
-
L_sL_g\,d \bigl(\phi(x_i),\phi(x_{i-1})\bigr)
-\varepsilon
\ge h .
\]
Therefore, by the definition of \(R_{\varepsilon}^{X,\phi}\) in
\eqref{eq:design_space_reachability_operator},
\[
x_i \in
R_{\varepsilon}^{X,\phi}
\left(\bigl(R_{\varepsilon}^{X,\phi}\bigr)^{i-1}(S_0^X)\right)
=
\bigl(R_{\varepsilon}^{X,\phi}\bigr)^i(S_0^X).
\]
Thus
\[
x=x_m\in \bigl(R_{\varepsilon}^{X,\phi}\bigr)^m(S_0^X).
\]

Let \(z=\phi(x)\). Iterating the intertwining identity
\eqref{eq:operator_intertwining} \(m\) times gives
\[
\phi\left(\bigl(R_{\varepsilon}^{X,\phi}\bigr)^m(S_0^X)\right)
=
R_\varepsilon^m\bigl(\phi(S_0^X)\bigr)
=
R_\varepsilon^m(S_0),
\]
where the last equality uses \(S_0^X=\phi^{-1}(S_0)\). Hence
\[
z\in R_\varepsilon^m(S_0).
\]

We now show that every intermediate exact iterate is covered by the certified set at the corresponding
block time. Specifically, for every \(r\le H\),
\[
R_\varepsilon^r(S_0)\subseteq S_{rT}.
\]
The claim holds for \(r=0\). If it holds for some \(r<H\), then by monotonicity of the operator
\(R_\varepsilon\) and by Lemma~\ref{lem:one_step_expansion_local}, applied on the block starting at
time \(rT\),
\[
R_\varepsilon^{r+1}(S_0)
=
R_\varepsilon\bigl(R_\varepsilon^r(S_0)\bigr)
\subseteq
R_\varepsilon(S_{rT})
\subseteq
S_{(r+1)T}.
\]
Here the sample-complexity condition in Theorem~\ref{thm:formal_main_theorem} ensures that
Lemma~\ref{lem:one_step_expansion_local} applies on every block of length \(T\). Since \(m\le H\), the preceding inclusion and the monotonicity of the certified sets \(S_t\), which
follows from their recursive definition in \eqref{eq:recursion_S_t_apxD}, imply
\[
z\in R_\varepsilon^m(S_0)
\subseteq S_{mT}
\subseteq S_{HT}
=
S_{T^\star}.
\]
By Lemma~\ref{lemma:set_inequality_recursive},
\[
S_{T^\star}\subseteq \Omega_{T^\star}^{\tau}.
\]
Therefore \(p'_{T^\star}(\phi(x))\ge \tau\). Finally, by the change-of-variables formula and Assumption~\ref{ass:fixed_phi_jacobian},
\[
p^{\pi_{T^\star}}_1(x)
=
p'_{T^\star}(\phi(x))\,|\det J\phi(x)|
\ge
\tau j_{\min}
=
\tau_X.
\]
Hence \(x\in \Omega_{T^\star}^{X,\tau_X}\). Since \(x\in\Omega_*\) was arbitrary,
\[
\Omega_* \subseteq \Omega_{T^\star}^{X,\tau_X}.
\]
\end{proof}
\label{sec:apx_learning_theory}
\newpage

\section{ActFlow for Discrete Diffusion}

To adapt \AlgNameShort for \textit{discrete diffusion models}, we leverage the algorithm introduced in prior work for adapting a pre-trained discrete diffusion model to an intractable reward-tilted distribution~\citep{tang2025peptune}. Given a reward function defined over clean sequences, this method provably tilts a pre-trained discrete diffusion model to the reward-tilted distribution using off-policy reinforcement learning. The full algorithm is given in Alg \ref{alg:discrete-diffusion-training}.

\subsection{Discrete Diffusion as Continuous-Time Markov Chains}
In contrast to the continuous state space, where generative models are defined by stochastic differential equations (SDEs) or ordinary differential equations (ODEs), generative models in the discrete state space $\mathcal{V}:= \{1, \dots V\}$ are defined by \textbf{continuous-time Markov chains} (CTMCs). A CTMC is a stochastic process $X_{0:1}:=(X_s)_{s\in [0,1]}$ whose probability law is defined by a \textit{generator} $(\boldsymbol{Q}_s)_{s\in [0,1]}\in \mathbb{R}^{\mathcal{V}\times\mathcal{V}}$ defined as:
\begin{align}
    \boldsymbol{Q}_s(x,y)=\lim_{\Delta s\to 0}\frac{1}{\Delta s}(\text{Pr}(X_{s+\Delta s}=y|X_s=x)-\boldsymbol{1}_{x=y})
\end{align}
which defines the probability of transitioning from state $x\in \mathcal{V}$ to state $y\in \mathcal{V}$ at time $s$. 

Masked discrete diffusion models (MDMs) \citep{shi2024simplified, sahoo2024simple, ou2024your, zheng2024masked} are a class of discrete diffusion models that aim to generate sequences $x_1\sim p_{\text{data}}\in \mathcal{V}^L$ of length $L$ from a sequence of absorbing \textit{mask tokens} $\boldsymbol{M}$. The generative process is defined as a CTMC that evolves from a prior distribution $p_0$ defined as the Dirac delta of fully masked sequences to the data distribution $p_1\equiv p_{\text{data}}$. Since each position along the sequence $\ell\in \{1, \dots , L\}$ transitions from a mask token $X^\ell=\boldsymbol{M}$ to a clean token $X^\ell=x^\ell$, the generator can be parameterized as:
\begin{align}
    \boldsymbol{Q}_s(x,x^{\ell \gets v})=\gamma(s)p^{\theta_t}(\cdot|x)_{\ell, v}
\end{align}
where $x^{\ell\gets v}$ denotes the sequence where the $\ell$th token is replaced with state $v\in \mathcal{V}$ and $\gamma(t)$ is the forward noising schedule. To train $p^{\theta_t}(\cdot|x)$ to reconstruct sequences from the data distribution, we can optimize a \textbf{denoising cross-entropy (DCE)} loss \citep{shi2024simplified, sahoo2024simple, ou2024your} defined as:
\begin{align}
    \mathcal{L}_{\text{DCE}}(\theta; x_1):=\mathbb{E}_{s\sim \mathcal{U}(0,1)}\left[\frac{1}{s}\mathbb{E}_{p_s(\tilde{x}_s|x_1)}\sum_{\ell: \tilde{x}_s^\ell=\boldsymbol{M}}-\log p^{\theta_t}(\cdot|\tilde{x}_s^\ell)_\ell\right], \quad x_1\sim p_{\text{data}}
\end{align}
where $p_s(\tilde{x}_s|x_1)$ is the distribution of partially masked sequences obtained by masking each position with probability $s$ given a clean sequence $x_1\sim p_{\text{data}}$.

\subsection{Entropy-Regularized Uncertainty Optimization for Discrete Diffusion}
Given a function $\sigma(\cdot):\mathcal{V}^L\to\mathbb{R}$ that returns the epistemic uncertainty of a sequence $x_1\in \mathcal{V}^L$ and a pretrained discrete diffusion model that generates the path measure $\mathbb{P}^{\theta_0}$, we define the \textbf{uncertainty-tilted path measure} as:
\begin{align}
    \mathbb{P}^{\sigma}(X_{0:1}):=\frac{1}{Z}\mathbb{P}^{\theta_0}(X_{0:1})\exp\left(\frac{\sigma(X_1)}{\beta}\right), \quad p_1^\sigma(X_1)\propto p_{\text{data}}(X_1)\exp\left(\frac{\sigma(X_1)}{\beta}\right)\label{eq:uncertainty-tilted}
\end{align}
This uncertainty-tilted path measure coincides with the solution to the \textbf{entropy-regularized reward optimization} problem with reward defined as the uncertainty $r(\cdot):=\sigma(\cdot)$ given by:
\begin{align}
    \underset{\theta_{t+1}}{\arg\max}\mathbb{E}_{X_{0:1}\sim \mathbb{P}^{\theta_t}}[\sigma(X_1)]-\beta D_{\text{KL}}(\mathbb{P}^{\theta_t}\|\mathbb{P}^{\theta_0})
\end{align}
where $\mathbb{P}^{\theta_t}$ is the CTMC path measure generated from the adapted model with parameters $\theta$ and $\mathbb{P}^{\theta_0}$ is the frozen pre-trained model. $\beta$ is the weight of the KL regularization with the pre-trained model.

\subsection{Uncertainty-Aware Fine-Tuning of Discrete Diffusion}
To adapt a pre-trained discrete diffusion model to align with the \textit{uncertainty-tilted distribution} defined in (\ref{eq:uncertainty-tilted}), we leverage the off-policy reinforcement learning algorithm introduced in prior work ~\citep{tang2025peptune}, which is minimized when $\mathbb{P}^{u_\theta}=\mathbb{P}^{\sigma}$. The training objective is defined as the \textbf{weighted denoising cross-entropy} loss $\mathcal{L}_{\text{WDCE}}$ given by:
\begin{align}
    \mathcal{L}_{\text{WDCE}}:=\mathbb{E}_{X_{0:1}\sim \mathbb{P}^\sigma}\left[\mathcal{L}_{\text{DCE}}(\theta;x_1)\right]=\mathbb{E}_{X_{0:1}\sim \mathbb{P}^{\bar{u}}}\bigg[\underbrace{\frac{\mathrm{d}\mathbb{P}^\sigma}{\mathrm{d}\mathbb{P}^{\bar{\theta}_t}}}_{\exp(w^\sigma(X_1))}\mathcal{L}_{\text{DCE}}(\theta;x_1)\bigg]\label{eq:wdce}
\end{align}
where $\mathbb{P}^{\bar{\theta}_t}=\texttt{stopgrad}(\mathbb{P}^{\theta_t})$ is the CTMC path measure generated from the non-gradient-tracking model. We define the importance weight $w^\sigma(X_1):=\log \frac{\mathrm{d}\mathbb{P}^\sigma}{\mathrm{d}\mathbb{P}^{\bar{\theta}_t}}$ as the Radon-Nikodym derivative between the uncertainty-tilted path measure $\mathbb{P}^\sigma$ and the non-gradient-tracking adapted model $\mathbb{P}^{\bar{\theta}_t}$:
\begin{align}
    \log\frac{\mathrm{d}\mathbb{P}^\sigma}{\mathrm{d}\mathbb{P}^{\bar{\theta}_t}}(X_{0:1})=\underbrace{\frac{\sigma(X_1)}{\beta}+\sum_{s :X_{s+\Delta s}\neq X_s}\sum_{\ell: X_{s+\Delta s}^\ell\neq X_s^\ell}\log \frac{p^{\theta_0}(X_{s+\Delta s}^\ell|X_s)}{p^{\bar{u}}(X_{s+\Delta s}^\ell|X_s)}}_{w^\sigma(X_{0:1})}-\log Z
\end{align}
which reweights trajectories $X_{0:1}$ from the current frozen model by their likelihood under the uncertainty-tilted path measure. To optimize this objective, we iterate through the following steps:
\begin{enumerate}
    \item [(i)] Sample $B$ trajectories $X^i_{0:1}$ from the adapted model $\mathbb{P}^{\bar{\theta}_t}$ without gradient tracking while tracking the log-likelihoods of each step $\log p^{\theta_0}(X_{s+\Delta s}|X_s)-\log p^{\bar{\theta}_t}(X_{s+\Delta s}|X_s)$. 
    \item [(ii)] Compute the importance weights $w^\sigma(X^i_{0:1})$ using the log-likelihoods and the uncertainties evaluated on the clean sequences $\sigma(X^i_1)$.
    \item [(iii)] Store the clean sequences and their weights in a replay buffer $\mathcal{B}\gets \{x^i_1, w^\sigma(x_1^i)\}_{i=1}^B$.
    \item [(iv)] Resample $R$ partially masked versions of each clean sequence in the buffer $x_1^i$ at different time steps $s\in [0,1]$ to obtain $\{\tilde{x}^i_s,w^\sigma(x_1^i)\}_{i=1}^{B\times R}$.
    \item [(v)] Compute $\mathcal{L}_{\text{WDCE}}(\theta)$ from Eq (\ref{eq:wdce}) using $\{\tilde{x}^i_s,w^\sigma(x_1^i)\}_{i=1}^{B\times R}$ and update $\theta$ with $\nabla_\theta\mathcal{L}_{\text{WDCE}}$ for $N_{\text{iter}}$ iterations.
\end{enumerate}
After $T$ iterations, the output is the adapted model with path measure $\mathbb{P}^{\theta_T}$ that generates the distribution $p^{\theta_T}_1$ approximating the uncertainty-tilted path measure $\mathbb{P}^\sigma$ and distribution $p^\sigma_1$.

\begin{algorithm}[t]
\caption{\AlgNameShort for Discrete Diffusion}\label{alg:discrete-diffusion-training}
    \begin{algorithmic}[1]
        \State \textbf{Input:} pre-trained model $p^{\theta_0}(\cdot|X_s)$, uncertainty function $\sigma:\mathcal{V}^L\to\mathbb{R}$, number of iterations $T$, number of WDCE repeats $R$
        \For{$t=0,1, \dots, T-1$}
        \State $\{x^i_1,w^\sigma\}_{i=1}^B\gets \texttt{Generate}(p^{\theta_0}, p^{\theta_t})$
        \State $\mathcal{B}\gets\{x^i_1,w^\sigma\}_{i=1}^B$\Comment{replay buffer}
        \For{step in $1, \dots, N_{\text{step}}$}
        \State $\{\tilde{x}^i_s,w^\sigma\}_{i=1}^{B\times R}\gets \texttt{ResampleWithMask}(\mathcal{B};R)$
        \State Compute $\mathcal{L}_{\text{WDCE}}$ from (\ref{eq:wdce}) with $\{\tilde{x}^i_s,w^\sigma\}_{i=1}^{B\times R}$
        \State Update $\theta_{t+1}$ with $\nabla_{\theta}\mathcal{L}_{\text{WDCE}}$
        \EndFor
        \EndFor
        \State \textbf{return} adapted model $p^{\theta_T}$
    \end{algorithmic}
\end{algorithm}

\label{sec:discrete_diffusion}
\newpage

\section{Experimental Details}
\subsection{Domain-agnostic evaluation metrics for real-world OOD generative modeling.}
\label{sec:evaluation_metrics_apx}
\looseness -1 Standard evaluation criteria for deep generative models, such as Frechet Inception Distance (FID)~\citep{heusel2017gans}, aim to assess whether the generative model well-approximates the data distribution $p_{\mathrm{data}}$. This is misaligned with OOD generative modeling: successful expansion should increase valid coverage beyond the initial generable region, and may therefore {\em increase} distributional distance from the pre-trained model. In low-dimensional illustrative experiments, valid coverage can be estimated directly by discretizing the design space, together with validity. In molecular and protein spaces, however, such direct coverage computation is infeasible. We therefore employ several domain-agnostic evaluation metrics for OOD generative modeling, namely: $(i)$ number of fixed-threshold clusters covered by valid samples, to measure model coverage; $(ii)$ Vendi diversity~\citep{friedman2022vendi}, to measure distributional diversity; $(iii)$ FID to quantify divergence from the pre-trained model distribution, which we wish to increase; and validity percentage, to assess whether model expansion is happening over valid regions. In particular, the number of clusters corresponds to the number of hyper-spheres with a fixed data-specific radius, forming a packing over a finite draw of samples of fixed size across methods. This allows to approximately describe the volume covered by the union of hyper-spheres centered at the generated data points, and is an efficient-to-compute proxy metric to assess coverage of the model generable set.

\subsection{Illustrative 2D Experiments}
\paragraph{Overview.}
We evaluate \AlgNameShort in a two-dimensional illustrative design space. The base model is a continuous flow model over $\mathbb{R}^2$. The initial model is deliberately misspecified by centering the pretraining data at $(-1.1,0)$, using $512$ Gaussian samples with standard deviation $0.1$. The valid region is a $3\times 3$ checkerboard over $[-3.5,3.5]^2$: a point is valid if it lies in-bounds and falls in a checkerboard cell with even parity. The base model is pretrained for $2500$ steps with Adam, learning rate $10^{-3}$, and batch size $256$. Results are acquired over $20$ seeds and $95\%$ CIs are shown.

\paragraph{Algorithm configuration.}
We run \AlgNameShort for $500$ iterations and $20$ random seeds. At each iteration, \AlgNameShort self-generates $64$ samples. Fine-tuning uses batch size $256$ and $250$ gradient steps per iteration. Evaluation is performed every $50$ iterations using $3000$ samples for evaluation curves. We use $\beta=1/13$ and GP with RBF kernel with lengthscale $0.08$, and negative-gradient scale $\alpha_t=0.005$.

\paragraph{Uncertainty estimation.}
We use a Gaussian process with RBF kernel lengthscale set to $0.08$ and employ flow representation timestep $s=0.9$

\paragraph{Validity estimation (\ie verifier).}
The verifier is the deterministic checkerboard validity function. The domain $[-3.5,3.5]^2$ is partitioned into $3\times 3$ equal cells. A sample is valid if it lies inside the domain and its cell index $(i,j)$ satisfies $(i+j)\bmod 2=0$.

\paragraph{Coverage metric.}
Coverage is measured as generable coverage over the valid region. We draw $3{,}000$ samples from the current model and estimate its generable set on a $100\times 100$ histogram over $[-3.5,3.5]^2$. A bin is considered generable if its estimated density is at least $\tau=0.01$.

\paragraph{Ablations: no negative gradient in flow matching loss.} We report in Fig. \ref{fig:experiments_fig_toy_apx} visual results for the illustrative experiments, run with same parameters as in \ref{fig:experiments_fig_2}, except for $\alpha_t$, which is now set to $0.0$. As one can notice from Fig. \ref{fig:experiments_fig_toy_apx}, the change in this parameters leads to seemingly minimal changes in the results according to the metrics assessed.

\begingroup
  \captionsetup[subfigure]{aboveskip=1.7pt, belowskip=0pt}
% \subsection{Experiments on illustrative settings}
\setlength{\imgw}{0.21\textwidth}
\begin{figure*}[t]
    \centering
    % row 1
    \begin{subfigure}{\imgw}
      \centering
      \includegraphics[width=\textwidth]{images/visual_experiments/toy_gen_set_pre.png}
      \caption{Pre-trained samples}
      \label{fig:toy_top_a_apx}
    \end{subfigure}\hfill
    \begin{subfigure}{\imgw}
      \centering
      \includegraphics[width=\textwidth]{images/visual_experiments/toy_gen_set_recursive.png}
      \caption{\AlgRecNF samples}
      \label{fig:toy_top_b_apx}
    \end{subfigure}\hfill
    \begin{subfigure}{\imgw}
      \centering
      \includegraphics[width=\textwidth]{images/visual_experiments/toy_gen_set_recursive_filter.png}
      \caption{\AlgRecF samples}
      \label{fig:toy_top_c_apx}
    \end{subfigure}\hfill
    \begin{subfigure}{\imgw}
      \centering
      \includegraphics[width=\textwidth]{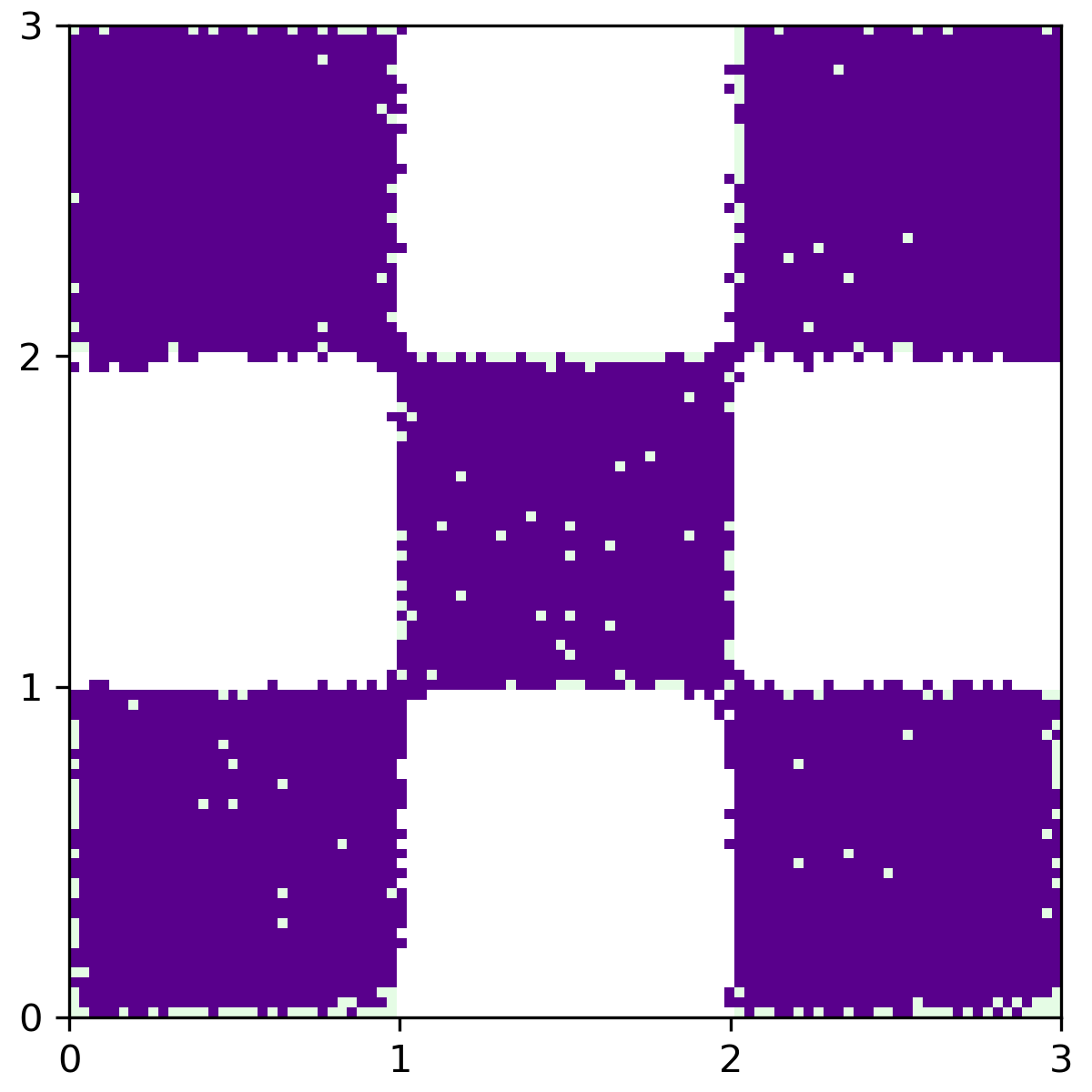}
      \caption{\AlgNameShort (ours)}
      \label{fig:toy_top_d_apx}
    \end{subfigure}\hfill
    \\[0.4em]\vspace{-1mm}
    % row 2 (repeat)
    \setlength{\imgw}{0.33\textwidth}
    \begin{subfigure}{\imgw}
      \centering
      \includegraphics[width=\textwidth]{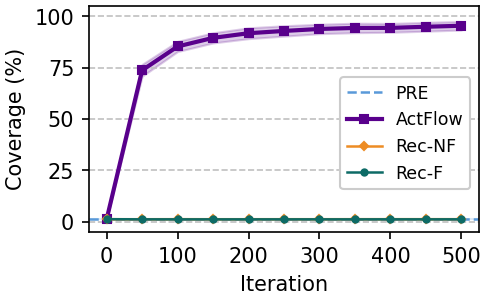}
      \caption{Coverage}
      \label{fig:toy_bottom_a_apx}
    \end{subfigure}%
    \begin{subfigure}{\imgw}
      \centering
      \includegraphics[width=\textwidth]{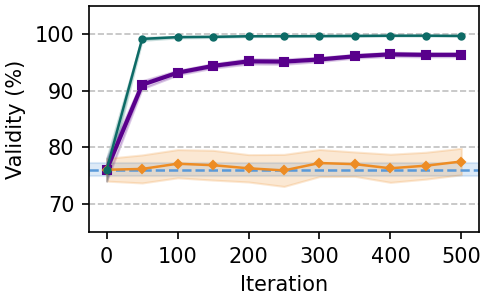}
      \caption{Validity}
      \label{fig:toy_bottom_b_apx}
    \end{subfigure}%
    \begin{subfigure}{\imgw}
      \centering
      \includegraphics[width=\textwidth]{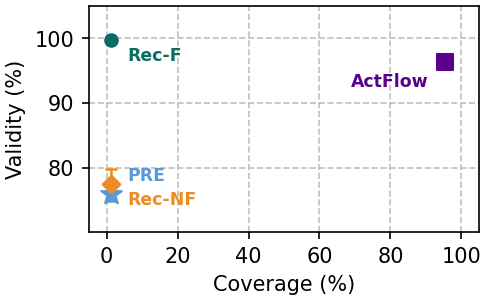}
      \caption{Coverage-Validity}
      \label{fig:toy_bottom_c_apx}
    \end{subfigure} \vspace{-1mm}
    \caption{\looseness-1 Results for illustrative experiments with same parameters as in \ref{fig:experiments_fig_2}, except for $\alpha_t=0$.} 
    \label{fig:experiments_fig_toy_apx} \vspace{-4mm}
\end{figure*}
\endgroup

\paragraph{Baselines.}
Both baselines use the same number of iterations, samples per iteration, fine-tuning steps, fine-tuning batch size, evaluation schedule, initial invalid model setting, and seeds as \AlgNameShort.

\paragraph{Hardware and Compute.}
Each 2D experiment job requests one RTX 2080 GPU, $16$GB memory per CPU, and a four-hour wall-clock limit.

\subsection{Molecular Design: QM9 Experiments}
\paragraph{Overview.} We apply our method to small-molecule generation using
FlowMol Gaussian~\citep{dunn2024mixed} pre-trained
on QM9~\citep{ramakrishnan2014quantum}. Results are acquired over $5$ seeds and $95\%$ CIs are shown.

\paragraph{Algorithm configuration.} We run \AlgNameShort for 
$1{,}066$ iterations, with $66$ initial warm-up iterations during which the model is not fine-tuned. Each iteration consists of $64$ samples, followed by $500$ fine-tuning gradient steps. Fine-tuning uses AdamW with learning rate $10^{-4}$, and batch
size $64$. Fine-tuning is deferred until $4096$ valid samples have
been collected, after which the accumulated
buffer is used jointly with new guided samples at each iteration. We employ $\beta = 1/10 = 0.1$, and $s = 0.9$.

\paragraph{Uncertainty estimation.} We use a deep bootstrapped ensemble of $5$ MLPs, each with two
hidden layers of $100$ units, ReLU activations, and $10\%$ dropout. Each ensemble member
is trained independently on a $90\%$ bootstrap subsample of the accumulated
feature label pairs, using Adam with learning rate $10^{-3}$, for up to $1000$ steps. The ensemble standard deviation across members is
used as the uncertainty signal.

\paragraph{Validity estimation (\ie verifier).} A generated molecule is deemed valid if
its RDKit-sanitised representation passes valence and bond-order checks  and consists of a single connected fragment. Sanitisation is preceded by an MMFF geometry relaxation~\citep{halgren1996merck}.

\paragraph{Coverage metric.} Coverage is measured as the number of distinct molecular
clusters obtained, computed via greedy sphere exclusion on Morgan fingerprints (radius $2$, $2048$ bits) using
Tanimoto similarity with threshold $\tau = 0.85$: a candidate is added as a new cluster
centre if its Tanimoto similarity to all previously selected centres is below $\tau$.
This is applied independently to the $500$ valid molecules in each evaluation batch.

\paragraph{Diversity metric.} Vendi score~\citep{friedman2022vendi} is computed on the
same $500$ valid molecules per evaluation. The kernel matrix $K$ is the pairwise Tanimoto
similarity over $2048$-bit Morgan fingerprints (radius $2$).

\paragraph{Baselines.} Both baselines use the same number of iterations, samples per iteration, fine-tuning steps, fine-tuning batch size, evaluation schedule, initial invalid model setting, and seeds as \AlgNameShort.

\paragraph{FID} We report in Fig. \ref{fig:experiments_FID_qm9geom_apx} FID results over iterates for QM9.

\paragraph{Hardware and Compute.} Each run used a single NVIDIA RTX 4090 GPU with
$4$ CPU cores and $32$\,GB of system memory, allocated for up to $120$\,h.

\subsection{Molecular Design: GEOM-Drugs Experiments}
\paragraph{Overview.} We apply our method to small-molecule generation using
FlowMol Gaussian~\citep{dunn2024mixed} pre-trained on
GEOM-Drugs~\citep{axelrod2022geom}, a dataset of ${\sim}300$\,k drug-like organic
molecules with energy-annotated conformers. Results are acquired over $3$ seeds
and $95\%$ CIs are shown.

\paragraph{Algorithm configuration.} We run \AlgNameShort for
$1{,}500$ iterations, with $190$ initial warm-up iterations during which the
model is not fine-tuned. Each iteration consists of $64$ samples, followed by
$2000$ fine-tuning gradient steps. Fine-tuning uses AdamW with learning rate
$10^{-4}$, and effective batch size $64$. Fine-tuning is deferred until $4096$ valid samples have been
collected. We employ $\beta = 1/7 \approx 0.14$, and $s = 0.8$.

\paragraph{Uncertainty estimation.} We use a deep bootstrapped ensemble of $5$
MLPs, each with two hidden layers of $100$ units, ReLU activations, and $10\%$
dropout. Each ensemble member is trained independently on a $90\%$ bootstrap
subsample of the accumulated feature--label pairs, using Adam with learning rate
$10^{-3}$, for up to $1000$ steps. The ensemble standard deviation across
members is used as the uncertainty signal.

\paragraph{Validity estimation (\ie verifier).} A generated molecule is deemed
valid if its RDKit-sanitised representation passes valence and bond-order checks
and consists of a single connected fragment. Sanitisation is preceded by an MMFF
geometry relaxation~\citep{halgren1996merck}.

\paragraph{Coverage metric.} Coverage is measured as the number of distinct
molecular clusters obtained, computed via greedy sphere exclusion on Morgan
fingerprints (radius $2$, $2048$ bits) using Tanimoto similarity with threshold
$\tau = 0.85$: a candidate is added as a new cluster centre if its Tanimoto
similarity to all previously selected centres is below $\tau$. This is applied
independently to the $500$ valid molecules in each evaluation batch.

\paragraph{Diversity metric.} Vendi score~\citep{friedman2022vendi} is computed
on the same $500$ valid molecules per evaluation. The kernel matrix $K$ is the
pairwise Tanimoto similarity over $2048$-bit Morgan fingerprints (radius $2$).

\paragraph{Baselines.} Both baselines use the same number of iterations, samples
per iteration, fine-tuning steps, fine-tuning batch size, evaluation schedule,
initial invalid model setting, and seeds as \AlgNameShort.

\begingroup
  \captionsetup[subfigure]{aboveskip=1.7pt, belowskip=0pt}
\setlength{\imgw}{0.25\textwidth}
\begin{figure*}[t]
    \centering
    \begin{subfigure}{\imgw}
      \centering
      \includegraphics[width=\textwidth]{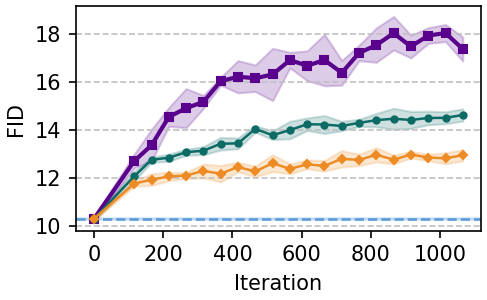}
      \caption{QM9 FID}
      \label{fig:qm9_apx_fid}
    \end{subfigure}%
    \begin{subfigure}{\imgw}
      \centering
      \includegraphics[width=\textwidth]{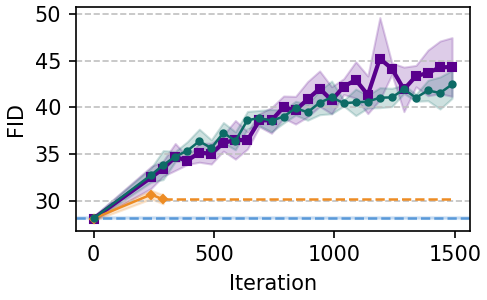}
      \caption{GEOM-Drugs FID}
      \label{fig:geom_apx_fid}
    \end{subfigure} 
    \caption{\looseness-1 FID results for QM9 and GEOM-Drugs}
    \label{fig:experiments_FID_qm9geom_apx} \vspace{-5mm}
\end{figure*}
\endgroup

\paragraph{FID} We report in Fig. \ref{fig:experiments_FID_qm9geom_apx} FID results over iterates for GEOM-Drugs.

\paragraph{Hardware and Compute.} Each run used a single NVIDIA RTX 4090 GPU
with $4$ CPU cores and $128$\,GB of system memory, allocated for up to
$120$\,h.

\subsection{Therapeutic Peptide Design Experiments}
\paragraph{Overview.}
We apply \AlgNameShort for therapeutic peptide design by adapting the pre-trained discrete diffusion peptide SMILES generator from PepTune \citep{tang2025peptune} trained on 11 million peptide SMILES. The Simplified Molecular-Input Line-Entry System (SMILES) \citep{Weininger1988} representation enables the generation of non-natural amino acids containing diverse chemical backbone and side-chain modifications, and cyclic modifications, significantly expanding the design space of therapeutic peptides over the standard 20 natural amino acids. The tokenization uses the SMILES Pair Encoding (SPE) tokenization scheme \citep{li2021smiles} from PeptideCLM \citep{Feller2024.08.09.607221} with vocabulary size $586$, including $5$ special tokens. Results are acquired over $5$ seeds and $95\%$ CIs are shown.

\paragraph{Algorithm configuration.}
We run \AlgNameShort for $5$ rounds following Alg \ref{alg:discrete-diffusion-training}, with an initial pool of $1000$ peptide sequences generated from the pretrained model and $100$ iterations per round. At each round, we draw $100$ sequences from the pool, score them with the verifier and Gaussian process uncertainty model, update the policy with the WDCE loss defined in Apx \ref{sec:discrete_diffusion} ($16$ replicates per sequence), and refresh the training pool by sampling from the updated policy. We train with Adam at a learning rate of $1 \!\times\!10^{-4}$ and batch size $100$. The reward is scaled by a parameter $\beta =0.005$.

\paragraph{Uncertainty estimation.}
Sequence-level uncertainty is obtained from a Gaussian process (GP) fit on top of the diffusion model's RoFormer encoder. For each peptide, we extract attention-pooled, $L_2$-normalised hidden states from the final transformer layer (768-dimensions) and treat the GP conditional posterior variance given the rest of the batch and the GP training set as the uncertainty signal. The GP uses an RBF kernel whose length scale is initialised from the mean pairwise embedding distance of $50$ samples from the pretrained model. The GP's posterior is refit at each round (100 iterations) on all sampled sequences. Only sequences that pass the verifier enter the GP buffer.

\paragraph{Validity estimation (\ie verifier).}
Generated SMILES are first parsed with RDKit. Those that yield a valid Mol object are then decoded with the \texttt{SMILES2PEPTIDE} verifier \citep{tang2025peptune}, which converts the SMILES into a sequence of natural and non-natural amino acids split on their peptide bonds.

\paragraph{Coverage metric.}
Coverage is measured as the number of distinct neighborhoods occupied by the $N{=}1000$ valid generated peptides in PeptideCLM \citep{Feller2024.08.09.607221} embedding space. Each peptide is embedded by mean-pooling the final-layer hidden states of the pretrained PeptideCLM RoFormer over non-padding tokens; the resulting vectors are $L_2$-normalised so that pairwise dissimilarity reduces to the cosine distance $d(x, x') = 1 - \langle x, x' \rangle$. We then apply sphere-exclusion clustering. Iterating through the sequences in arbitrary order, each peptide becomes either a new cluster center if its cosine distance to every existing center is at least $\tau$, or is otherwise assigned to the nearest center. We report the number of clusters at $\tau = 0.10$. Reported metrics are computed after 1200 iterations (12 rounds).

\paragraph{Diversity metric.}
We report the Vendi score \citep{friedman2022vendi} on the $L_2$-normalised PeptideCLM embeddings under an RBF kernel $K(x, x') = \exp\left(-\| x - x' \|^2 / (2 \sigma^2)\right)$ with length-scale $\sigma = 0.5$. Reported metrics are computed after 1200 iterations (12 rounds).

\paragraph{Fréchet Inception Distance.}
To quantify how far each fine-tuned generator drifts from the pretrained distribution of peptides, we report the Fréchet Inception Distance (FID) \citep{heusel2017gans} between two distributions: the $1000$ valid peptides generated by the model under evaluation and a matched reference set of $1000$ valid peptides sampled from the frozen pretrained checkpoint. The FID score is lower when closer to the pretrained distribution. We compute the score using $2048$-bit Morgan fingerprints (radius $2$) to evaluate drift in the chemical fingerprint space. Reported metrics are computed after 1200 iterations (12 rounds).

\paragraph{Baselines.}
We compare against three baselines: (1) the pretrained model \citep{tang2025peptune}, (2) \AlgRecNF where the policy is updated on its own samples with no uncertainty tilting (uniform weights on WDCE loss), (3) \AlgRecF where the policy is updated on its own samples filtered to retain only valid peptides classified by the verifier. We hold hyperparameters fixed across all baselines. 

\paragraph{Hardware and Compute.}
Each mode and seed run is trained on a single NVIDIA B200 GPU with $8$ CPU cores and $80$ GB of system RAM. A full $100$-iteration run fits within a $48$-hour wallclock budget per GPU.

\subsection{Protein Sequence Design Experiments}

\paragraph{Overview.} We apply our method to protein sequence design using a continuous ESM diffusion model from SGPO~\citep{yang2025steeringgenerativemodelsexperimental}. The base model is a continuous-space denoising network operating over ESM token-probability vectors of dimension equal to the vocabulary size ($31$ tokens, including the $20$ standard amino acids and special tokens), pre-trained on the CreiLOV fluorescence dataset~\citep{chen2023deep}. CreiLOV is a $119$-residue fluorescent protein; the dataset contains experimentally measured fluorescence fitness values for sequence variants. The diffusion process uses a cosine noise schedule, where $\alpha_t = \cos\!\left(\frac{(1-t)\pi}{2}\right)$ and $\beta_t = \sqrt{1 - \alpha_t^2}$, so that $t=1$ corresponds to data and $t=0$ to pure noise.

\paragraph{Algorithm configuration.} We run \AlgNameShort for $512$ iterations, each consisting of $64$ samples, followed by $1000$ fine-tuning gradient steps on the accumulated valid samples. Fine-tuning uses AdamW with learning rate $10^{-4}$, batch size $64$, and no weight decay. Fine-tuning is deferred until $4096$ valid samples have been collected (warm-up period), after which the accumulated buffer is used jointly with new guided samples at each iteration. We use $\beta = 1/50$. Features for the uncertainty estimator are extracted from the encoder of the ESM network at flow representation timestep $s = 0.8$, mean-pooled over the sequence dimension.

\paragraph{Uncertainty estimation.} We use a deep ensemble of $5$ MLPs, each with two hidden layers of $100$ units, ReLU activations, and $10\%$ dropout. Each ensemble member is trained independently on a $90\%$ bootstrap subsample of the accumulated feature-label pairs, using Adam with learning rate $10^{-3}$, for up to $1000$ steps. The ensemble standard deviation across members is used as the uncertainty signal.

\paragraph{Validity estimation (\ie verifier).} A generated sequence is deemed valid if its mean predicted local distance difference test (pLDDT), computed by ESMFold~\citep{lin2022language}, exceeds a threshold of $65$. ESMFold is run in batches of $32$ sequences.

\paragraph{Coverage metric.} Coverage is measured as the number of distinct sequence clusters accumulated across all $512$ iterations, computed via greedy sphere exclusion on sequence identity with threshold $\tau = 0.35$: a candidate is added as a new cluster center if its sequence identity to all previously selected center is below $\tau$.

\paragraph{Diversity metric.} Vendi score~\citep{friedman2022vendi} is computed on token-level ESM embeddings. For each generated sequence, we compute its embedding as the probability vector over the vocabulary projected through the ESM token embedding table ($L_2$-normalised and scaled by $\sqrt{d_\text{model}}$), then mean-pooled over the sequence length. Vendi score is computed using an RBF kernel with lengthscale $\ell = 2.0$ applied to these mean-pooled embeddings.

\paragraph{Ablation: feature timestep.} We ablated the fine-tuning flow representation time-steps $t \in \{0.5, 0.8, 0.9, 0.95\}$ in an alternative configuration with $1024$ valid samples are initially queried. Among the feature timestep variants, $t = 0.95$ performed worst in terms of coverage ($12$ clusters at $\lambda = 50$, Vendi $= 18.6$), showing that extracting features close to the data level might be significantly sub-optimal.

\paragraph{Baselines.} \AlgRecF runs the same fine-tuning loop without uncertainty-guided sampling, training only on valid samples (pLDDT $> 65$). \AlgRecNF additionally disables the validity filter during fine-tuning, training on all generated samples regardless of pLDDT. Both baselines use identical hyperparameters (fine-tuning steps, learning rate, batch size, warm-up threshold) to \AlgNameShort, differing only in whether uncertainty guidance and validity filtering are applied.

\paragraph{Hardware and Compute.} Each run used a single RTX 4090 GPU for 24 hours and 256 GB of memory.

\begingroup
  \captionsetup[subfigure]{aboveskip=1.7pt, belowskip=0pt}
\setlength{\imgw}{0.25\textwidth}
\begin{figure*}[ht]
    \centering
    \begin{subfigure}{\imgw}
      \centering
      \includegraphics[width=\textwidth]{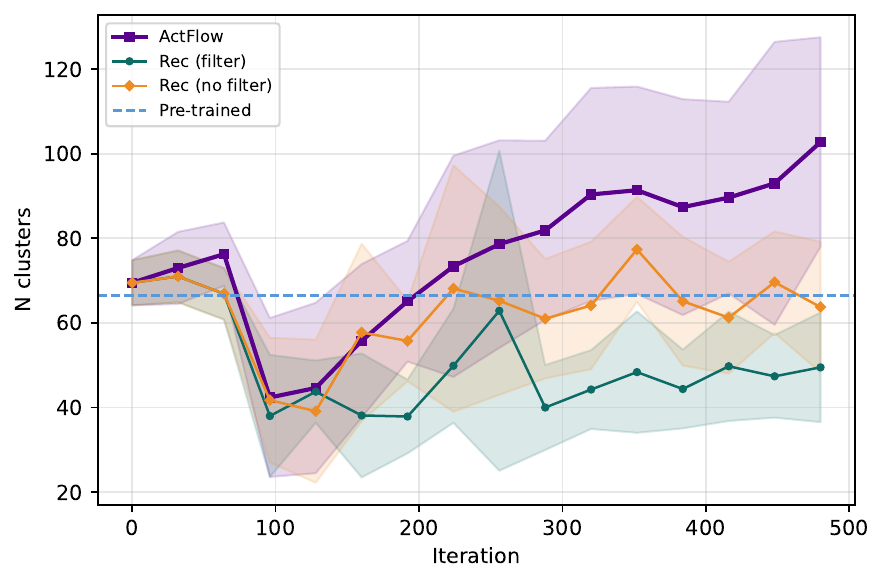}
      \caption{CreiLOV Coverage}
      \label{fig:proteins_a}
    \end{subfigure}%
    \begin{subfigure}{\imgw}
      \centering
      \includegraphics[width=\textwidth]{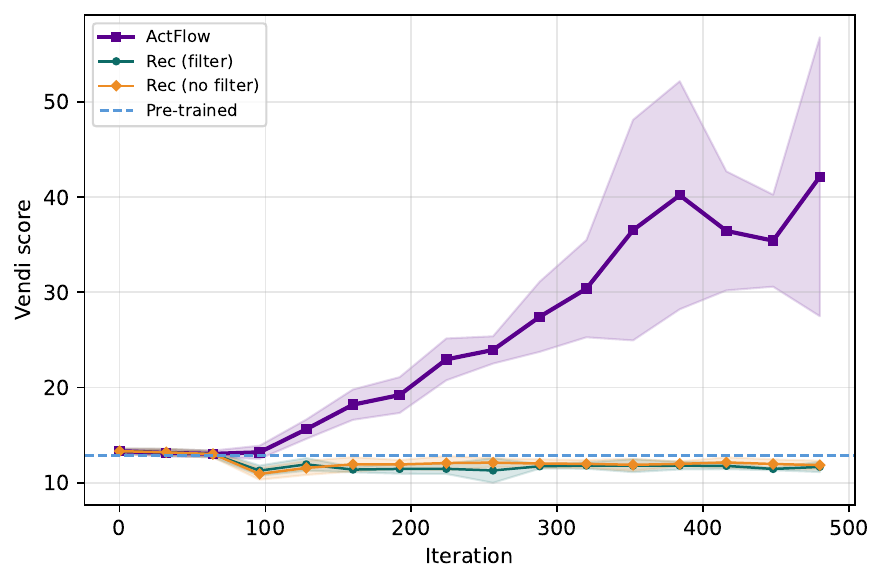}
      \caption{CreiLOV Diversity}
      \label{fig:proteins_b}
    \end{subfigure}%
    \begin{subfigure}{\imgw}
      \centering
      \includegraphics[width=\textwidth]{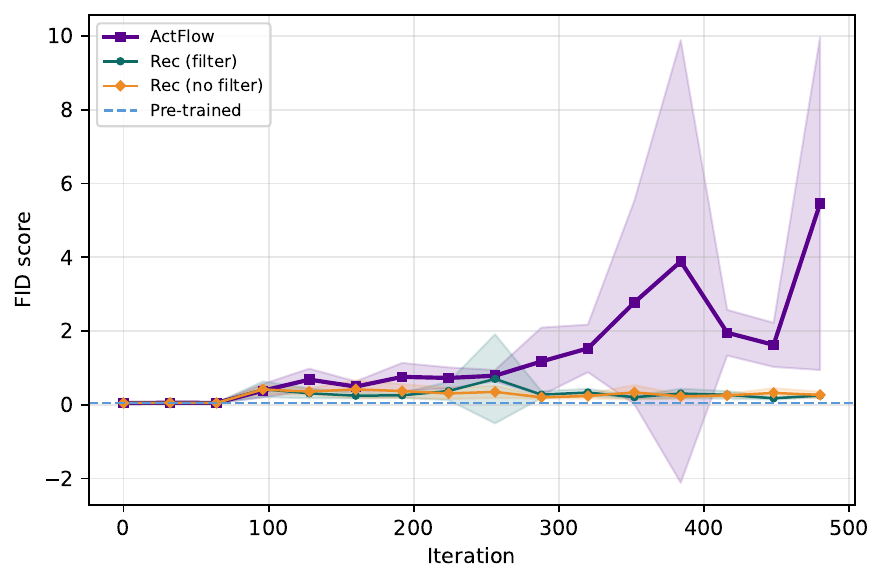}
      \caption{CreiLOV FID}
      \label{fig:proteins_c}
    \end{subfigure}%
    \begin{subfigure}{\imgw}
      \centering
      \includegraphics[width=\textwidth]{images/proteins/proteins_points.pdf}
      \caption{Coverage-Validity}
      \label{fig:proteins_d}
    \end{subfigure} \vspace{-2.5mm}
    \caption{\looseness-1
    Results of protein sequence design experiments over iterations (Figs \ref{fig:proteins_a} - \ref{fig:proteins_c}), and diversity-validity tradeoff at final iteration (Fig \ref{fig:proteins_d}). \AlgNameShort significantly outperforms \AlgRecNF and \AlgRecF in all diversity metrics (FID, Vendi, number of clusters), while maintaining a competitive validity.
    }
    \label{fig:experiments_block_proteins_apx} \vspace{-3mm}
\end{figure*}
\endgroup
\label{sec:experimental_details_appendix}
\newpage

\end{document}